\documentclass{article}
\usepackage[a4paper, total={6in, 8.3in}]{geometry}

\newcommand{\RNum}[1]{\uppercase\expandafter{\romannumeral #1\relax}}

\usepackage{makecell}
\usepackage{multirow}

\usepackage{mathtools}
\usepackage{amssymb}
\usepackage{amsthm}
\usepackage{amsmath}
\usepackage{soul}
\usepackage{xcolor}

\usepackage{hyperref}

\usepackage{algpseudocode}
\usepackage[ruled]{algorithm2e}
\SetKwRepeat{Do}{do}{while}

\algnewcommand{\comment}[1]{\Comment{{#1}}}

\theoremstyle{remark}

\newtheorem{theorem} {Theorem}
\newtheorem{lemma} {Lemma}
\newtheorem{definition} {Definition}

\newtheorem{observation} {Observation}
\newtheorem{assumption} {Assumption}

\DeclareMathOperator*{\argmin}{argmin}



\def\vz{{\textbf{0}}}
\def\g{{\mathbf{g}}}

\def\x{{\mathbf{x}}}

\def\u{{\mathbf{u}}}
\def\v{{\mathbf{v}}}
\def\z{{\mathbf{z}}}
\def\w{{\mathbf{w}}}
\def\y{{\mathbf{y}}}

\def\X{{\mathbf{X}}}

\newcommand{\mT}{\mathcal{T}}

\newcommand{\R}{\mathcal{R}}
\newcommand{\mP}{\mathcal{P}}

\newcommand{\mK}{\mathcal{K}}
\newcommand{\mS}{\mathcal{S}}
\newcommand{\ball}{\mathcal{B}}

\newcommand{\E}{\mathbb{E}}

\newcommand{\dist}{\textrm{dist}}

\newcommand{\reals}{\mathbb{R}}

\title{New Projection-free Algorithms for Online Convex Optimization with Adaptive Regret Guarantees\thanks{This version subsumes the version published in the Conference on Learning Theory (COLT) 2022 and fixes an error in the proof of Theorem 10 (convergence for strongly convex losses) in the COLT version. The new regret bound is worse by a logarithmic factor.}}
\author{
  Dan Garber\\
  {\small Technion - Israel Institute of Technology}\\
  {\small \texttt{dangar@technion.ac.il}}
  \and
  Ben Kretzu\\
  {\small Technion - Israel Institute of Technology}\\
  {\small \texttt{benkretzu@campus.technion.ac.il}}
}
\date{}
\begin{document}

\maketitle

\begin{abstract}%
    We present new efficient \textit{projection-free} algorithms for online convex optimization (OCO), where by projection-free we refer to algorithms that avoid computing orthogonal projections onto the feasible set,  and instead relay on different and potentially much more efficient oracles. While most state-of-the-art projection-free algorithms are based on the \textit{follow-the-leader} framework, our algorithms are fundamentally different and are based on the \textit{online gradient descent} algorithm with a novel and efficient approach to computing  so-called \textit{infeasible projections}.  As a consequence, we obtain the first projection-free algorithms which naturally yield \textit{adaptive regret} guarantees, i.e., regret bounds that hold w.r.t. any sub-interval of the sequence. 
    Concretely, when assuming the availability of a linear optimization oracle (LOO) for the feasible set, on a sequence of length $T$, our algorithms guarantee $O(T^{3/4})$ adaptive regret and  $O(T^{3/4})$ adaptive expected regret, for the full-information and bandit settings, respectively, using only $O(T)$ calls to the LOO. These bounds match the current state-of-the-art regret bounds for LOO-based projection-free OCO, which are \textit{not adaptive}. 
We also consider a new natural setting in which the feasible set is accessible through a separation oracle.     
    We present algorithms which, using overall $O(T)$ calls to the separation oracle, guarantee  $O(\sqrt{T})$ adaptive regret and $O(T^{3/4})$ adaptive expected regret for the full-information and bandit settings, respectively. 
\end{abstract}

\section{Introduction} \label{sec:intro}
In this paper we consider the problem of Online Convex Optimization (OCO) \cite{HazanBook, Shalev12} with a particular focus on so-called \textit{projection-free} algorithms. Such algorithms are motivated by high-dimensional problems in which the feasible decision set admits a non-trivial structure and thus, computing orthogonal projections onto it, as required by standard methods, is often computationally prohibitive. Instead, projection-free methods access the decision set through a conceptually simpler oracle which in many cases of interest admits a much more efficient implementation than that of an orthogonal projection oracle. Indeed, for this reason such algorithms have drawn significant interest in recent years, see for instance \cite{Hazan12, garber2013playing, chen2019projection, garber2020improved, kretzu2021revisiting, hazan2020faster, Levy19, wan2021projection, ene2021projection, pmlr-v80-chen18c, zhang2017projection}.

Let us introduce some formalism before moving on. Throughout the paper we assume without loosing much generality that the underlying vector space is $\reals^n$. We recall that in OCO, a decision maker (DM) is required throughout $T$ iterations (we will assume throughout that $T$ is known in advanced for ease of presentation), to pick on each iteration $t\in[T]$, a decision in the form of a point $\x_t$ from some  fixed convex and compact decision set $\mK\subset\reals^n$. After choosing $\x_t\in\mK$, the DM incurs a loss given by $f_t(\x_t)$, where $f_t:\reals^n\rightarrow\reals$ is convex\footnote{In fact, it suffices that $f_t$ is convex on a certain Euclidean ball containing  the set $\mK$.}. We will make the standard distinction between the \textit{full-information} setting, in which after incurring the loss, the DM gets to observe the loss function $f_t(\cdot)$, and the \textit{bandit} setting, in which the DM only learns the value $f_t(\x_t)$. In the full-information setting we shall assume that the sequence of losses $f_1,\dots,f_T$ is arbitrary, and may even depend on the plays of the DM, while in the bandit setting we shall make a standard simplifying assumption that $f_1,\dots,f_T$ are chosen in \textit{oblivious} fashion, i.e., before the DM has made his first step (and thus are in particular independent of any randomness introduced by the DM). We recall that the standard measure of performance in OCO, which is also the objective that the DM usually strives to minimize, is the \textit{regret} (or its expectation in the bandit setting) which, given the entire history $\{\x_t,f_t\}_{t=1}^T$, is given by
\begin{align}\label{eq:regret}
\textrm{Regret} = \sum_{t=1}^Tf_t(\x_t) - \min_{\x\in\mK}\sum_{t=1}^Tf_t(\x).
\end{align}
Most projection-free OCO algorithms are based on a combination of the \textit{Follow-The-Leader} (FTL) meta-algorithm, and in particular its deterministically regularized variant known as \textit{Regularized-Follow-The-Leader} (RFTL) \cite{HazanBook}, and the use of a linear optimization oracle (LOO) to access the feasible set, e.g., \cite{Hazan12, chen2019projection, garber2020improved}. We shall refer to these as RFTL-LOO algorithms. 
Indeed, for many feasible sets of interest and in high-dimensional settings, implementing the LOO can be much more efficient than implementing an orthogonal projection oracle, see many examples in \cite{Jaggi13, Hazan12}. For arbitrary (convex and compact) feasible set and nonsmooth convex losses, the current best regret bound for  both the full-information and bandit settings obtainable by these RFTL-LOO algorithms is $O(T^{3/4})$, using overall $O(T)$ calls to the LOO, due to \cite{Hazan12} and \cite{garber2020improved}. 

However, the RFTL approach for constructing online algorithms has well known inherent limitations. While the regret, as given in \eqref{eq:regret}, can in principle be negative --- due to the ability of the online algorithm to change decisions from iteration to iteration while the benchmark's decision is fixed, it is known that RFTL-type algorithms   \textit{always} suffer non-negative regret \cite{gofer2016lower}. As a consequence, such algorithms are also inherently \textit{non-adaptive} in a sense that we now detail. It is often the case that there is no fixed decision in hindsight that has reasonable performance w.r.t. the entire data (i.e., the sequence of loss functions) and thus, the standard regret measure becomes insufficient. In such cases, \textit{adaptive} performance measures which, on different parts of the data, allow to be competitive against different actions, are much more preferable. Such standard adaptive performance measure introduced in \cite{hazan2009efficient} is called \textit{adaptive regret} and is given by
\begin{align*}
\textrm{Adaptive Regret} = \sup_{[s,e]\subseteq[T]}\left\{{\sum_{t=s}^ef_t(\x_t) - \min_{\x\in\mK}\sum_{t=s}^ef_t(\x)}\right\},
\end{align*}
is the supremum over all standard regrets w.r.t. all sub-intervals of the sequence of loss functions. We refer the interested reader to \cite{hazan2009efficient, daniely2015strongly} for many useful discussions on the adaptive regret and its connection to other notions of adaptivity in the literature. 

Unfortunately, due to their inherent non-negative regret property, RFTL-based algorithms cannot guarantee non-trivial adaptive regret bounds.
Thus, it is natural to ask: 
\begin{center}
\textit{Is it possible to design efficient projection-free algorithms for OCO with non-trivial adaptive regret bounds?}
\end{center}
One attempt towards this goal could be to instantiate the  \textit{strongly adaptive online learner} of \cite{daniely2015strongly} with the non-adaptive state-of-the-art RFTL based algorithm for the full-information setting of \cite{Hazan12}, known as \textit{Online Frank-Wolfe} (OFW), which will result in an adaptive algorithm with $O(T^{3/4})$ adaptive regret.\footnote{In fact, such an algorithm will have for any interval $I$, regret bounded by $O(|I|^{3/4} + |I|^{1/2}\log{}T)$ w.r.t. the interval.} However, this approach is somewhat artificial and will require to run in parallel $O(\log{}T)$ copies of OFW, which will require $\log{T}$-fold memory and calls to the LOO. Moreover, this approach is not applicable to the bandit setting.


Another possibility is to design new projection-free algorithms which are not based on the FTL approach, but instead on the Online Mirror Descent meta-algorithm, and in particular its Euclidean variant --- Online Gradient Descent (OGD) \cite{Zinkevich03}, which  naturally yields an $O(\sqrt{T})$ adaptive regret bound \cite{HazanBook}. While OGD requires to compute on each iteration an orthogonal projection onto the feasible set, a naive approach to making it projection-free using a LOO, is to only approximate the projection on each iteration via the well known Frank-Wolfe method for \textit{offline} constrained minimization of a smooth and convex function, which only uses the LOO \cite{Jaggi13, FrankWolfe}. However, as recently noted in \cite{garber2021efficient}, such an approach strikes an highly suboptimal tradeoff between regret and number of calls to the LOO. Instead, \cite{garber2021efficient} considered using OGD with so-called \textit{infeasible projections}, which on one hand can be computed efficiently with a LOO (at least in terms of the model in \cite{garber2021efficient} which is significantly different than ours), and on the other-hand could be translated into feasible points, without loosing too much in the regret. Our approach in this paper is  inspired by \cite{garber2021efficient}, however, our technique for computing such infeasible projections will be very different (in particular, the setting in \cite{garber2021efficient} is not concerned with the dimension and thus the Ellipsoid method is used, which is not suitable for our setting, due to its polynomial dependence on the dimension).

\paragraph*{Two projection-free oracles:} While our discussion so far has focused on the assumption that the feasible set is accessible through a linear optimization oracle, which is indeed the most popular assumption in the literature on projection-free methods, in this paper we introduce an additional  new natural projection-free setting in which the feasible decision set $\mK$ is given by separation oracle (SO). Given some $\x\in\mK$, the SO either verifies that $\x$ is feasible, in case it indeed holds that $\x\in\mK$ or, returns a hyperplane separating $\x$ from $\mK$, in case $\x\not\in\mK$. 
For instance, a setting in which the SO model arrises naturally is when the feasible set is given by a functional constraint of the form $\mK = \{\x\in\reals^n~|~g(\x) \leq 0\}$, where $g(\cdot)$ is convex. Implementing the SO in this setting simply amounts to calling the first-order oracle of $g(\cdot)$ (i.e., computing $g(\x)$ and some $\g_{\x}\in\partial{}g(\x)$, for a given input point $\x\in\reals^n$). In particular, when $g(\cdot)$ has the following max structure: $g(\x) := \max_{1\leq i \leq m}g_i(\x)$, where $m$ is not very large and $g_1,\dots,g_m$ are convex functions which admit simple structure, implementing the SO can be very efficient, while orthogonal projections can still be prohibitive. One such example is a polytope given by the intersection of $m$ halfspaces for moderately-large $m$. Importantly, the SO model allows to efficiently handle the intersection of several simple convex sets, each given by a SO. Note that in the LOO setting there is no simple approach to implement a LOO for a convex set given as the intersection of several sets, each given by a LOO.


The following example demonstrates the complementing nature of the LOO and SO oracles. Consider the following two, dual to each other, unit balls of matrices which are common in several applications: 
\begin{align*}
\ball_* = \{\X\in\reals^{m\times n} ~|~\Vert{\X}\Vert_* \leq 1\}, ~~ \ball_2 = \{\X\in\reals^{m\times n} ~|~\Vert{\X}\Vert_2 \leq 1\},
\end{align*}
where for a real matrix $\X$ we let $\Vert{\X}\Vert_*$ denote its nuclear/trace norm, i.e., the sum of singular values, and we let $\Vert{\X}\Vert_2$ denote its spectral norm, i.e., its largest singular value. Euclidean projection onto either $\ball_*$ or $\ball_2$ requires in general a full-rank singular value decomposition (SVD), which is computationally prohibitive when both $m,n$ are very large. Linear optimization over $\ball_*$ is quite efficient and only requires a rank-one SVD (leading singular vectors computations) however, linear optimization over $\ball_2$ requires again a full-rank SVD \cite{Jaggi13}. On the other-hand, denoting $g_*(\X) := \Vert{\X}\Vert_*-1, g_2(\X) :=\Vert{\X}\Vert_2-1$, we have that implementing the SO  for $\ball_*$, which requires to compute a subgradient of the nuclear norm, also requires in worst case a full-rank SVD. However, implementing the SO w.r.t. $\ball_2$, requires to compute a subgradient of the spectral norm, which is w.l.o.g. a rank-one matrix (corresponding to a top singular vectors pair of $\X$), and thus requires only a rank-one SVD which is far more efficient. Thus, while a LOO is efficient to implement for $\ball_*$, the SO is efficient to implement for $\ball_2$. 

\paragraph*{Contributions:} \hspace{-4mm} Our main contributions, stated only informally at this stage, and treating all quantities except for $T$ and the dimension $n$ as constants, are as follows (see also a summary in Table \ref{table:Op}).
\begin{enumerate}
\item
Assuming the feasible set is accessible through a LOO, we present an OGD-based algorithm for the full-information setting with adaptive regret of $O(T^{3/4})$ using overall $O(T)$ calls to the LOO. This improves over the previous state-of-the-art (RFTL-based) \textit{not-adaptive} regret bound of $O(T^{3/4})$ due to \cite{Hazan12}. We give a similar algorithm for the bandit setting which guarantees $O(\sqrt{n}T^{3/4})$ adaptive expected regret using $O(T)$ calls to the LOO in expectation, which improves upon the previous best bound of $O(\sqrt{n}T^{3/4})$ due to \cite{garber2020improved} which only applies to the standard regret.
\item
Assuming the feasible set is accessible through a LOO and all loss functions are strongly convex, we show that a projection-free OGD-based algorithm can recover the state-of-the-art $O(T^{2/3})$ (standard) regret bound, up to an additional logarithmic factor,  using $O(T)$ calls to the LOO, which matches (up to a log factor) the RFTL-based method due to \cite{kretzu2021revisiting}.

\item
Assuming the feasible set is accessible through a SO, we present an OGD-based algorithm for the full-information setting with adaptive regret of $O(\sqrt{T})$ using overall $O(T)$ calls to the SO. In the bandit setting, we give a similar algorithm with $O(T^{3/4})$ adaptive expected regret using overall $O(T)$ calls to the SO.
\end{enumerate}

We remark that aside of standard subgradient computations of the loss functions observed, and calls to either the LOO or SO, all of our algorithms require only $O(n)$ space, and $O(nT)$ additional runtime (over all $T$ iterations).

\begin{table}[!ht] \renewcommand{\arraystretch}{1.1}

\begin{center}
  \begin{tabular}{| c | c  | c | c  | c | c  |} \hline
      &  Theorem \ref{thm:LOO-BOGD} & Theorem \ref{thm:LOO-BBGD} & Theorem \ref{thm:SC-OGD-LOO} &  Theorem \ref{thm:OGD-SGO} & Theorem \ref{thm:BGD-SGO} \\ \hline
    Objective & \makecell{adaptive\\regret } & \makecell{adaptive\\ expected regret } & \makecell{regret }  & \makecell{adaptive\\regret }  & \makecell{adaptive\\ expected regret} \\ \hline
    Losses & convex &  convex  &  strongly convex & convex &  convex \\ \hline
    Feedback & full &  bandit  & full  & full  &  bandit  \\ \hline
    Oracle & LOO  & LOO & LOO  & SO & SO \\ \hline
    Regret &  $T^{3/4}$ & $T^{3/4}$ & $ T^{2/3}$ &  $\sqrt{T}$ & $T^{3/4}$ \\ \hline 
  \end{tabular} 
\caption{ Summary of results. For clarity, in the regret bounds we treat all quantities except for $T$ as constants and we omit logarithmic factors.} \label{table:Op}
\end{center}

\end{table}\renewcommand{\arraystretch}{1}

\begin{table}[!ht] \renewcommand{\arraystretch}{1.1}

\begin{center}
  \begin{tabular}{| c | c  | c | c | c  |} \hline
    Feedback  &  Objective  & Oracle & Reference &  Regret \\ \hline
    \multirow{4}{*}{ \makecell{Full \\ Information} }& adaptive regret & projection & \cite{Zinkevich03}  &  $\sqrt{T}$\\ \cline{2-5}
    & adaptive  regret& SO & This work  (Thm. \ref{thm:OGD-SGO}) &  $\sqrt{T}$\\  \cline{2-5}
    & regret & LOO & \cite{Hazan12}  &  $T^{3/4}$\\\cline{2-5}
    & adaptive regret & LOO & This work  (Thm. \ref{thm:LOO-BOGD}) &  $T^{3/4}$\\
     \hline
    \multirow{4}{*}{ \makecell{Bandit  } } & adaptive regret & projection & \cite{Flaxman05} &  $T^{3/4}$ \\ \cline{2-5}
    & adaptive  regret& SO  & This work (Thm. \ref{thm:BGD-SGO}) &   $ T^{3/4}$ \\  \cline{2-5}
     & regret  & LOO & \cite{garber2020improved} &  $T^{3/4}$ \\ \cline{2-5}
    & adaptive regret &  LOO & This work (Thm. \ref{thm:LOO-BBGD}) &   $ T^{3/4}$ \\
    \hline
  \end{tabular}
\caption{Comparison of results to previous works. This is a non-exhaustive list.  Here we only list the most relevant works which are suitable for arbitrary convex and compact sets and convex and nonsmooth losses, make overall $O(T)$ calls to the oracle of the set, and use $O(n)$ memory and $O(nT)$ additional runtime. For clarity, in the regret bounds we treat all quantities except for $T$ as constants.} \label{table:Comp}
\end{center}

\end{table}\renewcommand{\arraystretch}{1}

We acknowledge a parallel work \cite{mhammedi2021efficient}, in which the author proves that given a separation oracle, it is possible to guarantee a $O(\sqrt{T})$ regret bound for general Lipschitz convex losses, and the techniques could  be readily used to also give adaptive regret guarantees in the full information setting (but not in the bandit setting). However, the approach of \cite{mhammedi2021efficient}, which uses substantially different techniques than ours, requires overall $O(T\log{}T)$ calls to the separation oracle to guarantee $O(\sqrt{T})$ regret, while our result only requires $O(T)$ calls in order to achieve this regret bound.

\section{Preliminaries} \label{sec:preli}

\subsection{Additional notation, assumptions and definitions}
Throughout this work we assume without loss of generality that the feasible set $\mK$ contains the origin, i.e., $\mathbf{0}\in\mK$ and we denote by $R> 0$ a radius such that $\mK\subseteq R\ball$, where $\ball$ denotes the unit Euclidean ball centered at the origin. We also denote by $\mathcal{S}$ the unit sphere centered at the origin, and we write $\u \sim \ball$ and $\u \sim \mathcal{S}$ to denote a random vector $\u$ sampled uniformly from $\ball$ and $\mathcal{S}$, respectively.
We assume the loss functions are bounded by $M$ in $\ell_\infty$ norm and are $G_f$-Lipschitz over $R\ball$, that is, for all $t \in [T]$, $\x \in R\ball$ and $\g \in \partial f_t(\x)$, $|f_t(\x)| \leq M$ and $\Vert{\mathbf{g}}\Vert_2\leq G_f$.

In our results for the bandit feedback setting and when assuming the feasible set is accessible through a SO  we shall make the following additional standard assumption.
\begin{assumption} \label{ass:bandit}
    The feasible set fully contains the ball of radius $r$ around $\vz$, for some $r>0$, i.e., $r\ball \subseteq \mK$.
\end{assumption}
For every $\delta\in(0,1)$ we define the $\delta$-squeezed version of $\mK$ as $\mK_{\delta} = (1-\delta)\mK = \{(1-\delta)\x ~|~\x\in\mK\} $. Note that if  Assumption \ref{ass:bandit} holds, then for all $\x\in\mK_{\delta/r}$, it holds that $\x+\delta\ball\subseteq\mK$ (see \cite{HazanBook}).

\subsection{Basic algorithmic tools}



\subsubsection{The Frank-Wolfe algorithm with line search}

The Frank-Wolfe algorithm \cite{FrankWolfe, Jaggi13} is a well known first-order method for minimizing a smooth and convex function over a convex and compact set, accessible through a LOO. In this work we use the Frank-Wolfe with exact line-search variant, see Algorithm \ref{alg:FW-LS}.

\begin{algorithm}[!]
    \KwData{feasible set $\mK$, initial point $\x_0 \in \mK$, objective function $f(\cdot)$.}
    \For{ $i =0, \dots$}{
        $ \mathbf{v}_{i} \in \argmin_{\x \in \mK} \{ \nabla f(\x_i)^{\top} \x \} $ \tcc*{call to LOO of $\mK$}
	    $ \sigma_{i} = \argmin_{\sigma \in [0, 1]}  \{ f(  \x_{i} + \sigma (\mathbf{v}_i - \x_{i}) ) \}$ \\
	    $ \x_{i+1} = \x_i + \sigma_{i} (\mathbf{v}_i - \x_i) $
    }
    \caption{Frank-Wolfe with line-search}\label{alg:FW-LS}
\end{algorithm}

\begin{theorem}{[Primal convergence of FW \cite{Jaggi13}]} \label{thm:Primal_convergence}
    Let $f:\reals^n\rightarrow\reals$ be convex and $\beta$-smooth over a convex and compact set $\mK\subset\reals^n$ with Euclidean diameter $2R$, and denote $\x^* \in \argmin\nolimits_{\x \in \mK}f(\x)$. Algorithm \ref{alg:FW-LS} guarantees that $  \forall i\geq 1: f(\x_i) - f(\x^*) \leq 2\beta{}(2R)^2/(i+2)$.
\end{theorem}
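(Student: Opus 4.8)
The plan is to track the primal gap $h_i := f(\x_i) - f(\x^*)$, derive a single per-iteration recursion, and then close an induction yielding the $O(1/i)$ rate. The starting point is the descent lemma for $\beta$-smooth functions: for the candidate point $\y_\sigma := \x_i + \sigma(\v_i - \x_i)$ with any analysis step $\sigma \in [0,1]$, smoothness gives $f(\y_\sigma) \leq f(\x_i) + \sigma\nabla f(\x_i)^\top(\v_i - \x_i) + \tfrac{\beta\sigma^2}{2}\|\v_i - \x_i\|^2$. The crucial point is that the exact line search in Algorithm \ref{alg:FW-LS} guarantees $f(\x_{i+1}) \leq f(\y_\sigma)$ for every such $\sigma$, so I am free to plug in whatever analysis step size is most convenient later and never need to commit to a particular $\sigma$ in the algorithm itself.

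Next I would bound the two variable terms. The quadratic term is handled by the diameter assumption, $\|\v_i - \x_i\|^2 \leq (2R)^2$, since both points lie in $\mK$. For the linear term I would use the defining property of the LOO step, namely that $\v_i$ minimizes $\nabla f(\x_i)^\top\x$ over $\mK$, hence $\nabla f(\x_i)^\top \v_i \leq \nabla f(\x_i)^\top \x^*$; combining this with the first-order convexity inequality $f(\x^*) \geq f(\x_i) + \nabla f(\x_i)^\top(\x^* - \x_i)$ yields $\nabla f(\x_i)^\top(\v_i - \x_i) \leq f(\x^*) - f(\x_i) = -h_i$. Subtracting $f(\x^*)$ from both sides of the smoothness bound then gives the clean recursion $h_{i+1} \leq (1-\sigma)h_i + \tfrac{\beta\sigma^2}{2}(2R)^2$, valid for all $\sigma \in [0,1]$.

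Finally I would run an induction to show $h_i \leq 2\beta(2R)^2/(i+2)$; writing $C := \beta(2R)^2$, this is $h_i \leq 2C/(i+2)$. For the base case $i=1$, setting $\sigma = 1$ in the recursion at $i=0$ gives $h_1 \leq C/2 \leq 2C/3$. For the inductive step I would choose the analysis step $\sigma = 2/(i+2) \in [0,1]$, substitute the inductive hypothesis, and simplify; the resulting inequality reduces to the trivial $(i+1)(i+3) \leq (i+2)^2$. I expect the only points requiring care, rather than a genuine obstacle, to be the exact matching of the analysis step size $2/(i+2)$ to the target constant $2C/(i+2)$ and the verification of the base case, while the telescoping algebra is routine. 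Note that smoothness is precisely what makes the quadratic term controllable, whereas convexity together with the LOO optimality of $\v_i$ are what convert the linear term into $-h_i$, so both hypotheses are used in an essential way.
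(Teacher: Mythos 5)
Your proof is correct. The paper does not actually prove this theorem---it cites it from \cite{Jaggi13}---and your argument is essentially the standard one from that reference: the exact line search dominates any analysis step size, the LOO optimality of $\v_i$ plus convexity turn the linear term into $-h_i$, and the induction with $\sigma = 2/(i+2)$ closes via $(i+1)(i+3) \leq (i+2)^2$, so all steps check out.
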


\begin{theorem}{[Dual convergence of FW \cite{Jaggi13}]} \label{thm:Primal-Dual_convergence}
    Under the same assumptions of Theorem \ref{thm:Primal_convergence}, 
    Algorithm \ref{alg:FW-LS} guarantees that  for every number of iterations $K \geq 2$, there exists an iteration $i$, $ K \geq i \geq 2$, such that $\max_{\v \in \mK} (\x_i-\v)^\top \nabla f(\x_i) \leq 6.75\beta{}(2R)^2/(K+2)$.
\end{theorem}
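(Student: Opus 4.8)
The plan is to track the Frank--Wolfe duality gap $g(\x_i) := \max_{\v\in\mK}(\x_i-\v)^\top\nabla f(\x_i)$ and show that its minimum over the window $2\le i\le K$ is of order $\beta(2R)^2/K$, using the primal rate already established in Theorem~\ref{thm:Primal_convergence} to control the analysis.

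First I would record two elementary facts. Writing $h_i := f(\x_i)-f(\x^*)$ for the primal suboptimality, convexity of $f$ gives $h_i \le (\x_i-\x^*)^\top\nabla f(\x_i)\le \max_{\v\in\mK}(\x_i-\v)^\top\nabla f(\x_i)=g(\x_i)$, so the gap always dominates the suboptimality. Second, $\beta$-smoothness combined with the exact line search yields, for every fixed $\sigma\in[0,1]$, the per-step descent
\[
h_{i+1}\;\le\; h_i-\sigma\, g(\x_i)+\tfrac{\beta}{2}\sigma^2\|\v_i-\x_i\|_2^2\;\le\; h_i-\sigma\, g(\x_i)+\tfrac{\beta(2R)^2}{2}\,\sigma^2 ,
\]
where I used $(\v_i-\x_i)^\top\nabla f(\x_i)=-g(\x_i)$ by the definition of $\v_i$, the bound $\|\v_i-\x_i\|_2\le 2R$ coming from the diameter, and the fact that the line search does at least as well as any fixed step.

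Next I would convert this into a bound on $\min_i g(\x_i)$. Setting $C:=\beta(2R)^2$ and choosing the minimizing step $\sigma=\min\{1,\,g(\x_i)/C\}$ turns the descent inequality into $g(\x_i)^2\le 2C\,(h_i-h_{i+1})$ on the iterations where $g(\x_i)\le C$. Summing this over a sub-window of $\{2,\dots,K\}$ of length $\Theta(K)$ telescopes the right-hand side to a single surviving primal term $2C\,h_{i_0}$, which Theorem~\ref{thm:Primal_convergence} bounds by $4C^2/(i_0+2)$. Since the window contains $\Theta(K)$ iterations, the smallest gap in it satisfies $(\min_i g(\x_i))^2=O(C^2/K^2)$, i.e.\ $\min_i g(\x_i)=O(C/K)=O(\beta(2R)^2/K)$; selecting the iterate that attains this minimum produces the desired index $i$ with $K\ge i\ge 2$.

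The routine-but-delicate part, and the only real obstacle, is pinning down the exact constant $6.75=27/4$ together with the precise $(K+2)$ denominator. This requires optimizing the starting point of the summation window (balancing the $\Theta(K)$ count of summed terms against the $O(1/i_0)$ size of the surviving primal term), handling the iterations on which the optimal step is clipped to $1$ (where one still obtains strict decrease, hence such steps can occur only a controlled number of times), and careful constant bookkeeping. A crude choice of window already yields the correct $O(\beta(2R)^2/K)$ order, so the remaining effort is purely to tighten the constants to the stated $27/4$.
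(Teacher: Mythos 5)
The paper never proves Theorem~\ref{thm:Primal-Dual_convergence}; it is imported as a black box from \cite{Jaggi13}, so your proposal can only be compared against the standard proof in that reference. Your outline is correct and follows the same overall strategy --- combine the primal rate of Theorem~\ref{thm:Primal_convergence} at the start of a trailing window with a per-step descent inequality summed over that window, and conclude that the smallest gap in the window is $O\left(\beta(2R)^2/K\right)$ --- but the execution differs in a useful way. Jaggi's argument is written for the oblivious step size $2/(i+2)$ (line search enters only by dominating it) and proceeds by contradiction, assuming the gap exceeds a threshold on the whole window, with the window start tuned to a fraction $\mu K$, $\mu=1/3$; the constant $27/4$ is exactly the artifact of optimizing $\mu(1-\mu)^2$. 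You instead exploit the line search directly via the adaptive step $\sigma=\min\{1,\,g(\x_i)/C\}$, $C=\beta(2R)^2$, obtaining the quadratic decrease $g(\x_i)^2\le 2C\,(h_i-h_{i+1})$ and a clean telescoping sum. This buys you enough slack that the two loose ends you flag are actually non-issues. First, clipping never occurs on the relevant window: for $i\ge 2$, Theorem~\ref{thm:Primal_convergence} gives $h_i\le 2C/(i+2)\le C/2$, and the $\sigma=1$ descent inequality together with $h_{i+1}\ge 0$ forces $g(\x_i)\le h_i+C/2\le C$, so the step $g(\x_i)/C$ is always admissible there. Second, no delicate window optimization is needed: taking $i_0=\max\{2,\lceil K/2\rceil\}$ gives $(i_0+2)(K-i_0+1)\ge (K+2)^2/4$, hence
\begin{align*}
\min_{i_0\le i\le K} g(\x_i)^2 \;\le\; \frac{1}{K-i_0+1}\sum_{i=i_0}^{K} g(\x_i)^2 \;\le\; \frac{2C\,h_{i_0}}{K-i_0+1} \;\le\; \frac{4C^2}{(i_0+2)(K-i_0+1)} \;\le\; \frac{16C^2}{(K+2)^2},
\end{align*}
so some iterate $i$ with $2\le i\le K$ has gap at most $4C/(K+2)$, which is strictly stronger than the stated bound $6.75\,\beta(2R)^2/(K+2)$; the weaker constant in the cited theorem is simply the price of an analysis that does not assume line search.
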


Note that for a convex function $f(\cdot)$ and a feasible point $\x\in\mK$, the dual gap in Theorem \ref{thm:Primal-Dual_convergence} serves as an easy-to-compute certificate for the optimality gap of $\x$ w.r.t. any optimal solution $\x^*\in\argmin_{\y\in\mK}f(\y)$, since from the convexity of $f(\cdot)$ it follows that, $f(\x) - f(\x^*) \leq (\x-\x^*)^{\top}\nabla{}f(\x) \leq \max_{\v \in \mK} (\x-\v)^\top \nabla f(\x)$.

\subsubsection{Online Gradient Descent Without Feasibility}

As discussed, our online algorithms are based on the well known \emph{Online Gradient Descent} method (OGD) \cite{Zinkevich03}, which applies the following updates:
\begin{align*}
    \forall t > 1: ~~~~~ \y_{t+1} \gets \x_t - \eta_t \g_t, ~\g_t\in\partial{}f_t(\x_t), ~~ \x_{t+1} \gets \argmin\nolimits_{\x \in \mK} \Vert \x - \y_{t+1} \Vert^2.
\end{align*}
Where $\{ \eta_t \}_{t=1}^{T}$ are the step-sizes and $\x_1$ is an arbitrary feasible point. However, motivated by \cite{garber2021efficient}, instead of considering exact projections on the feasible set, which may be computationally prohibitive, we consider using only \textit{infeasible projections}, as we now define.  

\begin{definition} \label{def:infeasible_projection}
We say $\Tilde{\y}\in\reals^n$ is an infeasible projection of some $\y\in\reals^n$ onto a convex set $\mK$, if $\forall \z \in \mK$ it holds that $\Vert \Tilde{\y} - \z \Vert^2 \leq \Vert \y - \z \Vert^2 $. We say a function $\mathcal{O}_{IP}(\y,\mK)$ is an infeasible projection oracle for the set $\mK$, if for every input point $\y$, it returns some $\Tilde{\y}\gets\mathcal{O}_{IP}(\y,\mK)$ which is an infeasible projection of $\y$ onto $\mK$.
\end{definition}

This definition gives rise to the online gradient descent without feasibility algorithm --- Algorithm \ref{alg:OGD-WF}, and its corresponding regret bounds captured in Lemma \ref{lemma:OGD-WF}. While this algorithm will play a central role in our projection-free online algorithms, clearly, another central piece, which we will detail later on, will be  to transform such infeasible projections into feasible points without loosing too much in the regret bound. 

\begin{algorithm}[!ht]
    \KwData{horizon $T$, feasible set $\mK$, step-sizes $\{ \eta_t \}_{t=1}^{T}$, infeasible projection oracle $\mathcal{O}_{IP} (\mK, \cdot)$}
    $\Tilde{\y}_1 \gets $ arbitrary point in $\mK$\\
    \For{$~ t = 1,\ldots,T ~$}{
        Play $\Tilde{\y}_{t} $, observe $f_{t}(\Tilde{\y}_{t})$, and set $\nabla_t \in  \partial f_t(\Tilde{\y}_{t})$\\
        Update $\y_{t+1} = \Tilde{\y}_{t} - \eta_t \nabla_t$, and set $\Tilde{\y}_{t+1} \gets \mathcal{O}_{IP} (\mK, \y_{t+1})$
 }
\caption{Online Gradient Descent Without Feasibility}\label{alg:OGD-WF}
\end{algorithm}

\begin{lemma}\label{lemma:OGD-WF}
    Let $\mathcal{O}_{IP}$ an infeasible projection oracle (Definition \ref{def:infeasible_projection}).
    \begin{enumerate}
        \item Suppose all loss functions are convex. Fix some $\eta >0$ and let $\eta_t = \eta$ for all $t\geq 1$.  Algorithm \ref{alg:OGD-WF} guarantees that the adaptive regret is upper-bounded as follows:
    \begin{align*}
         \forall I=[s,e]\subseteq[T]: \quad \sum_{t=s}^{e} f_t(\Tilde{\y}_{t}) - \min_{\x_I \in \mK} \sum_{t=s}^{e} f_t(\x_I)  \leq \frac{ \left\Vert   \Tilde{\y}_s - \x_I \right\Vert^2 }{2\eta} + \frac{\eta  }{2}  \sum_{s=1}^{e} \Vert \nabla_t \Vert^2.
    \end{align*}
        \item Suppose all loss functions are $\alpha$-strongly convex for some $\alpha > 0$. Let $\eta_t = \frac{1}{\alpha{}t}$ for all $t\geq 1$. Algorithm \ref{alg:OGD-WF} guarantees that the (static) regret is upper-bounded as follows:
    \begin{align*}
        \sum_{t=1}^{T} f_t(\Tilde{\y}_{t}) - \min_{\x \in \mK} \sum_{t=1}^{T} f_t(\x) \leq & \sum_{t =1}^{T}   \Vert \nabla_t \Vert^2 / 2 \alpha t .
    \end{align*}
    \end{enumerate}
\end{lemma}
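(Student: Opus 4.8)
The plan is to observe that the only property of the update $\Tilde{\y}_{t+1}\gets\mathcal{O}_{IP}(\mK,\y_{t+1})$ that the regret analysis actually uses is the defining inequality of an infeasible projection (Definition \ref{def:infeasible_projection}): for every feasible $\z\in\mK$, $\Vert\Tilde{\y}_{t+1}-\z\Vert^2\le\Vert\y_{t+1}-\z\Vert^2$. This is exactly the contraction-towards-feasible-points property that the \emph{exact} Euclidean projection enjoys via the Pythagorean inequality, so the classical OGD regret template of \cite{Zinkevich03, HazanBook} should carry over verbatim, with the comparator $\x_I$ (resp.\ $\x$) playing the role of $\z$. Concretely, I would first expand $\Vert\y_{t+1}-\x_I\Vert^2=\Vert\Tilde{\y}_t-\eta_t\nabla_t-\x_I\Vert^2$ and combine it with the infeasible-projection inequality to obtain the one-step estimate
\begin{align*}
\nabla_t^\top(\Tilde{\y}_t-\x_I)\le\frac{\Vert\Tilde{\y}_t-\x_I\Vert^2-\Vert\Tilde{\y}_{t+1}-\x_I\Vert^2}{2\eta_t}+\frac{\eta_t}{2}\Vert\nabla_t\Vert^2,
\end{align*}
which holds for every $\x_I\in\mK$ and every $t$. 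This single inequality is the engine for both parts.

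For part 1, I would invoke convexity of $f_t$ in the form $f_t(\Tilde{\y}_t)-f_t(\x_I)\le\nabla_t^\top(\Tilde{\y}_t-\x_I)$, substitute the one-step estimate with the constant choice $\eta_t=\eta$, and sum over $t\in[s,e]$. The distance terms then telescope to $(\Vert\Tilde{\y}_s-\x_I\Vert^2-\Vert\Tilde{\y}_{e+1}-\x_I\Vert^2)/(2\eta)\le\Vert\Tilde{\y}_s-\x_I\Vert^2/(2\eta)$, leaving exactly the claimed adaptive bound (in fact with the gradient sum restricted to $t\in[s,e]$, which only strengthens the stated inequality). Since the comparator $\x_I$ and the interval $[s,e]$ were arbitrary, taking the supremum over sub-intervals yields the adaptive regret guarantee.

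For part 2, I would instead use the stronger $\alpha$-strong-convexity subgradient inequality $f_t(\Tilde{\y}_t)-f_t(\x)\le\nabla_t^\top(\Tilde{\y}_t-\x)-\tfrac{\alpha}{2}\Vert\Tilde{\y}_t-\x\Vert^2$, plug in the one-step estimate with $\eta_t=1/(\alpha t)$ so that $1/(2\eta_t)=\alpha t/2$, and sum over $t\in[T]$. Writing $d_t:=\Vert\Tilde{\y}_t-\x\Vert^2$, the distance contribution becomes $\tfrac{\alpha}{2}\sum_{t=1}^T t(d_t-d_{t+1})-\tfrac{\alpha}{2}\sum_{t=1}^T d_t$; a summation-by-parts (Abel) rearrangement shows $\sum_{t=1}^T t(d_t-d_{t+1})=\sum_{t=1}^T d_t-T d_{T+1}$, so the two distance sums cancel and leave $-\tfrac{\alpha}{2}T d_{T+1}\le0$. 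What survives is $\sum_{t=1}^T\tfrac{\eta_t}{2}\Vert\nabla_t\Vert^2=\sum_{t=1}^T\Vert\nabla_t\Vert^2/(2\alpha t)$, matching the stated bound.

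The routine pieces (the quadratic expansion, convexity, and the telescoping in part 1) are standard; the one step that requires care is the Abel summation in part 2, where the growing weights $\alpha t/2$ arising from the decreasing step-sizes must be shown to be precisely offset by the per-step strong-convexity term $\tfrac{\alpha}{2}\Vert\Tilde{\y}_t-\x\Vert^2$. The main conceptual point --- and the reason the whole argument is short --- is recognizing at the outset that the infeasible-projection inequality substitutes perfectly for the exact-projection contraction, so that infeasibility of the iterates never enters the regret bookkeeping.
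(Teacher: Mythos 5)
Your proposal is correct and follows essentially the same route as the paper: the same one-step estimate obtained by expanding $\Vert\Tilde{\y}_t-\eta_t\nabla_t-\x\Vert^2$ and invoking the infeasible-projection inequality in place of the Pythagorean property, then convexity with telescoping for part 1 and the strong-convexity subgradient inequality with $\eta_t=1/(\alpha t)$ for part 2. Your Abel-summation bookkeeping in part 2 is just a reorganization of the paper's step of collecting the coefficient $\bigl(\tfrac{1}{2\eta_t}-\tfrac{1}{2\eta_{t-1}}-\tfrac{\alpha}{2}\bigr)$ of each $\Vert\Tilde{\y}_t-\x\Vert^2$ and observing it vanishes, so the two arguments coincide.
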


The proof which follows from standard analysis of OGD (see for instance \cite{HazanBook}) is given in the appendix  for completeness.

\subsubsection{Infeasible projections via separating hyperplanes}
Continuing the discussion on infeasible projections, our approach for transforming such infeasible projections into feasible points without sacrificing the regret bounds too much, will be to design infeasible projection oracles that always return points that are sufficiently close to the feasible set. The following simple lemma will be instrumental to all of our constructions of such oracles, and shows how using a separating hyperplane we can ``pull'' an infeasible point closer to the feasible set.
\begin{lemma}\label{lemma:update_step_with_hp}
   Let $\mK\subset\reals^n$ be convex and compact, let $\y$ be infeasible w.r.t. $\mK$, i.e., $\y\notin \mK$, and let $\g\in\reals^n$ be a separating hyperplane such that for all $\z\in\mK$: $(\y-\z)^{\top} \g \geq Q$, for some $Q \geq 0$. Consider the point $\tilde{\y} = \y - \gamma\g$, for $\gamma = Q/C^2$, where $C \geq \Vert{\g}\Vert$. It holds that
\begin{align*}
   \forall \z\in\mK: \Vert \Tilde{\y} -\z \Vert^2 \leq \left\Vert \y -\z  \right\Vert^2 - (Q/C)^2.
\end{align*}
\end{lemma}

\begin{proof}
Fix some $\z\in\mK$. It holds that
\begin{align*}
    \Vert \Tilde{\y} -\z \Vert^2 = \left\Vert \y -\z - \gamma  \g \right\Vert^2   \leq \left\Vert \y -\z  \right\Vert^2 - 2 \gamma (\y -\z )^\top \g + \gamma^2 C^2.
\end{align*}
Since $\left( \y - \z \right)^\top \g \geq Q$, we indeed obtain 
\begin{align*}
    \Vert \Tilde{\y} -\z \Vert^2 \leq \left\Vert \y -\z  \right\Vert^2 - 2 \gamma Q + \gamma^2 C^2 \leq \Vert \y -\z \Vert^2 - Q^2/C^2,
\end{align*}
where the last inequality follows from plugging-in the value of $\gamma$.
\end{proof}

\subsubsection{Smoothed loss functions for bandit optimization}\label{sec:smooth}
A standard component of bandit algorithms \cite{Flaxman05, pmlr-v80-chen18c, garber2020improved, kretzu2021revisiting},  is the use of smoothed versions of the loss functions and their unbiased estimators. We define the $\delta$-smoothing of a loss function $f$ by { $\widehat{f}_{\delta} (\x) = \E_{\u \sim \ball} \left[ f(\x + \delta \u) \right]$ }.
We now cite several standard useful lemmas regarding such smoothed functions.
\begin{lemma}[Lemma 2.1 in \cite{HazanBook}] \label{lemma:hazan_smooth}
    Let $f: \reals^n \xrightarrow{} \reals$ be convex and $G_f$-Lipschitz over a convex and compact set $\mK\subset\reals^n$. Then $\widehat{f}_{\delta}$ is convex and $G_f$-Lipschitz over $\mK_{\delta}$, and $\forall \x \in \mK_{\delta}$ it holds that $|\widehat{f}_{\delta} (\x) - f(\x)| \leq \delta G_f$. 
\end{lemma}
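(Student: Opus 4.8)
The plan is to prove Lemma \ref{lemma:hazan_smooth}, which asserts three properties of the $\delta$-smoothing $\widehat{f}_{\delta}(\x) = \E_{\u\sim\ball}[f(\x+\delta\u)]$: that it is convex on $\mK_\delta$, that it is $G_f$-Lipschitz on $\mK_\delta$, and that it approximates $f$ uniformly up to $\delta G_f$ on $\mK_\delta$. Since this is a standard result cited verbatim from \cite{HazanBook}, each piece follows from elementary properties of convexity and expectation, and the main conceptual point is simply that $\widehat{f}_\delta$ is a convex combination (average) of translates of $f$, each of which inherits $f$'s good behavior provided the perturbed point $\x+\delta\u$ stays inside $\mK$ where $f$ is well-behaved. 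The key enabling observation is that for $\x\in\mK_\delta = (1-\delta)\mK$, we have $\x+\delta\u\in\mK$ for every $\u\in\ball$; this is exactly the remark following Assumption \ref{ass:bandit} (with the ball containment), and it guarantees the integrand is only ever evaluated at feasible points on which the Lipschitz and convexity hypotheses on $f$ apply.

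First I would establish convexity. For any $\x,\y\in\mK_\delta$ and $\lambda\in[0,1]$, I would write $\widehat{f}_\delta(\lambda\x+(1-\lambda)\y) = \E_{\u\sim\ball}[f(\lambda(\x+\delta\u)+(1-\lambda)(\y+\delta\u))]$, then apply convexity of $f$ pointwise inside the expectation to bound this by $\lambda\E_{\u}[f(\x+\delta\u)] + (1-\lambda)\E_{\u}[f(\y+\delta\u)] = \lambda\widehat{f}_\delta(\x)+(1-\lambda)\widehat{f}_\delta(\y)$, using linearity of expectation and that both perturbed points lie in $\mK$. Next, for the Lipschitz property, for $\x,\y\in\mK_\delta$ I would bound $|\widehat{f}_\delta(\x)-\widehat{f}_\delta(\y)| = |\E_{\u}[f(\x+\delta\u)-f(\y+\delta\u)]| \leq \E_{\u}[|f(\x+\delta\u)-f(\y+\delta\u)|] \leq \E_{\u}[G_f\Vert\x-\y\Vert] = G_f\Vert\x-\y\Vert$, by Jensen's inequality for the absolute value followed by the $G_f$-Lipschitzness of $f$ applied to each pair of feasible points.

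Finally, for the approximation bound, for $\x\in\mK_\delta$ I would write $|\widehat{f}_\delta(\x)-f(\x)| = |\E_{\u}[f(\x+\delta\u)-f(\x)]| \leq \E_{\u}[|f(\x+\delta\u)-f(\x)|] \leq \E_{\u}[G_f\Vert\delta\u\Vert] \leq \delta G_f$, again pulling the absolute value inside via Jensen, invoking Lipschitzness, and using $\Vert\u\Vert\leq 1$ for $\u\in\ball$. I do not anticipate a genuine obstacle here, as each step is routine; the only thing that requires care is making the feasibility argument explicit, namely verifying that every point $\x+\delta\u$ at which $f$ is evaluated lies in $\mK$ (so that the hypotheses on $f$ over $\mK$, and more precisely over the containing ball $R\ball$, are legitimately applicable), which is precisely why the statement restricts $\x$ to the squeezed set $\mK_\delta$ rather than all of $\mK$.
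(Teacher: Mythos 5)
The paper itself does not prove this lemma --- it is quoted from \cite{HazanBook} without proof --- so your argument can only be judged on its own terms. The three computations you give (convexity via linearity of expectation plus pointwise convexity of $f$; the Lipschitz and approximation bounds via $|\E[\cdot]|\leq\E[|\cdot|]$ plus $G_f$-Lipschitzness of $f$) are exactly the standard argument, and they are correct \emph{provided} every perturbed point $\x+\delta\u$ at which $f$ is evaluated lies in the region where $f$ is assumed convex and Lipschitz. You correctly isolate this feasibility question as the only delicate point.

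The gap is in how you discharge it. You claim that for $\x\in\mK_\delta=(1-\delta)\mK$ one has $\x+\delta\u\in\mK$ for every $\u\in\ball$, and you attribute this to the remark following Assumption \ref{ass:bandit}. That remark says something different: under Assumption \ref{ass:bandit} (i.e., $r\ball\subseteq\mK$), the containment $\x+\delta\ball\subseteq\mK$ holds for all $\x\in\mK_{\delta/r}=(1-\delta/r)\mK$ --- the squeezing factor must be $\delta/r$, not $\delta$. The two coincide only when $r=1$, and your claimed containment is false in general when $r<1$: take $\mK=[-r,r]\subset\reals$ with $r<1$, $\x=(1-\delta)r\in\mK_\delta$ and $\u=1$; then $\x+\delta\u=r+\delta(1-r)>r$, so $\x+\delta\u\notin\mK$. (Writing $\x=(1-\delta)\z$ with $\z\in\mK$, the point $\x+\delta\u=(1-\delta)\z+\delta\u$ is a convex combination of points of $\mK$ for every $\u\in\ball$ precisely when $\ball\subseteq\mK$, i.e., when $r\geq 1$.) Note moreover that the lemma statement does not invoke Assumption \ref{ass:bandit} at all, so strictly speaking no ball containment is available to you; this looseness is inherited from the textbook formulation. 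The repair is small and standard: either add the hypothesis $\ball\subseteq\mK$, after which your argument goes through verbatim, or prove the conclusions on $\mK_{\delta/r}$ rather than $\mK_\delta$. The latter is in fact how the paper uses the lemma --- in the proofs of Theorems \ref{thm:LOO-BBGD} and \ref{thm:BGD-SGO} the smoothed losses are asserted convex on $\mK_{\delta/r}$ and $\mK_{\delta'/r}$, respectively --- and there the remark after Assumption \ref{ass:bandit} supplies exactly the containment your computations need.
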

\begin{lemma}[Lemma 6.5 in \cite{HazanBook}] \label{lemma:hazan_gradient}
	$\widehat{f}_{\delta}(\x)$ is differentiable and $\nabla \widehat{f}_{\delta}(\x) = \E_{\u \sim \mS^n} \left[ \frac{n}{\delta} f(\x + \delta \u)\u \right]$, where $\u$ is sampled uniformly from $S^n$.
\end{lemma}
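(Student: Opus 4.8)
The plan is to recognize $\widehat f_\delta$ as a (normalized) averaging of $f$ against the uniform kernel on a ball of radius $\delta$, and then to move the gradient off the kernel and onto the bounding sphere via the divergence theorem. First I would rewrite the smoothing as an integral over a ball \emph{centered at $\x$}: changing variables $\w = \x + \delta\u$ in the definition $\widehat f_\delta(\x) = \E_{\u\sim\ball}[f(\x+\delta\u)]$ gives
\begin{align*}
\widehat f_\delta(\x) = \frac{1}{\mathrm{vol}(\ball)\,\delta^n}\int_{\x+\delta\ball} f(\w)\,d\w,
\end{align*}
where $\mathrm{vol}(\ball)$ is the volume of the unit ball in $\reals^n$. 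In this form the only dependence on $\x$ is through the translating domain $\x+\delta\ball$, which is exactly the situation handled by the transport (Reynolds) theorem.

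Next I would differentiate this moving-domain integral. Translating the domain in direction $\e_i$, the rate of change of $\int_{\x+\delta\ball} f$ equals the flux of $f\,\e_i$ through the boundary sphere, so collecting all coordinates,
\begin{align*}
\nabla_\x \int_{\x+\delta\ball} f(\w)\,d\w = \int_{\partial(\x+\delta\ball)} f(\w)\,\mathbf{n}(\w)\,dS(\w),
\end{align*}
where $\mathbf{n}(\w) = (\w-\x)/\delta$ is the outward unit normal on the sphere of radius $\delta$ about $\x$. Changing variables back to the unit sphere via $\w = \x+\delta\v$, $\v\in\mathcal{S}$, we get $dS(\w) = \delta^{n-1}\,dS(\v)$ and $\mathbf{n}(\w) = \v$, so the surface integral becomes $\delta^{n-1}\int_{\mathcal{S}} f(\x+\delta\v)\,\v\,dS(\v)$.

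Finally I would assemble the constants. Dividing by $\mathrm{vol}(\ball)\,\delta^n$, writing the uniform density on $\mathcal{S}$ as $1/\mathrm{area}(\mathcal{S})$, and using the elementary identity $\mathrm{area}(\mathcal{S}) = n\,\mathrm{vol}(\ball)$ (the derivative of $r\mapsto r^n\,\mathrm{vol}(\ball)$ at $r=1$), the prefactor collapses to exactly $n/\delta$:
\begin{align*}
\nabla\widehat f_\delta(\x) = \frac{\delta^{n-1}}{\mathrm{vol}(\ball)\,\delta^n}\int_{\mathcal{S}} f(\x+\delta\v)\,\v\,dS(\v) = \frac{n}{\delta}\,\E_{\v\sim\mathcal{S}}\!\left[f(\x+\delta\v)\,\v\right],
\end{align*}
which is the claimed formula. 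The same surface-integral representation also gives differentiability: since $f$ is continuous, the right-hand side is continuous in $\x$, so $\widehat f_\delta\in C^1$.

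The main obstacle I anticipate is rigor in the differentiation step when $f$ is merely Lipschitz rather than differentiable, since the transport/divergence theorem is cleanest for smooth integrands. I would handle this either by (i) justifying the interchange directly from a difference quotient, estimating $\delta^{-n}\int_{(\x+h\e_i+\delta\ball)\,\triangle\,(\x+\delta\ball)} f$ by the boundary flux as $h\to 0$ using only continuity of $f$, or (ii) proving the identity first for smooth $f$ and then passing to general Lipschitz $f$ by uniform approximation, noting that both sides depend continuously on $f$ in the sup norm over the relevant compact region. Everything else---the change of variables and the constant $\mathrm{area}(\mathcal{S}) = n\,\mathrm{vol}(\ball)$---is routine.
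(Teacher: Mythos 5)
Your proof is correct, and it is essentially the standard argument: the paper does not prove this lemma itself (it is quoted as Lemma 6.5 of \cite{HazanBook}), and the proof given there is exactly your route --- write $\widehat{f}_{\delta}$ as a normalized integral over the translated ball $\x+\delta\ball$, differentiate the moving domain via Stokes'/divergence theorem to get a flux integral over the sphere, and collapse the constants using $\mathrm{area}(\mathcal{S}) = n\,\mathrm{vol}(\ball)$. Your handling of the only delicate point (that $f$ is merely Lipschitz, so the transport step needs either a direct difference-quotient estimate over the symmetric difference of the two balls or a smooth-approximation limit) is also sound, so nothing is missing.
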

\begin{lemma} [see \cite{Bertsekas73}]
\label{lemma:bertsekas_grdient}
    Let $f: \reals^n \xrightarrow{} \reals$ be convex and suppose that all subgradients of $f$ are upper-bounded by $G_f$ in $\ell_2$-norm over a convex and compact set $\mK\subset\reals^n$. Then, for any $\x\in\mK_{\delta}$ it holds that $\Vert{\nabla{}\widehat{f}_{\delta}(\x)}\Vert \leq G_f$.
\end{lemma}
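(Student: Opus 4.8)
The plan is to leverage the two facts already established about the $\delta$-smoothing: that $\widehat{f}_{\delta}$ is differentiable everywhere (Lemma \ref{lemma:hazan_gradient}) and that it inherits the $G_f$-Lipschitz property over $\mK_{\delta}$ (Lemma \ref{lemma:hazan_smooth}). The elementary observation driving the proof is that a differentiable function which is $G_f$-Lipschitz on a set cannot have a gradient of norm exceeding $G_f$ at any interior point of that set: the directional derivative along the (normalized) gradient equals the gradient's norm, whereas Lipschitzness caps every directional derivative at $G_f$. This way the hard analytic work is offloaded onto the already-proven Lemmas \ref{lemma:hazan_smooth} and \ref{lemma:hazan_gradient}.

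Concretely, I would first fix an interior point $\x$ of $\mK_{\delta}$ and assume $\nabla\widehat{f}_{\delta}(\x)\neq\vz$ (otherwise the claim is trivial). Setting $\u = \nabla\widehat{f}_{\delta}(\x)/\Vert\nabla\widehat{f}_{\delta}(\x)\Vert$, for all sufficiently small $t>0$ the point $\x + t\u$ still lies in $\mK_{\delta}$, so Lemma \ref{lemma:hazan_smooth} gives $\widehat{f}_{\delta}(\x+t\u) - \widehat{f}_{\delta}(\x) \leq G_f\Vert t\u\Vert = G_f t$. Dividing by $t$ and letting $t\to 0^{+}$, the left-hand side converges to the directional derivative $\nabla\widehat{f}_{\delta}(\x)^{\top}\u = \Vert\nabla\widehat{f}_{\delta}(\x)\Vert$ (using differentiability from Lemma \ref{lemma:hazan_gradient}), yielding $\Vert\nabla\widehat{f}_{\delta}(\x)\Vert \leq G_f$. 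Since $\widehat{f}_{\delta}$ is continuously differentiable --- its gradient is the continuous (in $\x$) surface integral of Lemma \ref{lemma:hazan_gradient}, $f$ being continuous --- the bound extends from interior points to all of $\mK_{\delta}$ by taking limits.

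An alternative, classical route (the one underlying the reference to \cite{Bertsekas73}) is to differentiate under the integral sign in $\widehat{f}_{\delta}(\x)=\E_{\v\sim\ball}[f(\x+\delta\v)]$ and write $\nabla\widehat{f}_{\delta}(\x) = \E_{\v\sim\ball}[\g(\x+\delta\v)]$, where $\g(\x+\delta\v)\in\partial f(\x+\delta\v)$ for almost every $\v$; then Jensen's inequality gives $\Vert\nabla\widehat{f}_{\delta}(\x)\Vert \leq \E_{\v\sim\ball}[\Vert\g(\x+\delta\v)\Vert] \leq G_f$, provided each evaluation point $\x+\delta\v$ lies in $\mK$ so that the hypothesized subgradient bound applies.

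The main obstacle in both routes is a domain/regularity technicality rather than a conceptual difficulty. In the direct route it is ensuring the Lipschitz estimate of Lemma \ref{lemma:hazan_smooth} is available in a full neighborhood of $\x$; this is exactly what restricting to interior points plus the continuity-of-gradient limiting argument handles, using implicitly that $\mK_{\delta}$ has nonempty interior (guaranteed, e.g., under Assumption \ref{ass:bandit}). In the Bertsekas route the obstacle is justifying the interchange of gradient and integral for the nonsmooth convex $f$ --- legitimate because $f$ is Lipschitz, so its difference quotients are uniformly dominated, and convex functions are differentiable almost everywhere --- together with verifying $\x+\delta\v\in\mK$ so that the per-point subgradient bound $G_f$ is genuinely in force. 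I would favor the direct route, as it reuses the established lemmas most economically and sidesteps the measure-theoretic bookkeeping.
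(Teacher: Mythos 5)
The paper itself never proves this lemma --- it is imported verbatim from \cite{Bertsekas73} --- so there is no in-paper argument to compare yours against; your proposal has to stand on its own, and it essentially does. Your second route (differentiating under the integral sign to get $\nabla\widehat{f}_{\delta}(\x)=\E_{\v\sim\ball}\left[\g(\x+\delta\v)\right]$ with $\g(\cdot)$ an a.e.\ selection of gradients of $f$, then Jensen) is precisely the classical argument behind the citation. Your first route is a legitimate and more economical alternative that derives the bound purely from Lemmas \ref{lemma:hazan_smooth} and \ref{lemma:hazan_gradient}: at an interior point of $\mK_{\delta}$ the difference quotient along the normalized gradient is capped by $G_f$ by Lipschitzness, the limit is $\Vert\nabla\widehat{f}_{\delta}(\x)\Vert$ by differentiability, and continuity of $\nabla\widehat{f}_{\delta}$ (dominated convergence in the sphere-integral formula of Lemma \ref{lemma:hazan_gradient}, or the classical fact that a differentiable convex function on an open set is $C^1$) carries the bound to boundary points, since a compact convex set with nonempty interior is the closure of its interior.

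The two caveats you flag deserve to be stated more sharply, because they are not mere bookkeeping: they mark places where the lemma, as literally stated, is loose. First, nonempty interior of $\mK$ is not implied by the hypotheses, and without it the statement is actually false: take $\mK=[0,1]\times\{0\}\subset\reals^2$ and $f(x,y)=\max(0,y)^2$; every gradient of $f$ on $\mK$ vanishes, yet the $y$-derivative of $\widehat{f}_{\delta}$ is strictly positive at every point of $\mK_{\delta}$. So no proof can dispense with some full-dimensionality assumption. Second, in the Bertsekas route the verification ``$\x+\delta\v\in\mK$'' cannot in general be carried out for $\x\in\mK_{\delta}$: the smoothing ball $\x+\delta\ball$ can leave $\mK$; the paper's own remark guarantees $\x+\delta\ball\subseteq\mK$ only for $\x\in\mK_{\delta/r}$ under Assumption \ref{ass:bandit}. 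Both issues evaporate at the points where the paper actually invokes the lemma (in Lemma \ref{lemma:expectation_gradient}, at points of $\mK_{\delta/r}$ in the bandit setting, with losses Lipschitz over all of $R\ball$): there the interior is nonempty and every evaluation point carries the subgradient bound. Read with the hypotheses under which it is used, either of your routes is a complete and correct proof; you correctly identified that the domain question, not the Jensen or directional-derivative step, is the entire content of the lemma.
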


\section{Projection-free Algorithms via a Linear Optimization Oracle}

In this section we present and analyze our LOO-based algorithms. 

\subsection{LLO-based computation of (close) infeasible projections}
The main step towards obtaining our novel algorithms will be to construct an efficient LOO-based infeasible projection oracle (Definition \ref{def:infeasible_projection}). A first step towards this goal will be to show how an LOO could be efficiently used to construct separating hyperplanes for the feasible set $\mK$. This will be achieved via the Frank-Wolfe algorithm, when applied to computing the Euclidean projection of the given point onto $\mK$, i.e., to solve $\min_{\x\in\mK}\Vert{\x-\y}\Vert^2$, where $\y$ is the point which should be separated from $\mK$. See Algorithm \ref{alg:SH-FW} and the corresponding Lemma \ref{lemma:SH-FW}.
\begin{algorithm}[!]
  \KwData{feasible set $\mK$, error tolerance $\epsilon$, initial vector $\x_1 \in \mK$, target vector $\y$.}
  \For{ $i =1, \dots$}{
        $ \mathbf{v}_{i} \in \argmin\limits_{\x \in \mK} \{ (\x_{i} - \y)^{\top} \x \} $\tcc*{call to LOO of $\mK$}
        \uIf{$( \x_i - \y )^\top (\x_i -\v_i) \leq \epsilon$ or $\Vert \x_{i} - \y \Vert^2 \leq 3\epsilon$}{
	        \textbf{return} $\Tilde{\x} \gets \x_{i}$
        }
	    $ \sigma_{i} = \argmin\limits_{\sigma \in [0, 1]}  \{ \Vert \y - \x_{i} - \sigma (\mathbf{v}_i - \x_{i})) \Vert^2 \}$\\
	$ \x_{i+1} = \x_i + \sigma_{i} (\mathbf{v}_i - \x_i) $\\
    }
  \caption{Separating hyperplane via Frank-Wolfe}\label{alg:SH-FW}
\end{algorithm}

\begin{lemma} \label{lemma:SH-FW} 
Fix $\epsilon > 0 $. Algorithm \ref{alg:SH-FW} terminates after at most $\left\lceil \left(27 R^2 / \epsilon \right) -2 \right\rceil$ iterations, and returns a point $\Tilde{\x} \in \mK$  satisfying:
\begin{enumerate}
\item
$\Vert \Tilde{\x} - \y \Vert^2 \leq \Vert \x_1 - \y \Vert ^2$.
\item
At least one of the following holds:  $\Vert \Tilde{\x} - \y \Vert^2 \leq 3\epsilon$ or  $\forall \z \in \mK:  (\y - \z)^\top (\y - \Tilde{\x}) > 2\epsilon$.
 \item
If $\dist^2 (\y, \mK)  < \epsilon$ then $\Vert \Tilde{\x} - \y \Vert^2 \leq 3\epsilon$.
\end{enumerate}
\end{lemma}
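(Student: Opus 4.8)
The plan is to recognize Algorithm \ref{alg:SH-FW} as nothing but the Frank--Wolfe with line-search method (Algorithm \ref{alg:FW-LS}) applied to the projection objective $f(\x) = \frac{1}{2}\Vert \x - \y \Vert^2$, augmented with an early-stopping test. Indeed $\nabla f(\x) = \x - \y$, so the linear minimization $\v_i \in \argmin_{\x\in\mK}(\x_i - \y)^\top \x$ is exactly the FW vertex step, and the line-search on $\Vert \y - \x_i - \sigma(\v_i - \x_i)\Vert^2$ coincides with minimizing $f$ along the segment. Crucially, the quantity $(\x_i - \y)^\top(\x_i - \v_i)$ tested in the stopping rule is precisely the FW dual gap $\max_{\v\in\mK}(\x_i - \v)^\top\nabla f(\x_i)$ at $\x_i$. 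Since $f$ is $1$-smooth and $\mK$ has diameter $2R$, I would get the iteration bound straight from Theorem \ref{thm:Primal-Dual_convergence}: with $\beta = 1$, within $K$ iterations some iterate has dual gap at most $6.75\cdot(2R)^2/(K+2) = 27R^2/(K+2)$, which drops below $\epsilon$ once $K \geq 27R^2/\epsilon - 2$. At that iterate the first stopping condition fires (unless the algorithm already halted earlier), giving the claimed $\lceil 27R^2/\epsilon - 2\rceil$ bound; the early-stopping test only shortens the run and so cannot interfere with this argument.

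For Part 1 I would invoke monotonicity of line-search FW: since $\sigma = 0$ is always admissible and keeps $\x_{i+1} = \x_i$, the line-search can only decrease $\Vert \x_i - \y\Vert^2$, so $\Vert \x_i - \y\Vert^2 \leq \Vert \x_1 - \y\Vert^2$ for every $i$, and in particular for the returned $\tilde{\x}$.

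For Part 2 I would split on which stopping condition triggered. If $\Vert \x_i - \y\Vert^2 \leq 3\epsilon$ held, the first disjunct is immediate. Otherwise the algorithm stopped because $(\x_i - \y)^\top(\x_i - \v_i) \leq \epsilon$ while $\Vert \x_i - \y\Vert^2 > 3\epsilon$, and I would establish the separating-hyperplane disjunct from the decomposition $(\y - \z)^\top(\y - \x_i) = \Vert \y - \x_i\Vert^2 + (\x_i - \z)^\top(\y - \x_i)$, valid for any $\z\in\mK$. The optimality of $\v_i$ for the linear objective gives $(\x_i - \y)^\top \z \geq (\x_i - \y)^\top\v_i$, hence $(\x_i - \z)^\top(\y - \x_i) \geq -(\x_i - \y)^\top(\x_i - \v_i) \geq -\epsilon$, so the expression exceeds $3\epsilon - \epsilon = 2\epsilon$, as required. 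This is exactly the point where $\g = \y - \tilde{\x}$ will later serve as a separating direction with margin $Q = 2\epsilon$ to be fed into Lemma \ref{lemma:update_step_with_hp}.

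For Part 3 I would use that the FW dual gap upper-bounds the primal gap (the inequality recorded just after Theorem \ref{thm:Primal-Dual_convergence}): letting $\x^*$ be the exact projection of $\y$, whenever the algorithm halts via the gap condition we obtain $f(\x_i) - f(\x^*) \leq (\x_i - \y)^\top(\x_i - \v_i) \leq \epsilon$, i.e.\ $\Vert \x_i - \y\Vert^2 \leq \dist^2(\y,\mK) + 2\epsilon$; under the hypothesis $\dist^2(\y,\mK) < \epsilon$ this yields $\Vert \tilde{\x} - \y\Vert^2 < 3\epsilon$, while if the algorithm instead halted via the $\Vert \x_i - \y\Vert^2 \leq 3\epsilon$ test the conclusion is trivial. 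The whole argument is essentially careful bookkeeping of constants rather than a deep obstacle; the only genuine care needed is to confirm that the early-stopping rule does not break the appeal to Theorem \ref{thm:Primal-Dual_convergence}, and that the strict/non-strict inequalities together with the factors $\epsilon, 2\epsilon, 3\epsilon$ stay consistent across the three parts, which is what I would double-check most carefully.
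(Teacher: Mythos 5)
Your proposal is correct and follows essentially the same route as the paper's proof: identifying the algorithm as Frank--Wolfe with line-search on $f(\x)=\frac{1}{2}\Vert\x-\y\Vert^2$, invoking the dual-convergence theorem for the iteration bound, using line-search monotonicity for Part 1, the same decomposition of $(\y-\z)^\top(\y-\tilde{\x})$ for Part 2, and the dual-gap-bounds-primal-gap certificate for Part 3. The only cosmetic difference is that you argue Part 3 directly by case analysis on the stopping condition while the paper phrases it as a proof by contradiction; the underlying inequalities are identical.
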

\begin{proof}
Since Algorithm \ref{alg:SH-FW} is simply the Frank-Wolfe method with line-search (Algorithm \ref{alg:FW-LS}) when applied to the function $f(\x) = \frac{1}{2}\Vert{\x-\y}\Vert^2$, which is $1$-smooth and with gradient vector $\nabla{}f(\x) = \x-\y$, the upper-bound on the number of iterations follows directly from Theorem \ref{thm:Primal-Dual_convergence}, which guarantees that the stopping condition of the algorithm will be met within the prescribed number of iterations. 

Similarly, Item 1 in the theorem follows directly since the line-search guarantees that the function value $f(\x_i) = \frac{1}{2}\Vert{\x_i-\y}\Vert^2$ does not increase when moving from iterate $\x_i$ to $\x_{i+1}$.

Item 2 follows from the stopping condition of the algorithm and by noting that in case for some iteration $i$ it  holds both that $(\x_i - \y )^\top (\x_i -\v_i) \leq \epsilon$ and $\Vert{\x_i-\y}\Vert^2 > 3\epsilon$ (in which case the algorithm will return $\tilde{\x} = \x_i$), then for all $ \z\in\mK$ it holds
\begin{align*}
  \left( \z - \y \right)^\top \left( \x_{i} - \y \right)  = \left( \z - \x_{i} \right)^\top \left( \x_{i} - \y \right) + \Vert \x_{i} - \y \Vert^2  & > \left( \v_{i} - \x_{i} \right)^\top \left( \x_{i} - \y \right) + 3\epsilon > 2\epsilon,
\end{align*}
where the first inequality is due to the definition of $\v_i$. Finally, to prove Item 3, denote $\x^* = \argmin_{\x\in\mK}\Vert{\x-\y}\Vert^2$. Suppose by contradiction that $\dist^2(\y,\mK) = \Vert{\x^*-\y}\Vert^2 < \epsilon$ and that $\Vert{\tilde{\x}-\y}\Vert^2 > 3\epsilon$. Denote the function $f(\x) = \frac{1}{2}\Vert{\x-\y}\Vert^2$ and its gradient vector $\nabla{}f(\x) = \x-\y$. According to the assumption and by the stopping condition of the algorithm, on the last iteration executed $i$ it must hold that $(\tilde{\x}-\y)^{\top}(\tilde{\x}-\v_i)  = \max_{\v\in\mK}(\tilde{\x}-\v)^{\top}\nabla{}f(\tilde{\x}) \leq \epsilon$, which means that
\begin{align*}
	\Vert{\tilde{\x}-\y}\Vert^2 - \dist^2(\y,\mK) = 2f(\tilde{\x}) - 2f(\x^*) \leq 2(\tilde{\x} - \x^*)^{\top}\nabla{}f(\tilde{\x}) \leq  2\max_{\v\in\mK}(\tilde{\x}-\v)^{\top}\nabla{}f(\tilde{\x}) \leq 2\epsilon,
\end{align*}
where the first inequality is due to the gradient inequality and the convexity of $f(\cdot)$. Thus, we have that $\Vert{\tilde{\x}-\y}\Vert^2 \leq 2\epsilon + \dist^2(\y,\mK) \leq 3\epsilon$, which contradicts the assumption that $\Vert{\tilde{\x}-\y}\Vert^2 > 3\epsilon$.
\end{proof}

We can now use Algorithm \ref{alg:SH-FW} as a subroutine in an iterative algorithm which takes as input some infeasible point $\y\notin\mK$, and returns an infeasible projection of it w.r.t. the feasible set $\mK$ that is also guaranteed to be at a bounded distance for $\mK$. In a nutshell, as long as the infeasible point is too far from the set, Algorithm \ref{alg:CIP-FW} iteratively calls Algorithm \ref{alg:SH-FW} to obtain a separating hyperplane which is then used to ``pull'' the point closer to the set while maintaining the infeasible projection property.


\begin{algorithm}[!]
  \KwData{feasible set $\mK$, feasible point $\x_{0} \in \mK$, initial point $\y_{0}$, error tolerance $\epsilon$, step size $\gamma$}
  $\y_{1} \gets \y_{0} / \max \{ 1 , \Vert \y \Vert / R \} $ \tcc*{$\y_{1}$ is projection of $\y_{0}$ over $R\ball$}
  \If{$\Vert \x_{0} - \y_{0} \Vert^2 \leq 3\epsilon$}{
        \textbf{Return} $\x \gets \x_{0}$, $\y \gets \y_{1}$
    }
   \For{$i=1 \dots$}{
    $\x_{i} \gets$ Output of Alg. \ref{alg:SH-FW} with set $\mK$, feasible point $\x_{i-1}$, initial vector $\y_{i}$, and tolerance $\epsilon$.\\
    \eIf{$\Vert \x_{i} - \y_{i} \Vert^2 > 3\epsilon$}{
        $\y_{i+1} = \y_{i} - \gamma \left( \y_{i} - \x_{i} \right)$ \tcc*{$\left( \y_{i} - \x_{i} \right)$ separates $\y_i$ from $\mK$} 
    }
    {
    \textbf{Return} $\x \gets \x_{i}$, $\y \gets \y_{i}$
    }
  }
  \caption{Close infeasible projection via a linear optimization oracle}\label{alg:CIP-FW}
\end{algorithm}

\begin{lemma} \label{lemma:CIP-FW}
Fix $\epsilon > 0 $. Setting $\gamma= \frac{2\epsilon}{\Vert \x_{0} - \y_{0} \Vert^2}$, Algorithm \ref{alg:CIP-FW} stops after at most $ \max \Big\{\frac{\Vert \x_{0} - \y_{0} \Vert^2 \left(\Vert \x_{0} - \y_{0} \Vert^2 - \epsilon \right)}{4\epsilon^2}+1, 1 \Big\}$ iterations, and returns $(\x,\y) \in \mK\times R\ball$ scuh that 
\begin{align*}
    \forall \z \in \mK : ~ \Vert \y - \z \Vert^2 \leq  \Vert \y_{0} - \z \Vert^2 ~~~~ \text{and} ~~~~~   \Vert \x - \y \Vert^2 \leq 3\epsilon.
\end{align*}
Furthermore, if the for loop has completed overall $k$ iterations, then the point $\y$ satisfies  
\begin{align*}
    \dist^2 (\y, \mK) \leq \min \Big{\{} R ,  \dist^2 (\y_0, \mK) - (k-1) 4\epsilon^2 / \Vert \x_{0} - \y_{0} \Vert^2 \Big{\}}.
\end{align*}
\end{lemma}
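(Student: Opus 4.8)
The plan is to track in tandem the two sequences $\{\x_i\}$ and $\{\y_i\}$ produced by Algorithm \ref{alg:CIP-FW}, and to argue that every modification of $\y$ is a valid infeasible-projection step whose ``pull'' strength is quantitatively controlled. First I would dispose of the trivial early-return case: if $\Vert \x_0 - \y_0 \Vert^2 \le 3\epsilon$ the algorithm returns $(\x_0, \y_1)$, and since $\y_1$ is the Euclidean projection of $\y_0$ onto $R\ball \supseteq \mK$, non-expansiveness of projections gives $\Vert \y_1 - \z \Vert^2 \le \Vert \y_0 - \z \Vert^2$ for every $\z \in \mK$ and also $\Vert \x_0 - \y_1 \Vert^2 \le \Vert \x_0 - \y_0 \Vert^2 \le 3\epsilon$; the iteration count is then $\le 1$, which is why the stated bound takes a $\max$ with $1$.

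For the main case $\Vert \x_0 - \y_0 \Vert^2 > 3\epsilon$, the central quantitative step I would establish is the invariant that for every $i \ge 1$ we have $\y_i \in R\ball$ and $\Vert \x_i - \y_i \Vert \le \Vert \x_0 - \y_0 \Vert =: C$. The inclusion $\y_i \in R\ball$ follows by induction, since each pulling update $\y_{i+1} = (1-\gamma)\y_i + \gamma \x_i$ is a convex combination (here $\gamma = 2\epsilon/\Vert \x_0 - \y_0 \Vert^2 < 2/3 < 1$) of $\y_i \in R\ball$ and $\x_i \in \mK \subseteq R\ball$. For the norm bound I would combine Item 1 of Lemma \ref{lemma:SH-FW} (which, as the Frank-Wolfe start is $\x_{i-1}$, gives $\Vert \x_i - \y_i \Vert \le \Vert \x_{i-1} - \y_i \Vert$) with the identity $\y_i - \x_{i-1} = (1-\gamma)(\y_{i-1} - \x_{i-1})$, yielding $\Vert \x_i - \y_i \Vert \le (1-\gamma)\Vert \x_{i-1} - \y_{i-1} \Vert$; together with the base case $\Vert \x_1 - \y_1 \Vert \le \Vert \x_0 - \y_1 \Vert \le \Vert \x_0 - \y_0 \Vert$ this makes $\Vert \x_i - \y_i \Vert$ non-increasing and bounded by $C$. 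I expect this monotonicity to be the main obstacle, as it is precisely what certifies that the single constant $C = \Vert \x_0 - \y_0 \Vert$ upper-bounds the norm of the separating vector $\g = \y_i - \x_i$ at every pulling step, so that the fixed step size $\gamma = 2\epsilon/C^2$ is admissible throughout.

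With the invariant in hand, at each iteration with $\Vert \x_i - \y_i \Vert^2 > 3\epsilon$ I would invoke Item 2 of Lemma \ref{lemma:SH-FW} to get $(\y_i - \z)^\top(\y_i - \x_i) > 2\epsilon$ for all $\z \in \mK$, and then apply Lemma \ref{lemma:update_step_with_hp} with $\g = \y_i - \x_i$, $Q = 2\epsilon$ and $C = \Vert \x_0 - \y_0 \Vert$ (so that $\gamma = Q/C^2$ is exactly the algorithm's step size) to obtain $\Vert \y_{i+1} - \z \Vert^2 \le \Vert \y_i - \z \Vert^2 - 4\epsilon^2/\Vert \x_0 - \y_0 \Vert^2$ for all $\z \in \mK$. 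Chaining these inequalities from $\y_1$, together with the initial projection estimate $\Vert \y_1 - \z \Vert^2 \le \Vert \y_0 - \z \Vert^2$, delivers the infeasible-projection guarantee $\Vert \y - \z \Vert^2 \le \Vert \y_0 - \z \Vert^2$, while the returned $\x = \x_i$ satisfies $\Vert \x - \y \Vert^2 \le 3\epsilon$ directly from the return condition. Minimizing the per-step inequality over $\z \in \mK$ upgrades it to $\dist^2(\y_{i+1}, \mK) \le \dist^2(\y_i, \mK) - 4\epsilon^2/\Vert \x_0 - \y_0 \Vert^2$; hence if the loop returns at iteration $k$, i.e.\ after $k-1$ pulling steps, then $\dist^2(\y, \mK) \le \dist^2(\y_0, \mK) - (k-1)4\epsilon^2/\Vert \x_0 - \y_0 \Vert^2$, and combining with $\dist(\y, \mK) \le \Vert \y \Vert \le R$ (using $\mathbf{0} \in \mK$ and $\y \in R\ball$), which controls the first term of the minimum, gives the ``furthermore'' bound.

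For termination I would rely on Item 3 of Lemma \ref{lemma:SH-FW}: as soon as $\dist^2(\y_i, \mK) < \epsilon$, the subroutine is forced to return a point with $\Vert \x_i - \y_i \Vert^2 \le 3\epsilon$, which triggers the return of Algorithm \ref{alg:CIP-FW}. Since $\dist^2(\y_1, \mK) \le \dist^2(\y_0, \mK) \le \Vert \x_0 - \y_0 \Vert^2$ (the latter because $\x_0 \in \mK$) and each pulling step lowers $\dist^2(\cdot, \mK)$ by $4\epsilon^2/\Vert \x_0 - \y_0 \Vert^2$, the distance must fall below $\epsilon$ after at most $\Vert \x_0 - \y_0 \Vert^2(\Vert \x_0 - \y_0 \Vert^2 - \epsilon)/(4\epsilon^2)$ pulling steps; adding one for the final iteration yields the claimed iteration count, and the $\max\{\cdot, 1\}$ absorbs both the early-return case and any regime where the leading expression is smaller than $1$.
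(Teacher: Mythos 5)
Your proof is correct and follows essentially the same route as the paper's: the same trivial-case split, the same contraction invariant $\Vert \x_i - \y_i \Vert \leq \Vert \x_0 - \y_0 \Vert$ (which the paper isolates as a separate auxiliary lemma but you inline, even via the same identity $\y_i - \x_{i-1} = (1-\gamma)(\y_{i-1}-\x_{i-1})$), the same application of Lemma \ref{lemma:update_step_with_hp} with $Q = 2\epsilon$ and $C = \Vert \x_0 - \y_0\Vert$, and the same distance-decrease argument combined with Item 3 of Lemma \ref{lemma:SH-FW} for termination. The only cosmetic difference is that you establish $\y_i \in R\ball$ by an explicit convex-combination induction, whereas the paper deduces $\y \in R\ball$ at the end from the infeasible-projection property and $\mathbf{0} \in \mK$; both are fine.
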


Before proving Lemma  \ref{lemma:CIP-FW} we require and additional auxiliary lemma.

\begin{lemma}\label{lemma:CIP-FW_aux}
Consider Algorithm \ref{alg:CIP-FW} and fix some $\epsilon$ such that $0 < 3\epsilon < \Vert \x_{0} - \y_{0} \Vert^2$. Setting $\gamma= \frac{2\epsilon}{\Vert \x_{0} - \y_{0} \Vert^2}$, we have that on every iteration $i$ of Algorithm \ref{alg:CIP-FW} it holds that $\Vert \x_{i} - \y_{i} \Vert \leq \Vert \x_{0} - \y_{0} \Vert$.
\end{lemma}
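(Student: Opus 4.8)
The plan is to argue by induction on the iteration counter $i$, maintaining the invariant $\Vert \x_i - \y_i \Vert \leq \Vert \x_0 - \y_0 \Vert$. Throughout I write $D := \Vert \x_0 - \y_0 \Vert$ for brevity, and record at the outset the one place the hypothesis is used: since $3\epsilon < D^2$ and $\gamma = 2\epsilon/D^2$, we have $\gamma < 2/3 < 1$. The two external facts I rely on are (i) Item 1 of Lemma \ref{lemma:SH-FW}, which guarantees that the output $\x_i$ of Algorithm \ref{alg:SH-FW} on target $\y_i$ with feasible initialization $\x_{i-1}$ satisfies $\Vert \x_i - \y_i \Vert \leq \Vert \x_{i-1} - \y_i \Vert$; and (ii) the non-expansiveness of the Euclidean projection onto $R\ball$. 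Note also that each $\x_{i-1}$ is a legitimate (feasible) initialization, since the outputs of Algorithm \ref{alg:SH-FW} always lie in $\mK$.

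For the base case $i=1$, I would use that $\y_1$ is the projection of $\y_0$ onto $R\ball$ while $\x_0 \in \mK \subseteq R\ball$ is its own projection; hence by non-expansiveness $\Vert \x_0 - \y_1 \Vert \leq \Vert \x_0 - \y_0 \Vert = D$. Applying Item 1 of Lemma \ref{lemma:SH-FW} to the call producing $\x_1$ then gives $\Vert \x_1 - \y_1 \Vert \leq \Vert \x_0 - \y_1 \Vert \leq D$.

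For the inductive step, suppose $\Vert \x_i - \y_i \Vert \leq D$ and that the algorithm proceeds to iteration $i+1$, so that the update $\y_{i+1} = \y_i - \gamma(\y_i - \x_i)$ was executed. The key algebraic observation is $\x_i - \y_{i+1} = (1-\gamma)(\x_i - \y_i)$, so since $0 < \gamma < 1$ we obtain $\Vert \x_i - \y_{i+1} \Vert = (1-\gamma)\Vert \x_i - \y_i \Vert \leq \Vert \x_i - \y_i \Vert \leq D$. Intuitively, moving $\y_i$ a fraction $\gamma < 1$ of the way toward $\x_i$ only brings $\x_i$ closer and never overshoots it. Applying Item 1 of Lemma \ref{lemma:SH-FW} to the call producing $\x_{i+1}$ from initialization $\x_i$ and target $\y_{i+1}$ then yields $\Vert \x_{i+1} - \y_{i+1} \Vert \leq \Vert \x_i - \y_{i+1} \Vert \leq D$, which closes the induction.

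The only genuine subtlety, and exactly where the assumption $3\epsilon < D^2$ is needed, is guaranteeing $\gamma < 1$ so that the pull step $\y_i \mapsto \y_{i+1}$ does not overshoot past $\x_i$: if $\gamma$ were permitted to exceed $2$, the factor $\vert 1-\gamma\vert$ could exceed $1$ and the invariant would fail. Everything else reduces to the two one-line applications of the Frank--Wolfe monotonicity guarantee (Item 1 of Lemma \ref{lemma:SH-FW}) together with the non-expansiveness of the projection onto $R\ball$.
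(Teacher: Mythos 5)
Your proof is correct and follows essentially the same route as the paper's: the paper chains the two inequalities $\Vert \x_{i-1} - \y_i \Vert = (1-\gamma)\Vert \x_{i-1} - \y_{i-1} \Vert$ (from the update rule and $\gamma \in [0,1)$) and $\Vert \x_i - \y_i \Vert \leq \Vert \x_{i-1} - \y_i \Vert$ (Item 1 of Lemma \ref{lemma:SH-FW}) down to the base inequality $\Vert \x_0 - \y_1 \Vert \leq \Vert \x_0 - \y_0 \Vert$ from projection onto $R\ball$, which is exactly your induction written as an unrolled chain. Your explicit remark that $3\epsilon < \Vert \x_0 - \y_0 \Vert^2$ forces $\gamma < 2/3 < 1$ just makes visible the same fact the paper invokes implicitly when asserting $\gamma \in [0,1)$.
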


\begin{proof}
Using the update step of the algorithm, for every iteration $i > 1$ we have that $\y_{i} = \y_{i-1} - \gamma \left( \y_{i-1} - \x_{i-1} \right)$, and thus, for every $i>1 $ we have that
\begin{align*}
     \Vert \x_{i-1} - \y_{i} \Vert  = \left\Vert \x_{i-1} - \y_{i-1} + \gamma \left( \y_{i-1} - \x_{i-1}  \right) \right\Vert  = \left( 1 - \gamma \right) \left\Vert \x_{i-1} - \y_{i-1} \right\Vert \leq  \Vert \x_{i-1} - \y_{i-1} \Vert, 
\end{align*}
where the last inequality holds since our choice of $\gamma$ satisfies $\gamma\in[0,1)$.

From Lemma \ref{lemma:SH-FW} we also have that for all $i\geq 1$, $\x_i$ satisfies $\Vert \x_i - \y_i \Vert \leq \Vert \x_{i-1} - \y_i \Vert $. Combining this with the inequality above gives
\begin{align*}
    \Vert \x_{i} - \y_{i} \Vert & \leq  \Vert \x_{i-1} - \y_{i-1} \Vert \leq \dots\leq \Vert \x_{1} - \y_{1} \Vert \leq \Vert \x_0- \y_1 \Vert \leq \Vert \x_0 - \y_0 \Vert, 
\end{align*}
where the last inequality follows since $ \x_0 \in \mK$ and $\y_{1} \gets \y_{0} / \max \{ 1 , \Vert \y_{0} \Vert / R \} $, i.e. $\y_{1}$ is the projection of $\y_{0}$ over the set $R\ball$ ($\mK \subseteq R\ball$), and thus $\Vert \x_0 - \y_1 \Vert \leq \Vert \x_0 - \y_0 \Vert $.
\end{proof}

\begin{proof} [Proof of Lemma \ref{lemma:CIP-FW}] 
First, we note that since $\y_1$ is the projection of $\y_0$ onto $R\ball $ and $\mK  \subseteq R\ball$, it holds that $\forall  \z \in \mK:  \Vert \y_{1} - \z \Vert^2 \leq  \Vert \y_0 - \z \Vert^2$. When $\Vert \x_{0} - \y_{0} \Vert^2 \leq 3\epsilon$ or $\Vert \x_{1} - \y_{1} \Vert^2 \leq 3\epsilon$ the lemma holds trivially. 

For the remaining of the proof we shall assume that $\Vert \x_{1} - \y_{1} \Vert^2 > 3\epsilon$. Let us denote by $k>1$ the overall number of iterations of Algorithm \ref{alg:CIP-FW}, i.e. $\Vert \y_{k} - \x_{k} \Vert^2 \leq 3 \epsilon$ and $\Vert \y_{i} - \x_{i} \Vert^2 > 3 \epsilon$ for all $i < k $. Using Lemma \ref{lemma:SH-FW}, we have that for all $i < k$ it holds that $\left( \y_{i} - \z \right)^\top \left( \y_{i} - \x_{i} \right) \geq 2\epsilon$ for every $\z \in \mK$. Using Lemma \ref{lemma:CIP-FW_aux} we also have that $\Vert \y_{i} - \x_{i} \Vert \leq \Vert \y_{0} - \x_{0} \Vert$ for all $i < k$. Thus, using Lemma \ref{lemma:update_step_with_hp} with $\g = \left( \y_{i} - \x_{i} \right) , C= \Vert \y_{0} - \x_{0} \Vert, \text{and }Q=2\epsilon $ , we have that for every $1 \leq i < k$,
\begin{align}
    \forall\z\in\mK:\quad \Vert \y_{i+1} -\z \Vert^2 \leq \Vert \y_{i} -\z \Vert^2 - 4 \epsilon^2 / \Vert \y_{0} - \x_{0} \Vert^2. \label{eq:update_hyperplane_linear_optimization_oracle}
\end{align}
This already guarantees that indeed forall $\z\in\mK$, the returned point $\y$ satisfies: $\Vert \y - \z \Vert^2 \leq  \Vert \y_{1} - \z \Vert^2 \leq  \Vert \y_{0} - \z \Vert^2$. Since $\vz \in \mK$, it in particular follows that $\Vert \y \Vert \leq \Vert \y_1 \Vert \leq R $, i.e. $\y \in R\ball $.  Note also that $\x\in\mK$ since it is the output of Algorithm \ref{alg:SH-FW}.

Now we continue to upper-bound the number of iterations until Algorithm \ref{alg:CIP-FW} stops and $\dist^2(\y,\mK)$. Denote $\x_{i}^* = \argmin_{\x \in \mK} \Vert \y_{i} - \x \Vert^2$ for every iteration $i < k$. Using Eq. \eqref{eq:update_hyperplane_linear_optimization_oracle}, for every iteration $i < k$ it holds that 
\begin{align*}
    \dist^2 (\y_{i+1}, \mK)  & = \Vert  \y_{i+1} - \x_{i+1}^* \Vert^2 \leq \Vert \y_{i+1} - \x_{i}^* \Vert^2   \\
    & \leq \Vert \y_{i} - \x_{i}^* \Vert^2 -  4 \epsilon^2 / \Vert \y_{0} - \x_{0} \Vert^2 = \dist^2 (\y_{i}, \mK)  -  4\epsilon^2 / \Vert \y_{0} - \x_{0} \Vert^2.
\end{align*}
Unrolling the recursion and using $\dist^2 (\y_{1}, \mK) \leq \dist^2 (\y_{0}, \mK) $, we have
\begin{align*}
    \dist^2 (\y_{i+1}, \mK)  & \leq \dist^2 (\y_{1}, \mK) - i 4\epsilon^2 / \Vert \y_{0} - \x_{0} \Vert^2 \leq \dist^2 (\y_{0}, \mK)  - i 4\epsilon^2 / \Vert \y_{0} - \x_{0} \Vert^2\\
    & \leq \Vert \y_{0} - \x_{0} \Vert^2 - i 4\epsilon^2 / \Vert \y_{0} - \x_{0} \Vert^2,
\end{align*}
Then, after at most $k-1 = \left(\Vert \y_{0} - \x_{0} \Vert^2 (\Vert \y_{0} - \x_{0} \Vert^2 - \epsilon)\right)/ 4\epsilon^2$ iterations, we obtain $\dist^2 (\y_{k}, \mK) \leq \epsilon$, which by using Lemma \ref{lemma:SH-FW}, implies that the next iteration will be the last one, and the returned points $\x,\y$ will indeed satisfy$\Vert \x - \y \Vert^2 \leq 3\epsilon$, as required.
\end{proof}

\subsection{LOO-based algorithms for the full-information setting}

We are now ready to fully detail our algorithm for the full information setting using a LOO, Algorithm \ref{alg:LOO-BOGD}, and analyze its regret and oracle complexity. The algorithm combines the OGD without feasibility algorithm, Algorithm \ref{alg:OGD-WF}, and the LOO-based infeasible projection oracle given in Algorithm \ref{alg:CIP-FW}. Since  each invokation of Algorithm \ref{alg:CIP-FW} may call (through Algorithm \ref{alg:SH-FW} the LOO several times, Algorithm \ref{alg:LOO-BOGD} considers the iterations in blocks of $K$ disjoint iterations ($K$ is a parameter to be determined in the analysis), and uses the same prediction for the entire block. Thus, a single call to the infeasible projection oracle, Algorithm \ref{alg:CIP-FW}, is made on each block. Finally, we note that for more practical considerations, the update to predictions of the algorithm is delayed in such a way that, at the end of each block $m$, the algorithm does not need to wait until the prediction for the next block $m+1$ will be computed but, it is already computed during the course of block $m$.

\begin{algorithm}[!ht]
\KwData{horizon $T$, feasible set $\mK$, block size $K$, update steps $\{ \eta_m \}_{m=1}^{T/K}$, error tolerances $\{ \epsilon_m \}_{m=1}^{T/K}$.}
$\x_0,\x_1 \gets $ arbitrary points in $\mK$.\\
$\Tilde{\y}_0 \gets \x_0, \y_1 \gets \Tilde{\y}_0, \Tilde{\y}_1 \gets \x_1$.\\
\For{$~ t = 1,\ldots,K ~$}{
    Play $\x_{0} $ and observe $f_{t}(\x_{0})$\\
    Set $\tilde{\nabla}_t \in \partial f_t(\tilde{\y}_{0})$ and update $\y_{t+1} = \y_{t} - \eta_1 \tilde{\nabla}_t$ 
}
\For{$~ m = 2,\ldots,\frac{T}{K} ~$}{
    Let $(\x_{m},\tilde{\y}_{m})\in\mK\times{}R\ball$ be the output of Algorithm \ref{alg:CIP-FW} when called with set $\mK$, feasible point $\x_{m-2}$, initial point $\y_{(m-1)K+1}$, and tolerance $\epsilon_m$
    (execute \textbf{in parallel} to the following \textbf{for} loop over $s$)\\
    Set $\y_{(m-1)K+1} = \Tilde{\y}_{m-1}$\\
    \For{$~ s = 1,\ldots,K ~$}{
        Play $\x_{m-1} $ and observe $f_{t}(\x_{m-1})$ \tcc*{$t = (m-1)K+s$}
        Set $\tilde{\nabla}_t \in \partial  f_t(\tilde{\y}_{m-1})$ and update $\y_{t+1} = \y_{t} - \eta_m  \tilde{\nabla}_t$ 
        }
    \textbf{Note:} $\y_{mK+1} = \Tilde{\y}_{m-1} - \eta_m \sum_{t=(m-1)K+1}^{mK} \tilde{\nabla}_t$. 
}
\caption{Blocked Online Gradient Descent with LOO (LOO-BOGD) }\label{alg:LOO-BOGD}
\end{algorithm}

\begin{theorem}\label{thm:LOO-BOGD}   
Setting  $\eta_m = \eta = (R / G_f)T^{-\frac{3}{4}}$, $\epsilon_m = \epsilon= 60 R^2 T^{-\frac{1}{2}}$ for all $m\geq 1$, and  $ K = 5T^{\frac{1}{2}}$
in Algorithm \ref{alg:LOO-BOGD}, guarantees that the adaptive regret is upper bounded by
\begin{align*}
    \sup_{ I = [s,e] \subseteq [T]} \left\{ \sum_{t=s}^{e} f_t(\x_{t}) - \min_{\x_I \in \mK} \sum_{t=s}^{e} f_t(\x_I) \right\} \leq 20 G_f R T^{\frac{1}{2}} +  20 G_f R  T^{\frac{3}{4}},
\end{align*}
and that the overall number of calls to the LOO is upper bounded by 
\begin{align*}
    N_{calls} & \leq   T.
\end{align*}
\end{theorem}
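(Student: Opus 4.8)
The plan is to read Algorithm \ref{alg:LOO-BOGD} as a block-level instance of online gradient descent without feasibility (Algorithm \ref{alg:OGD-WF}), whose infeasible projections are produced by Algorithm \ref{alg:CIP-FW}. Fix an interval $I=[s,e]$ and let $\x_I\in\argmin_{\x\in\mK}\sum_{t=s}^e f_t(\x)$. Throughout block $m$ (iterations $(m-1)K+1,\dots,mK$) the algorithm plays the feasible point $\x_{m-1}$ while forming subgradients $\tilde{\nabla}_t$ at the infeasible point $\tilde{\y}_{m-1}$. Writing $\g_m:=\sum_{t\in\text{block }m}\tilde{\nabla}_t$, the two facts I would extract from Lemma \ref{lemma:CIP-FW} are: (i) the closeness guarantee $\Vert\x_m-\tilde{\y}_m\Vert^2\le 3\epsilon$ (with value $0$ for the initial blocks); and (ii) that the end-of-block update $\y_{mK+1}=\tilde{\y}_{m-1}-\eta\g_m$ is exactly what feeds Algorithm \ref{alg:CIP-FW} one block later, so that for all $\z\in\mK$ the infeasible-projection property gives $\Vert\tilde{\y}_{m+1}-\z\Vert^2\le\Vert\tilde{\y}_{m-1}-\eta\g_m-\z\Vert^2$.

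For the regret I would first split $I$ into the (at most two) partial blocks containing its endpoints and the full interior blocks. Since all played points and $\x_I$ lie in $\mK\subseteq R\ball$ and the losses are $G_f$-Lipschitz, each of the at most $2K$ iterations inside partial blocks contributes at most $2G_fR$, for a total of $4KG_fR=20G_fRT^{1/2}$ — this is the origin of the $T^{1/2}$ term. On each full block I would write $f_t(\x_{m-1})-f_t(\x_I)=[f_t(\x_{m-1})-f_t(\tilde{\y}_{m-1})]+[f_t(\tilde{\y}_{m-1})-f_t(\x_I)]$. The first bracket is at most $G_f\sqrt{3\epsilon}$ per iteration by (i) and Lipschitzness, contributing at most $TG_f\sqrt{3\epsilon}=6\sqrt5\,G_fRT^{3/4}$. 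For the second bracket I would use convexity to pass to $\g_m^\top(\tilde{\y}_{m-1}-\x_I)$ and then, exactly as in the proof of Lemma \ref{lemma:OGD-WF}, apply (ii) to obtain $\g_m^\top(\tilde{\y}_{m-1}-\x_I)\le\frac{1}{2\eta}\bigl(\Vert\tilde{\y}_{m-1}-\x_I\Vert^2-\Vert\tilde{\y}_{m+1}-\x_I\Vert^2\bigr)+\frac{\eta}{2}\Vert\g_m\Vert^2$.

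The one point requiring care is that relation (ii) links $\tilde{\y}_{m-1}$ to $\tilde{\y}_{m+1}$ rather than to $\tilde{\y}_m$: because the expensive call to Algorithm \ref{alg:CIP-FW} is pipelined one block behind, the infeasible-projection trajectory splits into two interleaved OGD runs over the even- and odd-indexed blocks. Consequently the telescoping over consecutive full blocks collapses in steps of two and leaves two positive boundary terms $\Vert\tilde{\y}_{p}-\x_I\Vert^2+\Vert\tilde{\y}_{p+1}-\x_I\Vert^2\le 8R^2$ instead of one, merely doubling the usual constant. Combining this with $\Vert\g_m\Vert\le KG_f$ and at most $T/K$ blocks bounds the second bracket by $\frac{4R^2}{\eta}+\frac{\eta TKG_f^2}{2}=4G_fRT^{3/4}+\tfrac52 G_fRT^{3/4}$. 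Summing the three contributions and using $6\sqrt5+6.5<20$ yields the claimed $20G_fRT^{1/2}+20G_fRT^{3/4}$.

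For the oracle complexity I would count LOO calls block by block. Algorithm \ref{alg:CIP-FW} is invoked on blocks $m=2,\dots,T/K$ with feasible point $\x_{m-2}$ and initial point $\y_{(m-1)K+1}=\tilde{\y}_{m-2}-\eta\g_{m-1}$, so the governing squared distance satisfies $\Vert\x_{m-2}-\y_{(m-1)K+1}\Vert\le\Vert\x_{m-2}-\tilde{\y}_{m-2}\Vert+\eta\Vert\g_{m-1}\Vert\le\sqrt{3\epsilon}+\eta KG_f$. With the chosen parameters $(\sqrt{3\epsilon}+\eta KG_f)^2=O(\epsilon)$, so by Lemma \ref{lemma:CIP-FW} each invocation runs for $O(1)$ outer iterations, each of which (by Lemma \ref{lemma:SH-FW}) makes at most $\lceil 27R^2/\epsilon-2\rceil=O(T^{1/2})$ LOO calls; multiplying by the $T/K=T^{1/2}/5$ blocks gives $O(T)$, and substituting $\epsilon=60R^2T^{-1/2}$, $\eta=(R/G_f)T^{-3/4}$, $K=5T^{1/2}$ drives the product strictly below $T$. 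The main obstacle is precisely this constant bookkeeping: the three tolerances are tuned so that the per-block drift stays a small multiple of $\epsilon$, keeping the number of Frank--Wolfe restarts per block constant and the total LOO count within the budget $T$, while simultaneously balancing the two $T^{3/4}$ regret terms.
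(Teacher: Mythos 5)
Your proposal is correct and follows essentially the same route as the paper's proof: the same decomposition into played-point-versus-infeasible-point and infeasible-point-versus-comparator terms, the same $4KG_fR$ handling of the two partial boundary blocks, the same key observation that the one-block-delayed (pipelined) call to Algorithm \ref{alg:CIP-FW} yields a two-step interleaved telescoping over even/odd blocks (hence $4R^2/\eta$ rather than $2R^2/\eta$), and the same block-by-block LOO count driven by $\Vert \x_{m-2} - \y_{(m-1)K+1} \Vert \leq \sqrt{3\epsilon} + \eta K G_f$. The only differences are cosmetic bookkeeping (you peel off the partial blocks before splitting the loss difference, whereas the paper absorbs them into Lemma \ref{lemma:LOO-BOGD-WF}), and your constants match the paper's.
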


Before proving the theorem we need an additional lemma.

\begin{lemma}\label{lemma:LOO-BOGD-WF}
Let $\{ \Tilde{\y}_m \}_{m=2}^{\frac{T}{K}-1}\subset  R\ball$ be as in Algorithm \ref{alg:LOO-BOGD} when ran with some block size $K$, for some positive integer $K$, and step-sizes $\eta_m = \eta >0$ for all $m\geq 1$ . It holds that
\begin{align*}
    & \sup\limits_{ I = [s,e]\subseteq [T] } \bigg{\{} \sum_{t=s}^{e} f_t(\Tilde{\y}_{t}) - \min\limits_{\x_I \in \mK} \sum_{t=s}^{e} f_t(\x_I) \bigg{\}}  \leq \frac{ \eta  }{2} K G_f^2 T +   4 R K G_f  + \frac{ 4R^2}{\eta }.
\end{align*}
\end{lemma}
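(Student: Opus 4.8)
The plan is to analyze Algorithm \ref{alg:LOO-BOGD} at the level of \emph{blocks}: each block behaves like a single step of Online Gradient Descent without feasibility (Algorithm \ref{alg:OGD-WF}) run on the aggregated block loss, so that the full-block part of any interval's regret is governed by Lemma \ref{lemma:OGD-WF}(1), while the at most two blocks that an interval covers only partially are paid for crudely by Lipschitzness.

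First I would unpack the bookkeeping. Set $F_m(\x) := \sum_{t=(m-1)K+1}^{mK} f_t(\x)$. Block $m$ is driven by the single point $\tilde{\y}_{m-1}$ (with $\tilde{\y}_0 = \x_0$, $\tilde{\y}_1 = \x_1$), so under the convention $\tilde{\y}_t := \tilde{\y}_{m-1}$ for $t$ in block $m$ we have $\sum_{t=(m-1)K+1}^{mK} f_t(\tilde{\y}_t) = F_m(\tilde{\y}_{m-1})$, and the aggregated subgradient $\mathbf{G}_m := \sum_{t=(m-1)K+1}^{mK}\tilde{\nabla}_t$ satisfies $\mathbf{G}_m \in \partial F_m(\tilde{\y}_{m-1})$ and $\|\mathbf{G}_m\| \leq K G_f$. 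Tracing the accumulator $\y$ together with the in-parallel (delayed) call to Algorithm \ref{alg:CIP-FW}, the value fed to the infeasible projection producing $\tilde{\y}_{m+1}$ is $\y_{mK+1} = \tilde{\y}_{m-1} - \eta \mathbf{G}_m$, so the block iterates obey
\begin{align*}
\tilde{\y}_{m+1} = \mathcal{O}_{IP}\big(\mK,\ \tilde{\y}_{m-1} - \eta \mathbf{G}_m\big), \qquad m \geq 1 .
\end{align*}
Crucially the index shifts by two, so the iterates form \emph{two interleaved} runs of Algorithm \ref{alg:OGD-WF} (the even- and odd-indexed $\tilde{\y}$'s), each with step size $\eta$, losses $F_m$, and oracle $\mathcal{O}_{IP}$ realized by Algorithm \ref{alg:CIP-FW}. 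By Lemma \ref{lemma:CIP-FW} each $\tilde{\y}_m \in R\ball$, and the defining infeasible-projection inequality $\|\tilde{\y}_{m+1} - \z\|^2 \leq \|\tilde{\y}_{m-1} - \eta\mathbf{G}_m - \z\|^2$ holds for all $\z \in \mK$, which is exactly what Lemma \ref{lemma:OGD-WF} requires.

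Next, fix $I = [s,e]$ with minimizer $\x_I \in \argmin_{\x\in\mK}\sum_{t=s}^e f_t(\x)$, and let $I'$ be the union of the blocks fully contained in $I$. I would split $\sum_{t=s}^e (f_t(\tilde{\y}_t) - f_t(\x_I))$ into the $I'$-part and the $I\setminus I'$-part. The set $I \setminus I'$ consists of at most two partially covered blocks, hence at most $2K$ rounds; since $\tilde{\y}_t, \x_I \in R\ball$ and $f_t$ is $G_f$-Lipschitz, each contributes at most $2RG_f$, for a total of $4RKG_f$. For $I'$, convexity of $F_m$ gives $\sum_{t \in I'}(f_t(\tilde{\y}_t) - f_t(\x_I)) = \sum_{m}(F_m(\tilde{\y}_{m-1}) - F_m(\x_I))$ over the blocks of $I'$; applying Lemma \ref{lemma:OGD-WF}(1) with comparator $\x_I$ \emph{separately} to the two interleaved runs (the blocks of $I'$ of each parity form a contiguous sub-interval of the corresponding run) yields two starting-distance terms each at most $\frac{(2R)^2}{2\eta}$, i.e.\ $\frac{4R^2}{\eta}$ in total, plus gradient terms $\frac{\eta}{2}\sum_m \|\mathbf{G}_m\|^2 \leq \frac{\eta}{2}\cdot\frac{T}{K}\cdot(KG_f)^2 = \frac{\eta}{2}KG_f^2 T$. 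Summing the three contributions and taking the supremum over $I$ gives the stated bound.

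The step I expect to be the main obstacle is the bookkeeping of the delayed, in-parallel update: one must verify that block $m$ is driven by $\tilde{\y}_{m-1}$, that the stored accumulator equals $\tilde{\y}_{m-1} - \eta\mathbf{G}_m$ precisely when the projection for $\tilde{\y}_{m+1}$ is computed, and hence that the iterates decouple into two interleaved OGD-without-feasibility sequences. This parity structure is what forces the telescoping (equivalently, the application of Lemma \ref{lemma:OGD-WF}) to be done once per parity, producing the factor two in the $4R^2/\eta$ term; overlooking it is the easiest route to an incorrect constant. Everything after the correct identification --- the Lipschitz bound on the two partial blocks and the standard OGD telescoping --- is routine given Lemmas \ref{lemma:OGD-WF} and \ref{lemma:CIP-FW}.
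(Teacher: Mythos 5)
Your proposal is correct and is essentially the paper's own proof: the same decomposition into fully-covered blocks and at most two partially-covered blocks (paid for by Lipschitzness/Cauchy--Schwarz at $2RG_f$ per round, giving $4RKG_f$), the same identification that $\tilde{\y}_{m+1}$ is an infeasible projection of $\tilde{\y}_{m-1} - \eta\sum_{t\in\mT_m}\tilde{\nabla}_t$, and the same shift-by-two telescoping that leaves two boundary terms of $\frac{(2R)^2}{2\eta}$ each, yielding $\frac{4R^2}{\eta}$. The only difference is packaging: you invoke Lemma \ref{lemma:OGD-WF} as a black box on the two interleaved parity subsequences, whereas the paper re-derives the per-block step inequality inline and telescopes it directly --- the underlying computation is identical.
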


\begin{proof}
Denote $\mT_m = \{ (m-1)K+1, \dots, mK \}$ for every $m \in [T/K]$. Since for every $m \in [T/K]$, $\Tilde{\y}_{m+1}$ is the output of Algorithm \ref{alg:CIP-FW} when called with the input $\y_{mK+1}$, we have from Lemma \ref{lemma:CIP-FW} that  $\forall \x \in \mK: ~ \Vert \Tilde{\y}_{m+1} - \x \Vert^2 \leq \Vert \y_{mK+1} - \x \Vert^2 $. Note also that  $\y_{mK+1} = \Tilde{\y}_{m-1} -  \eta \sum_{t \in \mT_{m}}  \Tilde{\nabla}_t$, where $\Tilde{\nabla}_t \in \partial f_t(\Tilde{\y}_{m-1})$. Thus,  we have that 
\begin{align*}
    \forall\x\in\mK: \quad \Vert \Tilde{\y}_{m+1}  - \x \Vert^2 & \leq \Vert  \y_{mK+1} - \x \Vert^2 = \left\Vert  \Tilde{\y}_{m-1} -  \eta \sum\limits_{t \in \mT_{m}} \Tilde{\nabla}_t -\x \right\Vert^2\\
    & \leq \left\Vert \Tilde{\y}_{m-1} - \x \right\Vert^2 + \eta^2 K^2 G_f^2 - 2 \eta \sum\limits_{t \in \mT_{m}} \Tilde{\nabla}_t^\top (\Tilde{\y}_{m-1} - \x),
\end{align*}
where in the last inequality we have used the assumption that for all $t\in [T]$ and $\x \in R\ball$ it holds $ \Vert \nabla f_t(\x) \Vert  \leq G_f $.

Rearranging, we have for every block $m$ that 
\begin{align}
    \sum\limits_{t \in \mT_{m}}  \Tilde{\nabla}_t^\top & (\Tilde{\y}_{m-1} - \x)  \leq  \frac{ \left\Vert   \Tilde{\y}_{m-1} - \x \right\Vert^2}{2\eta} - \frac{ \Vert \Tilde{\y}_{m+1} - \x \Vert^2}{2\eta} + \frac{\eta  }{2} K^2 G_f^2. \label{eq:BOGD-WF_one_iteration_bound}
\end{align}
Fix some interval $[s,e], 1\leq s \leq e\leq T$. We define two scalars $m_s$ and $m_e$ which are set to the smallest block index and the largest block index that are fully contained in the interval $[s,e]$, respectively.  Recall that for a certain block $m$, all iterations $t \in \mT_{m}$ share the same prediction $\Tilde{\y}_{m-1}$.
Thus,  for every $\x \in \mK$ we have that
\begin{align*}
    \sum\limits_{t =s}^{e}  \Tilde{\nabla}_t^\top  \left( \Tilde{\y}_{m(t)-1} - \x \right)  \leq  &  \sum\limits_{t =s}^{m_{s-1}K}  \Tilde{\nabla}_t^\top ( \Tilde{\y}_{m_{s-2}} - \x) + \sum\limits_{ m = m_s}^{m_e} \sum\limits_{ t \in \mT_m }   \Tilde{\nabla}_t^\top ( \Tilde{\y}_{m-1} - \x)   \\
    &  + \sum\limits_{t =m_e K +1}^{e}\Tilde{\nabla}_t^\top ( \Tilde{\y}_{m_{e}} - \x).
\end{align*}
Using the Cauchy-Schwarz inequality, recalling that $\Tilde{\y}_{m} \in R \ball$ for all $m$, and $\Vert{\nabla{}f_t(\z)}\Vert \leq G_f$ for all $t\geq 1$ and $\z\in\R\ball$, we have that $\Tilde{\nabla}_t^\top ( \Tilde{\y}_{m(t)} - \x) \leq 2 G_f R $ for every $t\geq 1$ and $\x \in \mK$. Using Eq.\eqref{eq:BOGD-WF_one_iteration_bound}, and this last observation, we have that for every $\x\in\mK$,
\begin{align*}
    \sum\limits_{t =s}^{e} \Tilde{\nabla}_t^\top  \left( \Tilde{\y}_{m(t)-1} - \x \right) \leq & \sum\limits_{m=m_s}^{m_e} \left(  \frac{ \left\Vert   \Tilde{\y}_{m-1} - \x \right\Vert^2}{2\eta} - \frac{ \Vert \Tilde{\y}_{m+1} - \x \Vert^2}{2\eta} + \frac{K^2 \eta G_f^2 }{2}\right) + 4 K G_f R.
\end{align*}
Since for every $t \in [T]$, $ f_t (\cdot) $ is convex in $R \ball$, we have that
\begin{align*}
\forall\x\in\mK:~  \sum\limits_{t =s}^{e} f_t \left(\Tilde{\y}_{m(t)-1}\right) -  f_t(\x) \leq & \frac{ 4R^2}{\eta} + \frac{ \eta  }{2} K G_f^2 T +  4 K G_f R,
\end{align*}
and thus the lemma follows.
\end{proof}

\begin{proof}[Proof of Theorem \ref{thm:LOO-BOGD}]
 Denote $m(t) = \left\lceil \frac{t}{K} \right\rceil$ and $\nabla_t \in \partial f_t(\x_{m(t)-1})$ for all $t\in[T]$. Fix some interval $[s,e], 1\leq s\leq e\leq T$, and fix some minimizer $\x_I^* \in \argmin_{\x \in \mK} \sum_{t=s}^{e} f_t(\x)$. From the convexity of $ f_t(\cdot) $ for every $t \in [T]$, we have that for every $\x \in R\ball$ it holds that
\begin{align}
    \sum_{t=s}^{e} f_t  \left( \x_{m(t)-1} \right) - f_t(\x  ) & = \sum_{t=s}^{e} f_t \left( \x_{m(t)-1} \right) - f_t \left( \Tilde{\y}_{m(t)-1} \right) + f_t \left( \Tilde{\y}_{m(t)-1} \right) - f_t(\x) \nonumber \\
    & \leq  \sum_{t=s}^{e} \nabla_t^\top \left( \x_{m(t)-1} - \Tilde{\y}_{m(t)-1} \right) + \sum_{t=s}^{e} f_t \left( \Tilde{\y}_{m(t)-1} \right) - f_t(\x). \label{eq:LOO-BOGD_full_regret_arbitrary}
\end{align}
Using Lemma \ref{lemma:CIP-FW}, for every block $m$, Algorithm \ref{alg:CIP-FW} returns points $(\x_{m},\Tilde{\y}_{m})\in\mK\times{}R\ball$ such that $ \Vert \x_{m} - \Tilde{\y}_{m} \Vert^2 \leq 3 \epsilon$ (recall that $\epsilon_m=\epsilon$ for every $m\in[T/K]$). Since for every $t\geq 1$, $f_t(\cdot) $ is $G_f-$Lipschitz over $R\ball$, from both observations and using the Cauchy-Schwarz inequality, for every $t\in[T]$ we have  that, 
\begin{align}
     \nabla_t^\top \left( \x_{m(t)-1} - \Tilde{\y}_{m(t)-1} \right) & \leq  G_f \left\Vert \x_{m(t)-1} - \Tilde{\y}_{m(t)-1} \right\Vert   \leq   G_f \sqrt{3\epsilon}. \label{eq:LOO-BOGD_x_y_regret}
\end{align}
Since Eq.\eqref{eq:LOO-BOGD_full_regret_arbitrary} holds for any interval $[s,e]$, using Eq.\eqref{eq:LOO-BOGD_x_y_regret} and the fact that $\sup_{x} \{ f_1(\x) + f_2(\x) \} \leq \sup_{x} \{ f_1(\x) \} + \sup_{x} \{ f_2(\x) \}$, we have that
\begin{align*}
    \sup\limits_{I=[s,e]\subseteq[T]} \Bigg{\{} \sum_{t=s}^{e} f_t\left(\x_{m(t)-1}\right) - \sum_{t=s}^{e} f_t(\x_I^*) \Bigg{\}}  \leq & \sup\limits_{I=[s,e]\subseteq[T]} \bigg{\{} \sum_{t=s}^{e} f_t\left(\Tilde{\y}_{m(t)-1}\right) - \sum_{t=s}^{e} f_t(\x_I^*) \bigg{\}} \\
& +  G_f  \sqrt{3\epsilon}  T .
\end{align*}
Using Lemma \ref{lemma:LOO-BOGD-WF}, we have that
\begin{align*}
    \sup\limits_{I=[s,e]\subseteq[T]} \bigg{\{} \sum_{t=s}^{e} f_t(\Tilde{\y}_{m(t)-1}) - \min\limits_{\x_I \in \mK} \sum_{t=s}^{e} f_t(\x_I) \bigg{\}}   \leq  4RG_fK + \frac{ 4R^2}{\eta}  + \frac{  G_f^2  }{2} K \eta T.
\end{align*}
Combining the last two equations and plugging-in the values of $\epsilon, \eta, K$ stated in the theorem, we obtain the adaptive regret bound stated in the theorem. 

We now move on to upper-bound the overall number of calls to the linear optimization oracle. Recall that on each block $m \in [2, \dots, T/K]$, the call to Algorithm \ref{alg:CIP-FW} returns points $(\x_m, \Tilde{\y}_m)\in\mK\times{}R\ball$ which satisfy $\Vert \x_{m} - \Tilde{\y}_{m} \Vert^2 \leq 3\epsilon $, and Algorithm \ref{alg:LOO-BOGD} updates $\y_{mK+1} = \Tilde{\y}_{m-1} - \eta \sum_{t=(m-1)K+1}^{mK}  \tilde{\nabla}_t$. 
Thus, the points  $\x_{m-1}, \y_{mK+1}$ which are the input sent to Algorithm \ref{alg:CIP-FW} on the following block $m+1$ satisfy:
\begin{align*}
    \Vert \x_{m-1} - \y_{mK+1} \Vert \leq \Vert \x_{m-1} - \Tilde{\y}_{m-1} \Vert + \Vert \Tilde{\y}_{m-1} - \y_{mK+1} \Vert \leq \sqrt{3 \epsilon} + K \eta G_f. 
\end{align*}
Using $(a+b)^2 \leq 2a^2 + 2b^2$, we have that for any block $m$,
\begin{align*}
    \Vert \x_{m-1} - \y_{mK+1} \Vert^2 \leq 6 \epsilon + 2 K^2 \eta^2 G_f^2.
\end{align*}
Using Lemma \ref{lemma:CIP-FW}, each call to Algorithm \ref{alg:CIP-FW} on some block $m$ makes at most
\begin{align*}
    \max \bigg{\{} \frac{\Vert \x_{m-1} - \y_{mK+1} \Vert^2 (\Vert \x_{m-1} - \y_{mK+1} \Vert^2 - \epsilon)}{4\epsilon^2}+1 , 1 \bigg{\}}
\end{align*}
iterations. On each iteration of Algorithm \ref{alg:CIP-FW} it calls  Algorithm \ref{alg:SH-FW}, which  according to Lemma \ref{lemma:SH-FW}, makes at most $\left\lceil \frac{27 R^2}{\epsilon } -2 \right\rceil$ calls to a linear optimization oracle. Thus, Algorithm \ref{alg:LOO-BOGD} on block $m$ makes
\begin{align*}
    n_m & \leq \max \bigg{\{} \frac{\Vert \x_{m-1} - \y_{mK+1} \Vert^2 (\Vert \x_{m-1} - \y_{mK+1} \Vert^2 - \epsilon)}{4\epsilon^2}+1  ,  1  \bigg{\}} \frac{27 R^2}{\epsilon }  \\
    & \leq \left( 8.5 +  5.5 \frac{   K^2 \eta^2 G_f^2 }{\epsilon} +  \frac{ K^4 \eta^4 G_f^4  }{\epsilon^2} \right) \frac{27 R^2}{\epsilon }
\end{align*}
calls to linear optimization oracle. Thus, the overall number of calls to a linear optimization oracle is 
\begin{align*}
    N_{calls}  = \sum_{m=1}^{T/K} n_m & \leq  \frac{T}{K} \left( 8.5 +   5.5 \frac{   K^2 \eta^2 G_f^2 }{\epsilon} +  \frac{ K^4 \eta^4 G_f^4  }{\epsilon^2} \right) \frac{27 R^2}{\epsilon }. 
\end{align*}
\end{proof}

\subsubsection{(standard) Regret bound for strongly convex losses}
We now consider the case in which all loss functions are $\alpha$-strongly convex, for some known $\alpha > 0$. In this setting, vanilla OGD does not yield adaptive regret guarantees, and the same goes for our OGD-based approach for constructing new LOO-based projection-free algorithms. Instead, here we show that our approach can recover, up to a logarithmic factor, the state-of-the-art (standard) regret bound for this setting of $O(T^{2/3})$ \cite{kretzu2021revisiting}, i.e. $\tilde{O}(T^{2/3})$. This result is obtained by using Algorithm \ref{alg:LOO-BOGD} with an appropriate choice of parameters.


\begin{theorem}\label{thm:SC-OGD-LOO} Suppose all loss functions $\{f_t\}_{t=1}^T$ are $\alpha$-strongly convex, for some $\alpha > 0$, and that $T\geq 27 (\alpha R/ G_f)^2$. Setting $\epsilon_m =  \left(\frac{20 G_f} { \alpha (m+3)}\right)^2 , \eta_m = \frac{2}{\alpha K m}$, for all $m\geq 1$, and  $K=\left( \frac{\alpha R}{G_f}\right)^\frac{2}{3} T^\frac{2}{3} $ in Algorithm \ref{alg:LOO-BOGD}, guarantees that the (static) regret is upper bounded by
\begin{align*}
    \sum_{t=1}^{T} f_t(\x_t) - \min_{\x \in \mK} \sum_{t=1}^{T} f_t(\x)  &  \leq  36 (G_f^4 R^2 / \alpha)^\frac{1}{3} T^\frac{2}{3} \left(1+ \frac{2}{3}  \ln{ \left(\sqrt{T}G_f/(\alpha R)\right)} \right) ,
\end{align*}
and that the overall number of calls to the linear optimization oracle is upper bounded by
\begin{align*}
    N_{calls}  
    \leq &  0.94  T.
\end{align*}
\end{theorem}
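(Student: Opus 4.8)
The plan is to mirror the structure of the proof of Theorem \ref{thm:LOO-BOGD}, replacing the convex adaptive-regret analysis of the played iterates with a strongly convex \emph{static} regret analysis, and then to re-optimize the block size $K$ and the schedules $\{\eta_m\},\{\epsilon_m\}$ so as to trade off regret against oracle calls. Writing $m(t)=\lceil t/K\rceil$, fixing $\x^*\in\argmin_{\x\in\mK}\sum_{t=1}^T f_t(\x)$ and $\nabla_t\in\partial f_t(\x_{m(t)-1})$, I would split exactly as in \eqref{eq:LOO-BOGD_full_regret_arbitrary},
\[
\sum_{t=1}^T f_t(\x_{m(t)-1})-f_t(\x^*) \le \sum_{t=1}^T \nabla_t^\top(\x_{m(t)-1}-\Tilde{\y}_{m(t)-1}) + \sum_{t=1}^T f_t(\Tilde{\y}_{m(t)-1})-f_t(\x^*).
\]
The first sum is the cost of playing the feasible point $\x_{m-1}$ instead of the infeasible iterate $\Tilde{\y}_{m-1}$; by Lemma \ref{lemma:CIP-FW} each block obeys $\|\x_m-\Tilde{\y}_m\|^2\le 3\epsilon_m$, so Lipschitzness and Cauchy--Schwarz bound it by $K G_f\sum_{m}\sqrt{3\epsilon_m}=\tfrac{20\sqrt3\,KG_f^2}{\alpha}\sum_m\tfrac1{m+3}$, which is $O((KG_f^2/\alpha)\ln(T/K))$.

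The crux is the second sum, for which I would prove a strongly convex analog of Lemma \ref{lemma:LOO-BOGD-WF}. Using the infeasible-projection guarantee $\|\Tilde{\y}_{m+1}-\x\|^2\le\|\y_{mK+1}-\x\|^2$ with the block update $\y_{mK+1}=\Tilde{\y}_{m-1}-\eta_m\sum_{t\in\mT_m}\Tilde{\nabla}_t$ and $\|\sum_{t\in\mT_m}\Tilde{\nabla}_t\|\le KG_f$, I obtain the per-block inequality
\[
\sum_{t\in\mT_m}\Tilde{\nabla}_t^\top(\Tilde{\y}_{m-1}-\x)\le \frac{\|\Tilde{\y}_{m-1}-\x\|^2-\|\Tilde{\y}_{m+1}-\x\|^2}{2\eta_m}+\frac{\eta_m K^2G_f^2}{2}.
\]
Invoking $\alpha$-strong convexity to replace $\Tilde{\nabla}_t^\top(\Tilde{\y}_{m-1}-\x)$ by $f_t(\Tilde{\y}_{m-1})-f_t(\x)+\tfrac\alpha2\|\Tilde{\y}_{m-1}-\x\|^2$ and collecting, the coefficient of $\|\Tilde{\y}_{m-1}-\x\|^2$ becomes $\tfrac1{2\eta_m}-\tfrac{\alpha K}2$. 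The key observation is that the delayed-update structure of Algorithm \ref{alg:LOO-BOGD} links block $m$ to $\Tilde{\y}_{m-1}$ \emph{and} $\Tilde{\y}_{m+1}$, so the recursion decouples into two interleaved (odd/even) chains; the schedule $\eta_m=\tfrac2{\alpha Km}$ is chosen precisely so that $\tfrac1{2\eta_m}-\tfrac{\alpha K}2=\tfrac{\alpha K(m-2)}4=\tfrac1{2\eta_{m-2}}$, which makes each chain telescope. Summing both chains leaves only boundary terms plus $\sum_m \tfrac{\eta_m K^2 G_f^2}2=\tfrac{KG_f^2}\alpha\sum_m\tfrac1m=O((KG_f^2/\alpha)\ln(T/K))$.

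Adding the two contributions gives total regret of order $(KG_f^2/\alpha)\ln(T/K)$; plugging $K=(\alpha R/G_f)^{2/3}T^{2/3}$ and bounding the harmonic sum by $1+\ln(T/K)$ with $\ln(T/K)=\tfrac23\ln(\sqrt T G_f/(\alpha R))$ yields the stated $36(G_f^4R^2/\alpha)^{1/3}T^{2/3}(1+\tfrac23\ln(\sqrt T G_f/(\alpha R)))$, while $T\ge 27(\alpha R/G_f)^2$ guarantees $T/K\ge 3$ so that both chains are nonempty and their boundary terms are absorbed. For the oracle count I would repeat the argument ending the proof of Theorem \ref{thm:LOO-BOGD}: the input to Algorithm \ref{alg:CIP-FW} on block $m$ has squared distance $\le 6\epsilon_m+2K^2\eta_m^2G_f^2$, and $\epsilon_m=(20G_f/(\alpha(m+3)))^2$, $\eta_m=2/(\alpha Km)$ are calibrated so that $K^2\eta_m^2G_f^2/\epsilon_m=(m+3)^2/(100m^2)$ stays bounded; Lemmas \ref{lemma:CIP-FW} and \ref{lemma:SH-FW} then give $O(R^2/\epsilon_m)$ calls on block $m$, and $\sum_{m\le T/K}(m+3)^2\approx\tfrac13(T/K)^3$ combined with $K^3\propto T^2$ collapses the total to $0.94T$.

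I expect the main obstacle to be the two-step (odd/even) telescoping forced by the delayed update: one must track the boundary terms of \emph{both} chains and verify, using $T\ge 27(\alpha R/G_f)^2$ and $\|\Tilde{\y}_m-\x\|\le 2R$, that they are lower order, and one must pin down the numerical constants ($36$ and $0.94$) by carefully estimating the two harmonic-type sums, rather than relying on the cruder $\tfrac1{\alpha t}$ schedule of Lemma \ref{lemma:OGD-WF}.
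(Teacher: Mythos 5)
Your proposal is correct, and its skeleton is the same as the paper's: the identical two-term decomposition, the identical bound $\sqrt{3}\,K G_f\sum_m\sqrt{\epsilon_m}$ on the cost of playing $\x_{m-1}$ instead of $\Tilde{\y}_{m-1}$ via Lemma \ref{lemma:CIP-FW}, and the identical oracle count obtained by bounding the distance of each block's input to Algorithm \ref{alg:CIP-FW} and multiplying the iteration bound of Lemma \ref{lemma:CIP-FW} by the $27R^2/\epsilon_m$ LOO calls of Lemma \ref{lemma:SH-FW} (your ratio checks, e.g.\ $K^2\eta_m^2G_f^2/\epsilon_m=(m+3)^2/(100m^2)$, match the paper's bookkeeping). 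The genuine divergence is in the core strongly convex step, and there your version is the more rigorous one. The paper disposes of it in one line by ``applying Lemma \ref{lemma:OGD-WF} w.r.t.\ prediction in blocks of length $K$'' with step size written there as $\eta_m=\frac{1}{\alpha Km}$, which contradicts the theorem's stated $\eta_m=\frac{2}{\alpha Km}$ and glosses over the fact that the delayed update (where $\Tilde{\y}_{m+1}$ is the infeasible projection of $\Tilde{\y}_{m-1}-\eta_m\sum_{t\in\mT_m}\Tilde{\nabla}_t$) is not the recursion of Algorithm \ref{alg:OGD-WF}. Your explicit odd/even two-chain telescoping, driven by the identity $\frac{1}{2\eta_m}-\frac{\alpha K}{2}=\frac{1}{2\eta_{m-2}}$, is exactly what justifies the stated parameters: each chain is an OGD-without-feasibility run whose clock ticks once per two blocks, whence the factor $2$; with $\eta_m=\frac{1}{\alpha Km}$ the cancellation would instead leave a residual $+\frac{\alpha K}{2}\Vert\Tilde{\y}_{m+1}-\x\Vert^2$ per block, i.e.\ an $O(\alpha R^2 T)$ remainder, so the distinction is substantive, not pedantic. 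Two refinements: your boundary terms are simpler than you anticipate, since $\frac{1}{2\eta_1}-\frac{\alpha K}{2}=-\frac{\alpha K}{4}\le 0$, $\frac{1}{2\eta_2}-\frac{\alpha K}{2}=0$, and the trailing terms carry a minus sign, so both chains contribute non-positively and nothing needs absorbing; and although your gradient term $\frac{KG_f^2}{\alpha}\sum_m\frac{1}{m}$ is twice the paper's (the honest price of the correct step size), the final constant survives because $20\sqrt{3}+1<36$.

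One caveat concerns your stated goal of ``pinning down $0.94$.'' Carried out honestly, your oracle count (and the paper's) gives per block roughly $140\cdot\frac{27R^2}{\epsilon_m}$ LOO calls, which sums to approximately $25\,T$, not $0.94\,T$: in the paper's displayed computation the factor $\frac{27R^2}{\epsilon_m}=27\left(\frac{R\alpha}{20G_f}\right)^2(m+3)^2$ is silently replaced by $\left(\frac{R\alpha}{20G_f}\right)^2(m+3)^2$, dropping the $27$. So along this route you should expect to prove $N_{calls}=O(T)$ with a constant near $25$ (which still matches the qualitative claim of $O(T)$ oracle calls), or else rescale the tolerances $\epsilon_m$, rather than to recover the literal constant $0.94$.
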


%

\begin{proof}[Proof of Theorem \ref{thm:SC-OGD-LOO}]
Recall that according to Lemma \ref{lemma:CIP-FW}, Algorithm \ref{alg:CIP-FW} is an infeasible projection oracle. Denote $m(t) = \lceil T/K \rceil$ and $ \Tilde{\nabla}_t \in \partial f_t(\Tilde{\y}_{m(t)-1})$ for all $t\in[T]$. 
Note that the sum of a $K$ $\alpha$-strongly convex functions, $\sum_{i=1}^{K} f_i(\cdot)$, is $\alpha K$-strongly convex. Since for every $t \in [T]$, $f_t(\cdot)$ is $\alpha$-strongly convex, and since for every $m \in [T/K]$ we have that $\eta_m = \frac{1}{\alpha K m}$, and $\Tilde{\y}_m$ is an infeasible projection of $\y_{(m-1)K+1}$ over $\mK$,  by applying Lemma \ref{lemma:OGD-WF} w.r.t. to prediction in blocks of length $K$ (as in Algorithm \ref{alg:LOO-BOGD}), it follows that for every $\x \in \mK$,
\begin{align*}
    \sum_{t=1}^{T} f_t(\Tilde{\y}_{m(t)-1}) - f_t(\x)  & \leq \sum_{m =1}^{T/K}  \frac{1}{2 \alpha K m} \left\Vert \sum_{t =(m-1)K+1}^{mK} \Tilde{\nabla}_t \right\Vert^2.
\end{align*}
Denote $\nabla_t \in \partial f_t(\x_{m(t)-1})$ for all $t\in[T]$. Since for every $t\in [T]$, $f_t(\cdot)$ is also $G_f-$Lipschitz, using Lemma \ref{lemma:CIP-FW}, we have that
\begin{align*}
    \sum_{t=1}^{T} f_t(\x_{m(t)-1}) - f_t(\Tilde{\y}_{m(t)-1}) \leq \sum_{m =1}^{T/K} \sum_{t =(m-1)K+1}^{mK} \nabla_t^\top (\x_{m-1} - \Tilde{\y}_{m-1}) \leq \sqrt{3} K G_f \sum_{m=2}^{T/K} \sqrt{\epsilon_{m-1}},
\end{align*}
where we have used the fact that $\x_0 = \tilde{\y}_0$. 

Denote $\x^* = \argmin_{\x \in \mK} \sum_{t=1}^{T} f_t(\x)$. Using Lemma \ref{lemma:CIP-FW} for every $m \in [T/K]$, it holds that $\Tilde{\y}_m \in R\ball$ and thus, it holds that $\Vert \Tilde{\nabla}_t \Vert \leq G_f$ for every $t\in[T]$. Combining the last two equations, we have
\begin{align*}
    \sum_{t=1}^{T} f_t(\x_{m(t)-1}) -  f_t(\x^*) & \leq  \sqrt{3} K G_f \sum_{m=1}^{T/K} \sqrt{\epsilon_{m}} + \frac{ K G_f^2 }{2 \alpha } \sum_{m=1}^{T/K} \frac{1}{m}.
\end{align*}
Plugging-in the values of $\{\epsilon_m\}_{m=1}^{T/K}, K$ listed in the theorem, and using the fact that $\sum_{m =1}^{T/K} m^{-1} \leq 1+ \ln{(T/K)}$, we obtain the regret bound listed in the theorem. 

We turn to upper-bound the number of calls to the linear optimization oracle. Recall that for every block $m\in[T/K]$, Algorithm \ref{alg:CIP-FW} returns the points $\x_m, \tilde{\y}_{m}$ such that $\Vert \x_m - \tilde{\y}_{m} \Vert^2 \leq 3\epsilon_m$, and that the update step in Algorithm \ref{alg:LOO-BOGD} is $\y_{(m+1)K+1} = \Tilde{\y}_{m} - \eta_{m+1} \sum_{t=mK+1}^{(m+1)K} \tilde{\nabla}_t$. Thus, for every $m \geq 1$ we have that,
\begin{align*}
    \Vert \x_{m-1} - \y_{mK+1} \Vert \leq \Vert \x_{m-1} - \Tilde{\y}_{m-1} + \Tilde{\y}_{m-1} - \y_{mK+1} \Vert \leq \sqrt{3\epsilon_{m-1}} + \eta_{m} K G_f.
\end{align*}
Using the inequality $(a+b)^2 \leq 2a^2 + 2b^2$, for any $m\geq 1$ we have that
\begin{align}
    \Vert \x_{m-1} - \y_{mK+1} \Vert^4 = \left( \Vert \x_{m-1} - \y_{mK+1} \Vert^2 \right)^2 \leq  72 \epsilon_{m-1}^2 + 8 \eta_{m}^4 K^4 G_f^4. \label{eq:SC-LOO-OGD-FW_dist_of_y_t}
\end{align}
Recall that for every block $m\in[T/K]$, Algorithm \ref{alg:CIP-FW} recives the points $\x_{m-2}, \y_{(m-1)K+1}$ and the tolerance $\epsilon_m$. Using Lemma \ref{lemma:CIP-FW}, each call to Algorithm \ref{alg:CIP-FW} on some block $m+1$ makes at most
\begin{align*}
    \max \bigg{\{} \frac{\Vert \x_{m-1} - \y_{mK+1} \Vert^2 \left( \Vert \x_{m-1} - \y_{mK+1} \Vert^2 - \epsilon_{m+1} \right)}{4\epsilon_{m+1}^2}+1 , 1 \bigg{\}}
\end{align*}
iterations. On each iteration $m+1$ of Algorithm \ref{alg:CIP-FW}, it calls  Algorithm \ref{alg:SH-FW}, which  according to Lemma \ref{lemma:SH-FW}, makes at most $\left\lceil \frac{27 R^2}{\epsilon_{m} } -2 \right\rceil$ calls to a linear optimization oracle. Thus, by using Eq. \eqref{eq:SC-LOO-OGD-FW_dist_of_y_t}, on block $m+1$, Algorithm \ref{alg:LOO-BOGD} makes
\begin{align*}
    n_{m+1} & \leq \max \bigg{\{} \frac{\Vert \x_{m-1} - \y_{mK+1} \Vert^4 }{4\epsilon_{m+1}^2}+1  ,  1  \bigg{\}} \frac{27 R^2}{\epsilon_{m+1} }  \leq  \left( \frac{18 \epsilon_{m-1}^2  }{\epsilon_{m+1}^2} + \frac{2 \eta_{m}^4 K^4 G^4 }{\epsilon_{m+1}^2}+1  \right) \frac{27 R^2}{\epsilon_{m+1} }
\end{align*}
calls to linear optimization oracle. Thus, by plugging-in the values for $\{ \epsilon_m  \}_{m=1}^{T/K}$, $\{ \eta_m \}_{m=1}^{T/K}$ and $K$ listed in the theorem, the overall number of calls to the linear optimization oracle is 
\begin{align*}
    N_{calls}  = \sum_{m=2}^{\frac{T}{K}} n_{m} & \leq \sum_{m=2}^{\frac{T}{K}}   \left( \frac{18 (m+3)^4    }{(m+1)^4} + \frac{ 2 (m+3)^4  }{400^2  (m-1)^4   }+1  \right)  \left(\frac{ R \alpha }{20  G_f} \right)^2 (m+3)^2 \\
& \leq  0.35  \left( R \alpha /  G_f \right)^2 \sum_{m=2}^{\frac{T}{K}}  (m+3)^2,
\end{align*}
where the last inequality is since $\frac{m+3}{m+1} \leq \frac{5}{3}$ and $\frac{m+3}{m-1} \leq 5$, for every $m\geq2$. 

Since $T\geq 3K$, it holds that $\sum_{m =5}^{T/K+3} m^{2}\leq \sum_{m =5}^{2T/K} m^{2}$. Thus, since $ \sum_{m =5}^{2T/K} m^{2} \leq \frac{8}{3} \left( \frac{T}{K} \right)^3 $, and by plugging-in $K = \left( \frac{\alpha R}{G_f}\right)^\frac{2}{3} T^\frac{2}{3}$, it indeed holds $N_{calls} \leq 0.94    T$.

\end{proof}

\subsection{LOO-based algorithm for the bandit setting}
Our algorithm for the bandit information setting using a LOO, Algorithm \ref{alg:LOO-BBGD} (given below),  follows from a simple combination of Algorithm \ref{alg:LOO-BOGD} and the standard technique for bandit optimization pioneered in \cite{Flaxman05}, which generates unbiased estimators for gradients of smoothed versions of the original (unknown) loss functions via random sampling in a small neighbourhood of the feasible point. For this reason, Algorithm \ref{alg:LOO-BBGD} applies the full-information Algorithm  \ref{alg:LOO-BOGD} on a slightly squeezed version of the feasible set --- the set $\mK_{\delta/r} = (1-\delta/r)\mK$, so that the sampled points will remain feasible. We remind the reader that in the bandit setting we make the standard assumption that the loss functions are chosen obliviously, i.e., they are independent of any randomness introduced by the algorithm.

\begin{algorithm}
\KwData{horizon $T$, feasible set $\mK$ with parameters $r,R$, block size $K$, step size $\eta$, smoothing parameter $\delta\in(0,r]$}
$\x_0,\x_1 \gets $ arbitrary points in $\mK_{\delta/r}$\\
$\Tilde{\y}_0 \gets \x_0$, $\y_1 \gets \Tilde{\y}_0, \Tilde{\y}_1 \gets \x_1$.\\
\For{$~ t = 1,\ldots,K ~$}{
    Set $\u_t$ $\sim S^n$ and play $\z_t = \Tilde{\x}_{0} + \delta \u_t$. \\
    Observe $f_{t}(\z_{t})$, set $\g_t = \frac{n}{\delta} f_t(\z_{t}) \u_t$ and update $\y_{t+1} = \y_{t} - \eta \g_t$.
    }
\For{$~ m = 2,\ldots,\frac{T}{K} ~$}{
    Let $(\x_{m},\tilde{\y}_{m})$ be the output  of  Algorithm \ref{alg:CIP-FW} with set $\mK_{\delta/r}$, feasible point $\x_{m-2}$, initial vector $\y_{(m-1)K+1}$, and tolerance $\frac{\delta^2}{3}$ (execute \textbf{in parallel} to  following \textbf{for} loop over $s$)\\
    Set $\y_{(m-1)K+1} = \Tilde{\y}_{m-1}$\\
    \For{$~ s = 1,\ldots,K ~$}{
        Set $\u_t$ $\sim S^n$ and play $\z_t = \Tilde{\x}_{m-1}+ \delta \u_t $. \tcc*{$t = (m-1)K+s$}
        Observe $f_{t}(\z_t)$, set $\g_t = \frac{n}{\delta} f_t(\z_{t}) \u_t$ and update $\y_{t+1} = \y_{t} - \eta \g_t$.  
        }
    \textbf{Note:} $\y_{mK+1} = \Tilde{\y}_{m-1} - \eta \sum_{t=(m-1)K+1}^{mK} \g_t$.
    }
\caption{Blocked Bandit Gradient Descent using Linear Optimization Oracle (LOO-BBGD) }\label{alg:LOO-BBGD}
\end{algorithm}

\begin{theorem}\label{thm:LOO-BBGD}   
Suppose Assumption \ref{ass:bandit} holds. For all $ c > 0 $ such that $\frac{cT^{-1/4}}{r} < 1$, Setting $\eta = \frac{R}{\sqrt{nM}}T^{-\frac{3}{4}},  K=6 nM T^{\frac{1}{2}}, \delta =  c T^{-\frac{1}{4}} $ in Algorithm \ref{alg:LOO-BBGD} guarantees that the adaptive expected regret is upper-bounded as follows
{\small \begin{align*}
    &AER_T = \sup\limits_{ I=[s,e]\subseteq[T]}  \Bigg{\{} \E \left[ \sum_{t=s}^{e} f_t(\z_{t}) \right] - \min\limits_{\x_I \in \mK} \sum_{t=s}^{e} f_t(\x_I)  \Bigg{\}} \leq  \\
    & ~~~~~~~~ \leq \left(4 +\frac{R}{r}\right) G_f c  T^{\frac{3}{4}} + \sqrt{nM} \left( 4R  +  \frac{1}{ \sqrt{6 }} + 3R G_f^2 +  \frac{ R nM}{ 2c^2} \right)  T^{\frac{3}{4}} +   24 R nM \left( \frac{\sqrt{nM}}{c \sqrt{6}} + G_f \right) T^{\frac{1}{2}},
\end{align*}}
and the expected overall number of calls to the linear optimization oracle is upper bounded by 
{\small \begin{align*}
    \E [N_{calls}] & \leq  \frac{27R^2}{ 2 nM c^2 } \left(  \frac{  6^5  R^4   \left(nM\right)^4 }{4  c^8} + \frac{   6^6 R^4    \left(nM\right)^3 G_f^2 }{ 2  c^6}  + \frac{  6^6  R^4  ( nM)^2 G_f^4 }{ 3  c^4}+19   \right)  T.
\end{align*} }
In particular, if $\left( \frac{20R\sqrt{nM}}{r} \right)^4 \leq T$ then, setting $c=20R \sqrt{nM}$, we have
{\small \begin{align*}
    & AER_T \leq   R \sqrt{nM} \left( 80  G_f + 20 G_f  \frac{R}{r} + 4  +  \frac{1}{ 2 R} + 3 G_f^2 +  \frac{1 }{ 4 R^2} \right)  T^{\frac{3}{4}} +   24  nM \left( \frac{1}{2 } + R G_f \right) T^{\frac{1}{2}},
\end{align*}}
and 
{\small\begin{align*}
    \E [N_{calls}] & \leq   \frac{\Tilde{c}}{  (nM)^2  } \left(  \frac{ 1 }{   R^4 } + \frac{ G_f^2 }{ R^2}  +   G_f^4 + 1   \right)  T,
\end{align*}}
where $0 < \Tilde{c} < 1$ is an universal constant. 
\end{theorem}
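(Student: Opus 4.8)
The plan is to reduce the analysis to the full-information machinery already developed for Algorithm \ref{alg:LOO-BOGD}, applied to the $\delta$-smoothed losses $\widehat{f}_{t,\delta}$ over the squeezed set $\mK_{\delta/r}$, and to pay for the reduction with the standard $O(\delta G_f)$-per-step sampling and smoothing errors. First I would fix the filtration $\mF_m=\sigma(\u_1,\dots,\u_{(m-1)K})$ and note that the block structure makes everything computed before block $m$ — in particular the feasible sampling center $\x_{m-1}\in\mK_{\delta/r}$ (the output of Algorithm \ref{alg:CIP-FW}), the infeasible point $\tilde\y_{m-1}$, and $\y_{(m-1)K+1}$ — measurable w.r.t.\ $\mF_m$, while the fresh samples $\{\u_t\}_{t\in\mT_m}$ are independent of $\mF_m$. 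Since the losses are oblivious, Lemma \ref{lemma:hazan_gradient} gives $\E[\g_t\mid\mF_m]=\nabla\widehat{f}_{t,\delta}(\x_{m-1})$ and, crucially, $\{\g_t\}_{t\in\mT_m}$ are conditionally i.i.d.\ given $\mF_m$; Lemma \ref{lemma:bertsekas_grdient} bounds $\|\nabla\widehat{f}_{t,\delta}(\x_{m-1})\|\le G_f$, while $\|\g_t\|\le nM/\delta$ deterministically. Feasibility of the played $\z_t=\x_{m-1}+\delta\u_t$ follows from the observation after Assumption \ref{ass:bandit}, and $\widehat{f}_{t,\delta}$ is convex and $G_f$-Lipschitz over $\mK_{\delta/r}$ by Lemma \ref{lemma:hazan_smooth}.

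For the regret I would fix $I=[s,e]$, a minimizer $\x_I^*\in\mK$, and its squeeze $\x_I^\delta=(1-\delta/r)\x_I^*\in\mK_{\delta/r}$, and split $\E[f_t(\z_t)]-f_t(\x_I^*)$ into: (i) $\E[f_t(\z_t)]-f_t(\x_{m-1})\le G_f\delta$; (ii) $f_t(\x_{m-1})-\widehat{f}_{t,\delta}(\x_{m-1})\le G_f\delta$ (Lemma \ref{lemma:hazan_smooth}); (iii) the smoothed regret $\widehat{f}_{t,\delta}(\x_{m-1})-\widehat{f}_{t,\delta}(\x_I^\delta)$; and (iv) $\widehat{f}_{t,\delta}(\x_I^\delta)-f_t(\x_I^*)\le(1+R/r)G_f\delta$, using $\|\x_I^\delta-\x_I^*\|\le(\delta/r)R$. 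The heart is (iii): I would repeat the block telescoping of Lemma \ref{lemma:LOO-BOGD-WF} with $\g_t$ in place of the true subgradients, but take conditional expectations before invoking convexity. Writing $\g_t^\top(\tilde\y_{m-1}-\x_I^\delta)=\g_t^\top(\tilde\y_{m-1}-\x_{m-1})+\g_t^\top(\x_{m-1}-\x_I^\delta)$ and using $\E[\g_t\mid\mF_m]=\nabla\widehat{f}_{t,\delta}(\x_{m-1})$, the second piece lower-bounds the smoothed regret by convexity, while the first is the price for estimating the gradient at $\x_{m-1}$ rather than at the step-from point $\tilde\y_{m-1}$; it is controlled per block by Cauchy--Schwarz using $\|\tilde\y_{m-1}-\x_{m-1}\|\le\sqrt{3\epsilon_m}=\delta$ (Lemma \ref{lemma:CIP-FW} with $\epsilon_m=\delta^2/3$) together with the conditional second-moment bound $\E\big[\|\sum_{t\in\mT_m}\g_t\|^2\mid\mF_m\big]\le K^2G_f^2+K(nM/\delta)^2$, which follows from conditional independence (mean part $\le(KG_f)^2$, fluctuation part $\le K(nM/\delta)^2$). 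The same variance bound drives the telescoped term $\tfrac{\eta}{2}\sum_m\E\|\sum_{t\in\mT_m}\g_t\|^2$ and, via Jensen, the two boundary (partial) blocks. Plugging in $\eta,K,\delta$ then yields the three groups of the stated bound: the $O(\delta G_f T)=(4+R/r)G_f c\,T^{3/4}$ errors from (i), (ii), (iv) and the mean part of the mismatch; the telescoping terms $\tfrac{4R^2}{\eta}$, $\tfrac{\eta}{2}KG_f^2T$, $\tfrac{\eta}{2}(nM/\delta)^2T$ together with the fluctuation part of the mismatch $\delta\cdot\tfrac{T}{K}\sqrt{K}(nM/\delta)=\tfrac{nMT}{\sqrt K}$ (the $\tfrac{1}{\sqrt6}\sqrt{nM}T^{3/4}$ term); and the boundary terms $O(RKG_f)+O(R\sqrt K\,nM/\delta)$ (the $T^{1/2}$ group).

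For the oracle complexity I would feed the block-$(m{+}1)$ input $(\x_{m-1},\y_{mK+1})$ with $\y_{mK+1}=\tilde\y_{m-1}-\eta\sum_{t\in\mT_m}\g_t$ into Lemma \ref{lemma:CIP-FW}, so that $\|\x_{m-1}-\y_{mK+1}\|\le\delta+\eta\|\sum_{t\in\mT_m}\g_t\|$ and the per-block count is $O(\|\x_{m-1}-\y_{mK+1}\|^4/\epsilon^2)\cdot O(R^2/\epsilon)$ with $\epsilon=\delta^2/3$. Taking expectations, the binding quantity is the fourth moment $\E\|\sum_{t\in\mT_m}\g_t\|^4$, which I would control by a fourth-moment inequality for sums of conditionally independent, norm-bounded vectors, giving $O\big(K^2(nM/\delta)^4+K^3G_f^2(nM/\delta)^2+K^4G_f^4\big)$. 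Scaled by $\eta^4/\epsilon^3$ and summed over the $T/K$ blocks, these three pieces produce exactly the $(nM)^4/c^8$, $(nM)^3G_f^2/c^6$, and $(nM)^2G_f^4/c^4$ contributions, with the remaining additive pieces coming from the ``$+1$'' and ceiling constants in the iteration counts. The final ``in particular'' bounds are then the substitution $c=20R\sqrt{nM}$.

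The main obstacle I anticipate is this variance control rather than the reduction itself: a naive termwise bound $\|\g_t\|\le nM/\delta$ would inflate the telescoping term to $\tfrac{\eta}{2}K(nM/\delta)^2T=\Theta(T^{5/4})$ and the oracle count to $\Theta(T^2)$, so it is essential to exploit within-block conditional independence to replace $K^2(nM/\delta)^2$ by $K(nM/\delta)^2$ in the second moment (and $K^4(nM/\delta)^4$ by $K^2(nM/\delta)^4$ in the fourth). This relies on the block design freezing the center $\x_{m-1}$ throughout block $m$ and computing the next center in parallel. A secondary, delicate point is the careful bookkeeping of the gradient-point mismatch between $\x_{m-1}$ and $\tilde\y_{m-1}$ and the measurability of each iterate, so that every conditional expectation is valid.
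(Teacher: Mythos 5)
Your proposal is correct and follows essentially the same route as the paper's proof of Theorem \ref{thm:LOO-BBGD}: the same decomposition into sampling, smoothing, and squeezing errors plus the smoothed block-OGD regret, the same mismatch term between the sampling center $\x_{m-1}$ and the iterate $\Tilde{\y}_{m-1}$ controlled via Lemma \ref{lemma:CIP-FW}, and the same within-block conditional-independence second- and fourth-moment bounds (the paper's Lemma \ref{lemma:expectation_gradient}) driving both the telescoped variance terms and the expected oracle-complexity count. The key obstacle you identify---replacing the naive $K^2(nM/\delta)^2$ bound by $K(nM/\delta)^2$ via conditional independence---is precisely what Lemma \ref{lemma:expectation_gradient} accomplishes.
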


 Before proving Theorem \ref{thm:LOO-BBGD}, we need an additional lemma.

\begin{lemma} \label{lemma:expectation_gradient}
Fix some interval $\mT = \{\tau+1, \dots, \tau + L\}$ of size $L$, a set of i.i.d. samples $ \{ \u_t\}_{t \in \mT}, \u_t\sim\mS^n$, and some $\y \in (1-\delta/r) \mK = \mK_{\delta/r}$, for some $\delta\in(0,r)$. Define $\g_t = \frac{n}{\delta} f_t(\y + \delta \u_t) \u_t,  t \in \mT$, and let $\widehat{\g}_{\mT} = \sum\limits_{t \in \mT} \g_t$. Then, it holds that
\begin{enumerate}
    \item $\E \left[ \Vert \widehat{\g}_\mT \Vert  \right]^2 \leq \E \left[  \Vert \widehat{\g}_\mT \Vert ^2 \right] \leq L \left(\frac{n M}{\delta}\right)^2 + L^2 G_f^2$.
    \item $\E \left[ \Vert \widehat{\g}_{\mT} \Vert ^4 \right] 
    \leq  3 L^2 \left(\frac{nM}{\delta}\right)^4  + 6L^3 \left(\frac{nM}{\delta}\right)^2 G_f^2 + L^4 G_f^4$.
\end{enumerate}
\end{lemma}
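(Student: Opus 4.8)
The plan is to reduce both bounds to two elementary per-sample facts and then expand. First I would record that, since $\u_t\in\mS^n$ has unit norm and $|f_t|\le M$ on $R\ball\supseteq\y+\delta\ball$ (the containment $\y+\delta\ball\subseteq\mK$ holds because $\y\in\mK_{\delta/r}$ under Assumption~\ref{ass:bandit}), each estimator obeys the almost-sure bound $\Vert\g_t\Vert=\tfrac{n}{\delta}|f_t(\y+\delta\u_t)|\,\Vert\u_t\Vert\le \tfrac{nM}{\delta}=:b$. Second, by Lemma~\ref{lemma:hazan_gradient} the estimator is unbiased for the smoothed gradient, $\E[\g_t]=\nabla\widehat{f}_{t,\delta}(\y)=:\mu_t$, and by Lemma~\ref{lemma:bertsekas_grdient} together with the $G_f$-Lipschitzness of $f_t$ we have $\Vert\mu_t\Vert\le G_f=:a$. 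Crucially, the $\{\g_t\}_{t\in\mT}$ are independent, since the losses are oblivious and the $\u_t$ are i.i.d.

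For Part 1 I would simply expand $\E\Vert\widehat\g_\mT\Vert^2=\sum_{t\in\mT}\E\Vert\g_t\Vert^2+\sum_{t\ne s}\E[\g_t^\top\g_s]$, using independence to replace each cross term by $\mu_t^\top\mu_s$. The diagonal is at most $Lb^2$ and, by Cauchy--Schwarz, the off-diagonal is at most $L(L-1)a^2\le L^2a^2$, which gives the stated bound; the inequality $\E[\Vert\widehat\g_\mT\Vert]^2\le\E[\Vert\widehat\g_\mT\Vert^2]$ is Jensen.

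For Part 2 the approach is the full fourth-moment expansion $\E\Vert\widehat\g_\mT\Vert^4=\sum_{i,j,k,l\in\mT}\E[\langle\g_i,\g_j\rangle\langle\g_k,\g_l\rangle]$, classifying quadruples $(i,j,k,l)$ by the coincidence pattern of their index values and using independence to factor out a mean $\mu$ (of norm $\le a$) for every value sitting in an ``isolated'' slot. I would group the surviving contributions into three families: (i) the all-equal and the $2{+}2$ patterns, each term bounded by $b^4$ with total count $L+3L(L-1)=3L^2-2L\le 3L^2$, yielding $3L^2b^4$; (ii) the $2{+}1{+}1$ patterns, where one value repeats and two appear once, so a factor $\le b^2$ combines with two pulled-out means to give $\le a^2b^2$ per term, with count $\le 6L^3$, yielding $6L^3a^2b^2$; and (iii) the all-distinct pattern, each term $(\mu_i^\top\mu_j)(\mu_k^\top\mu_l)\le a^4$ with count $\le L^4$, yielding $L^4a^4$. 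Equivalently one may write $\widehat\g_\mT=\bar\mu+Y$ with $\bar\mu=\sum_t\mu_t$ and $Y=\sum_t(\g_t-\mu_t)$ centered and independent, and read off $\Vert\bar\mu\Vert^4\le L^4a^4$, the two second-order cross terms $4\E[(\bar\mu^\top Y)^2]+2\Vert\bar\mu\Vert^2\E\Vert Y\Vert^2\le 6L^3a^2b^2$, and $\E\Vert Y\Vert^4\le 3L^2b^4$; this ``$\bar\mu+Y$'' bookkeeping is what makes the coefficients $3,6,1$ transparent.

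The main obstacle is the one family I deliberately omitted above: the $3{+}1$ quadruples, equivalently the odd cross term $4\,\bar\mu^\top\E[Y\Vert Y\Vert^2]=4\,\bar\mu^\top\sum_t\E[(\g_t-\mu_t)\Vert\g_t-\mu_t\Vert^2]$ governed by the third central moments of the estimators. A crude estimate of this term costs an extra $O(L^2ab^3)$, which in the regime $a\ll b$ does not fit inside the three clean terms, and the small leftover slack (e.g.\ the $-2Lb^4$ from $L+3L(L-1)=3L^2-2L$ in family (i)) is of lower order in $L$ and cannot absorb it in general. The delicate step is therefore to argue that this third-moment contribution is negligible relative to the listed terms (or to invoke the mild regime $G_f\lesssim nM/\delta$ under which it is dominated by family (i)), so that the final bound retains exactly the coefficients $3,6,1$. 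Everything else is routine index counting together with the operator-norm inequality $\trace(\Sigma_t\Sigma_s)\le\Vert\Sigma_t\Vert\,\trace(\Sigma_s)$ and Cauchy--Schwarz.
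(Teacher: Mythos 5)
Your Part 1 is correct and coincides with the paper's argument: the almost-sure bound $\Vert\g_t\Vert\le nM/\delta$ on the diagonal, independence plus $\Vert\E[\g_t]\Vert=\Vert\nabla\widehat{f}_{t,\delta}(\y)\Vert\le G_f$ (Lemmas \ref{lemma:hazan_gradient} and \ref{lemma:bertsekas_grdient}) on the off-diagonal, and Jensen. Part 2, however, is not a proof as submitted: the $3{+}1$ family that you ``deliberately omitted'' is a genuine hole, and neither of your proposed escapes can close it, because Lemma \ref{lemma:expectation_gradient} is stated unconditionally --- no relation between $G_f$ and $nM/\delta$ is assumed, so you may not ``invoke the mild regime $G_f\lesssim nM/\delta$'', and ``negligible'' is not an argument. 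Your own estimate shows the problem: writing $a=G_f$ and $b=nM/\delta$, the crude bound on the $4L(L-1)$ quadruples of type $3{+}1$ is of order $L^2ab^3$, and (e.g.\ when $a\approx b/L$ and $L$ is large) this exceeds the total slack $2Lb^4+(18L^2-12L)a^2b^2+O(L^3)a^4$ left over by your families (i)--(iii), so the constants $3,6,1$ cannot be recovered this way.

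The missing idea, which completes your expansion with exactly the stated constants, is that the third moment $\E[\Vert\g_t\Vert^2\g_t]$ is itself a smoothed gradient: since $\Vert\u_t\Vert=1$, we have $\Vert\g_t\Vert^2\g_t=(n/\delta)^2\cdot\tfrac{n}{\delta}f_t^3(\y+\delta\u_t)\u_t$, and the identity of Lemma \ref{lemma:hazan_gradient} (which requires no convexity) gives $\E[\Vert\g_t\Vert^2\g_t]=(n/\delta)^2\nabla\widehat{(f_t^3)}_\delta(\y)$. Since $|f_t|\le M$ and $f_t$ is $G_f$-Lipschitz, the function $f_t^3$ is $3M^2G_f$-Lipschitz, and smoothing preserves Lipschitz constants, so $\Vert\E[\Vert\g_t\Vert^2\g_t]\Vert\le 3b^2a$. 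Hence each $3{+}1$ term $\E[\Vert\g_i\Vert^2\g_i^\top\g_l]=\E[\Vert\g_i\Vert^2\g_i]^\top\E[\g_l]$ is at most $3a^2b^2$, and the whole family contributes at most $(12L^2-12L)a^2b^2$, which fits inside the slack $6L^3-6L(L-1)(L-2)=18L^2-12L$ (in units of $a^2b^2$) that your family (ii) count leaves unused. For comparison, the paper avoids the quadruple bookkeeping by squaring the identity $\Vert\widehat{\g}_\mT\Vert^2=\sum_t\Vert\g_t\Vert^2+\sum_{i\ne j}\g_i^\top\g_j$; your $3{+}1$ terms then live in the cross term $2\E[(\sum_t\Vert\g_t\Vert^2)(\sum_{i\ne j}\g_i^\top\g_j)]$, which the paper bounds by $2Lb^2\sum_{i\ne j}\E[\g_i^\top\g_j]$, i.e., by pulling the almost-sure bound on the first factor outside the expectation. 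Since $\sum_{i\ne j}\g_i^\top\g_j$ can be negative, that step as written is not justified either, and it is repaired by the same third-moment bound above. So you have located precisely the delicate point of this lemma --- but locating an obstacle is not resolving it, and as submitted your Part 2 does not establish the stated inequality.
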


\begin{proof}
We start with the first item. It holds that
\begin{align}
    \E \left[ \Vert \widehat{\g}_{\mT} \Vert ^2 \right] & = \E \left[ \left\Vert \sum_{t\in\mT} \g_t \right\Vert ^2 \right]  =  \E \left[\sum_{t\in\mT}  \Vert \g_t \Vert ^2 + \sum_{(i,j)\in\mT^2,i\neq j}\g_i^{\top} \g_j \right] \nonumber \\
    & = \E \left[\sum_{t\in\mT}  \Vert  \g_t \Vert ^2 \right] + \sum_{(i,j)\in\mT^2,i\neq j} \E \left[ \g_i^{\top} \g_j \right] . \label{eq:second_moment_block_gradient_estimator}
\end{align}	
Since $\max_{\x \in \mK}  \vert f_t(\x) \vert  \leq M$, we  have that $  \Vert \g_t \Vert  \leq \frac{n}{\delta}  \vert f_t(\y + \delta \u_t) \vert   \Vert \u_t \Vert  \leq \frac{nM}{\delta}$, and thus,
\begin{align}
    \sum_{t\in\mT} \Vert  \g_t \Vert ^2  \leq  L \left(\frac{n M}{\delta}\right)^2. \label{eq:upper_bound_of_dependent_gradient_estimator}
\end{align}

Using Lemma \ref{lemma:bertsekas_grdient} we have that for all $t\in\mT$, $\left\Vert{\E [\g_t|\y]}\right\Vert = \left\Vert{\nabla {\widehat{f}}_{t,\delta} (\y)}\right\Vert \leq G_f$. Furthermore, since, conditioned on $\y$, $\forall i \neq j$,  $\g_i$, $\g_j$ are independent random vectors, we have that
\begin{align}
    \sum_{(i,j)\in\mT^2,i\neq j} \E \left[ \g_i^{\top} \g_j \right] = \sum_{(i,j)\in\mT^2,i\neq j}\E\left[{ \E [ \g_i|\y]^{\top} \E [\g_j |\y]}\right] \leq \left(L^2 - L \right) G_f^2. \label{eq:upper_bound_of_independent_gradient_estimator}
\end{align}
Combining Equations \eqref{eq:second_moment_block_gradient_estimator}, \eqref{eq:upper_bound_of_dependent_gradient_estimator}, and \eqref{eq:upper_bound_of_independent_gradient_estimator}, we obtain the first part of the lemma:
\begin{align*}
  \E \left[  \Vert \widehat{\g}_\mT \Vert  \right]^2  \leq  \E \left[ \Vert \widehat{\g}_{\mT} \Vert ^2 \right]  \leq  L \left(\frac{n M}{\delta}\right)^2 + \left(L^2 - L \right) G_f^2,
\end{align*}
where the first inequality follows from Jensen's inequality. 

We move on to prove the second part of the lemma. It holds that 
{\small \begin{align*}
    & \E \left[ \Vert \widehat{\g}_{\mT} \Vert ^4 \right]  = \E \left[ \left\Vert \sum\limits_{t \in \mT_m}  \g_t \right\Vert^4 \right]  = \E \left[ \left( \sum_{t\in\mT}  \Vert \g_t \Vert^2  + \sum_{(i,j)\in\mT^2,i\neq j}\g_i^{\top} \g_j \right)^2 \right] \\
    & ~~~~~~~~ = \E \left[ \left( \sum_{t\in\mT} \Vert \g_t \Vert^2 \right)^2 \right] + 2 \E \left[ \left( \sum_{t\in\mT}  \Vert \g_t \Vert^2 \right) \left( \sum_{(i,j)\in\mT^2,i\neq j}\g_i^{\top} \g_j \right) \right] + \E \left[ \left( \sum_{(i,j)\in\mT^2,i\neq j}\g_i^{\top} \g_j \right)^2 \right].
\end{align*}}
Using Eq. \eqref{eq:upper_bound_of_dependent_gradient_estimator} and Eq. \eqref{eq:upper_bound_of_independent_gradient_estimator} we have,
\begin{align*}
    \E \left[ \Vert \widehat{\g}_{\mT} \Vert ^4 \right] &\leq  L^2 \left(\frac{nM}{\delta}\right)^4  + 2  L \left(\frac{nM}{\delta}\right)^2    \sum_{(i,j)\in\mT^2,i\neq j} \E \left[ \g_i^{\top} \g_j  \right] + \E \left[ \left( \sum_{(i,j)\in\mT^2,i\neq j}\g_i^{\top} \g_j \right)^2 \right] \\
    &\leq  L^2 \left(\frac{nM}{\delta}\right)^4  + 2  L \left(\frac{nM}{\delta}\right)^2  (L^2 -L)G_f^2 + \E \left[ \left( \sum_{(i,j)\in\mT^2,i\neq j}\g_i^{\top} \g_j \right)^2 \right] .
\end{align*}
Now we upper-bound the last term in the RHS. Note that, the expectation argument has $(L^2-L)^2$ summands. Since conditioned on $\y$, for every four indices $i\neq j\neq k\neq l $,  the random vectors $\g_i$, $\g_j$, $\g_k$, $\g_l$ are independent, we have that
\begin{align}\label{eq:lem:moments:1}
   \E \left[ \g_i^{\top} \g_j \g_k^{\top} \g_l \right] =  \E\left[{ \E [ \g_i^{\top}|\y] \E [\g_j |\y]\E [ \g_k^{\top}|\y] \E [\g_l |\y]}\right] \leq G_f^4,
\end{align}
where the last inequality follows, as before, from Lemma \ref{lemma:bertsekas_grdient} which yields $\left\Vert{\E [\g_t|\y]}\right\Vert  \leq G_f$ for all $t\in\mT$. 

In the case of three different indices $i\neq j\neq k$,  we have
\begin{align}\label{eq:lem:moments:2}
    \E \left[ \g_j^{\top} \g_i \g_k^{\top} \g_j \right] & = \E \left[ \g_j^{\top} \g_i \g_j^{\top} \g_k \right]  = \E \left[ \g_i^{\top} \g_j \g_k^{\top} \g_j \right] =  \E \left[ \g_i^{\top} \g_j \g_j^{\top} \g_k \right] \nonumber \\ 
    & = \E\left[ \E [ \g_i|\y]^{\top} \E [\g_j \g_j^{\top}|\y] \E [\g_k |\y] \right] \leq  \E\left[ \Vert \E [ \g_i|\y] \Vert \Vert \E [\g_j \g_j^{\top}|\y] \Vert \Vert \E [\g_k |\y] \Vert \right] \nonumber \\
    & \leq \E  [\Vert \g_j \Vert^2 ] G_f^2 \leq \frac{n^2M^2G_f^2}{\delta^2}.
\end{align}
There are $L(L-1)(L-2)(L-3)$ summands with four  different indices, and $2(L^2-L)$ summands with exactly two different indices. Thus, since there are overall $(L^2-L)^2$ summands, there are $4L^3-12L^2+8L$ summands wiht exactly three different indices. Thus, using Lemma \ref{lemma:bertsekas_grdient}, Eq. \eqref{eq:lem:moments:1}, and Eq. \eqref{eq:lem:moments:2},  it holds that
\begin{align*}
    \E \left[ \left( \sum_{(i,j)\in\mT^2,i\neq j}\g_i^{\top} \g_j \right)^2 \right] \leq 2L^2 \left(\frac{nM}{\delta}\right)^4 + 4L^3 \left(\frac{nM}{\delta}\right)^2 G_f^2 + L^4 G_f^4.
\end{align*}
Thus,  we obtain that
\begin{align*}
    \E \left[ \Vert \widehat{\g}_{\mT} \Vert ^4 \right] 
    &\leq  3 L^2 \left(\frac{nM}{\delta}\right)^4  + 6L^3 \left(\frac{nM}{\delta}\right)^2 G_f^2 + L^4 G_f^4.
\end{align*}
\end{proof}

\begin{proof} [Proof of Theorem \ref{thm:LOO-BBGD}]
First, we establish that Algorithm \ref{alg:CIP-FW} indeed plays feasible points.  Using Lemma \ref{lemma:CIP-FW}, for each block $m \in [2, \dots, T/K]$, Algorithm \ref{alg:CIP-FW} returns $\x_{m} \in \mK_{\delta/r} = (1-\delta/r)\mK$. Thus, for every iteration $t \in [T]$ it indeed holds that $\z_t \in \mK$. 

We now turn to prove  the upper-bound to the adaptive expected regret. Throughout the proof of the regret bound let us fix some interval $I=[s,e], 1\leq s\leq e\leq T$. We start with an upper bound on $\E \left[ \sum_{t=s}^{e} \widehat{f}_{\delta,t} \left(\x_{m(t)-1}\right) - \widehat{f}_{\delta,t} (\x) \right]$ which holds for every $\x \in \mK_{\delta/r}$. We will first take a few preliminary steps. For all $t \in \left[T\right]$, denote the history of all predictions and gradient estimates by $\mathcal{F}_t = \{ \x_1,  \dots, \x_{m(t-1)}, \g_1, \dots, \g_{t-1} \}$, where $m(t) := \left\lceil \frac{t}{K} \right\rceil$. Since for all $t\in[T]$, $\g_t$ is an unbiased estimator of $\nabla {\widehat{f}}_{t,\delta} \left(\x_{m(t)-1}\right)$, i.e., $\E \left[ \g_t|\mathcal{F}_t \right] = \nabla {\widehat{f}}_{t,\delta} \left(\x_{m(t)-1}\right)$, and $\E \left[ \x_{m(t)-1} | \mathcal{F}_t \right] = \x_{m(t)-1}$, we have that for every  $t\in[T]$ and $\x \in \mK_{\delta/r}$ it holds that,
\begin{align}
    \E \left[ \g_t^{\top} \left(\x_{m(t)-1} - \x\right) \right] & = \E \left[ \E \left[  \g_t | \mathcal{F}_{t} \right] ^{\top}   (\x_{m(t)-1} -   \x) \right] =  \E \left[\nabla \widehat{f}_{\delta,t} (\x_{m(t)-1})^\top (\x_{m(t)-1} -   \x) \right]. \label{eq:bf_unbiased_gradient_esimator_loo}
\end{align}
For every block $m \in [T/K]$, denote $\mT_m = \{ (m-1)K+1, \dots, mK \}$. Using Lemma \ref{lemma:CIP-FW}, we have that for every block $m \in [T/K]$, the point $\Tilde{\y}_{m+2}$ is an infeasible projection of $\y_{(m+1)K+1}$ over $\mK_{\delta/r}$. Since $\y_{(m+1)K+1} = \Tilde{\y}_{m} -  \eta \sum_{t \in \mT_{m+1}} \g_t$, we have that for every block $m\in[T/K]$ and $\x \in \mK_{\delta/r}$, it holds that
\begin{align*}
    \Vert \Tilde{\y}_{m+2}  - \x \Vert^2   & \leq \Vert  \y_{(m+1)K+1} - \x \Vert^2 = \left\Vert  \Tilde{\y}_{m} -  \eta \sum\nolimits_{t \in \mT_{m+1}}  \g_t  -\x \right\Vert^2 \\
    & = \left\Vert   \Tilde{\y}_{m} - \x \right\Vert^2  +   \eta^2 \left\Vert \sum\nolimits_{t \in \mT_{m+1}}  \g_t \right\Vert^2 - 2 \eta  \sum\nolimits_{t \in \mT_{m+1}}  \g_t^\top (\Tilde{\y}_{m} - \x).
\end{align*}
Rearranging, we have for every block $m$ that,
\begin{align}
    \sum\nolimits_{t \in \mT_{m+1}}  \g_t^\top  (\Tilde{\y}_{m} - \x)  \leq  \frac{ \left\Vert   \Tilde{\y}_{m} - \x \right\Vert^2}{2\eta} - \frac{ \Vert \Tilde{\y}_{m+2} - \x \Vert^2}{2\eta} + \frac{\eta}{2} \left\Vert \sum\nolimits_{t \in \mT_{m+1}}  \g_t \right\Vert^2. \label{eq:LOO-BBGD_one_iteration_bound}
\end{align}
Denote by $m_s$ and $m_e$ the smallest and the largest index of block that is fully contained in the interval $[s,e]$, respectively, i.e., $\{ (m_s-1)K+1, \dots , m_e K \} = \{ \mT_{m_s}, \dots, \mT_{m_e} \} \subseteq [s,e]$. Recall that  all iterations $t \in \mT_{m}$ share the same prediction $\Tilde{\y}_{m-1}$. Since $\{s, \dots, m_{s-1}K \} \subset \mT_{m_{s-1}}$ and $\{m_{e}K+1, \dots, e \} \subset \mT_{m_{e+1}}$, for every $\x \in \mK_{\delta/r}$ we have that
\begin{align*}
    \E \left[ \sum\limits_{t =s}^{e}  \g_t^\top  \left(\Tilde{\y}_{m(t)-1} - \x\right) \right]  = & \E \left[  \sum\limits_{t =s}^{m_{s-1}K} \g_t^\top  (\Tilde{\y}_{m_{s-2}} - \x) \right] + \E \left[  \sum\limits_{m =m_s}^{m_e}  \sum\limits_{t \in \mT_{m}}  \g_t^\top  (\Tilde{\y}_{m-1} - \x) \right] \\ 
    &  
    + \E \left[  \sum\limits_{t =m_e K +1}^{e}  \g_t^\top  (\Tilde{\y}_{m_{e}} - \x) \right].
\end{align*}
Using the Cauchy-Schwarz inequality, Lemma \ref{lemma:CIP-FW} (which yields that $\Tilde{\y}_m \in R \ball$), and Lemma \ref{lemma:expectation_gradient}, with the fact that for all $a,b \in \reals^+ : ~ \sqrt{a+b} \leq \sqrt{a} + \sqrt{b}$, we obtain the bound: $\E \left[  \sum_{t =m K +1}^{(m+1)K}  \g_t^\top  (\Tilde{\y}_{m(t)} - \x) \right] \leq 2R K \left( \frac{nM}{\delta \sqrt{K}}+ G_f \right) $ for every $t\in[T]$, and $\x \in \mK_{\delta/r}$. Combining Eq.\eqref{eq:LOO-BBGD_one_iteration_bound}, and this bound, we have that
\begin{align*}
    \E \left[ \sum\limits_{t =s}^{e}  \g_t^\top  \left(\Tilde{\y}_{m(t)-1} - \x\right) \right] \leq 4 R K \left( \frac{nM}{\delta \sqrt{K}} + G_f \right) + \frac{4R^2}{\eta} + \frac{\eta  }{2}   \sum\limits_{m =m_s}^{m_e} \E \left[ \left\Vert \sum\limits_{t \in \mT_{m}}  \g_t \right\Vert^2 \right].
\end{align*}
Combining Eq.\eqref{eq:bf_unbiased_gradient_esimator_loo} and Lemma \ref{lemma:expectation_gradient}, we have that for every $\x \in \mK_{\delta/r}$ it holds that,
\begin{align*}
    \sum_{t=s}^{e} \E \left[ \nabla \widehat{f}_{\delta,t} \left(\x_{m(t)-1}\right)^\top \left(\x_{m(t)-1} - \x\right) \right] \leq  & \E \left[ \sum\limits_{t =s}^{e}  \g_t^\top  \left( \x_{m(t)-1} - \Tilde{\y}_{m(t)-1} \right) \right] + \frac{4 R^2}{\eta}  \nonumber\\
    &  + 4R K \left( \frac{nM}{\delta \sqrt{K}} + G_f\right) + \frac{\eta  }{2} K \left( \frac{ n^2 M^2}{\delta^2 K} +  G_f^2 \right) T. 
\end{align*}
From Lemma \ref{lemma:CIP-FW}, we have that for every block $m$, $\Vert \x_{m-1} - \Tilde{\y}_{m-1} \Vert \leq \delta$. Using Lemma    \ref{lemma:expectation_gradient} with the fact that for all $a,b \in \reals^+ : ~ \sqrt{a+b} \leq \sqrt{a} + \sqrt{b}$, we have that $\E \left[  \Vert \sum_{t \in \mT_{m}}   \g_t \Vert \right] \leq \sqrt{K} \left(\frac{n M}{\delta}\right) + K G_f $ for every block $m$. Plugging-in these two observations, we have that for every $\x \in \mK_{\delta/r}$ it holds that,  
\begin{align*}
    \sum_{t=s}^{e} \E \left[ \nabla \widehat{f}_{\delta,t} \left(\x_{m(t)-1}\right)^\top \left(\x_{m(t)-1} - \x\right) \right] \leq  &  \left( \frac{n M}{ \sqrt{K }}  + \delta G_f \right) T + 4R K \left( \frac{nM}{\delta \sqrt{K}} + G_f\right)\\
    & + \frac{4R^2}{\eta} + \frac{\eta  }{2} K \left( \frac{ n^2 M^2}{\delta^2 K} +  G_f^2 \right) T. 
\end{align*}
Since for every $t \in [T]$, $ f_t (\cdot)$ is convex in $\mK $, using Lemma \ref{lemma:hazan_smooth} it holds that the smoothed function $\widehat{f}_{t,\delta} (\cdot)$ is convex in $\mK_{\delta/r}$ for all $t\in[T]$. Thus, for every $\x \in \mK_{\delta/r}$ we obtain that,
\begin{align}
    \E \left[ \sum_{t=s}^{e} \widehat{f}_{\delta,t} \left(\x_{m(t)-1}\right) - \widehat{f}_{\delta,t} (\x) \right] \leq & \left( \frac{n M}{ \sqrt{K }}  + \delta G_f \right) T + 4R K\left( \frac{nM}{\delta \sqrt{K}} + G_f \right) \nonumber \\
    &  + \frac{4R^2}{\eta} + \frac{\eta K }{2} \left( \frac{ n^2 M^2}{K \delta^2} + G_f^2 \right) T. \label{eq:LOO-BBGD-FW_regret_infeasible_projection}
\end{align}
Let us now denote by $\x_I^*$ a feasible minimizer w.r.t. the interval $I=[s,e]$, i.e. $\x_I^* \in \argmin\limits_{\x \in \mK} \sum_{t=s}^{e} f_t(\x)$. Denote also $\Tilde{\x}_{I}^* = \left( 1-\frac{\delta}{r} \right) \x_I^* \in \mK_{\delta/r} $. It holds that,
{\small \begin{align}
    \E \left[\sum_{t=s}^{e} f_t(\z_{t})  -  f_t(\x_I^*) \right] = & \E \left[ \sum_{t=s}^{e} f_t(\z_{t}) - f_t\left(\x_{m(t)-1}\right) + f_t\left(\x_{m(t)-1}\right) - f_t(\Tilde{\x}_{I}^*)    +   f_t(\Tilde{\x}_{I}^*) -  f_t(\x_I^*) \right]. \label{eq:LOO-BBGD-FW_full_regret}
\end{align}}
Since for every $t \in [T]$, $\z_t = \x_{m(t)-1} + \delta \u_t$, and $f_t$ is $G_f$-Lipschitz over $\mK$, we have that
\begin{align*}
    \E \left[ \sum_{t=s}^{e} f_t(\z_{t}) - f_t(\x_{m(t)-1}) \right] & =  \sum_{t=s}^{e} \E \left[ f_t(\x_{m(t)-1} + \delta \u_t) - f_t(\x_{m(t)-1}) \right]  \\
& \leq \sum_{t=s}^{e} \E \left[G_f  \delta \Vert \u_t \Vert \right] \leq  G_f \delta T,
\end{align*}
and since $\Vert \x_I^* \Vert \leq R$, we have
\begin{align*}
    \E \left[ \sum_{t=s}^{e} f_t(\Tilde{\x}_{I}^*) -  f_t(\x_I^*) \right] = \sum_{t=s}^{e} f_t(\Tilde{\x}_{I}^*) -  f_t(\x_I^*) \leq \sum_{t=s}^{e} G_f  \Vert \Tilde{\x}_{I}^* - \x_I^* \Vert \leq  \frac{R G_f}{r} \delta T.
\end{align*}
Using Lemma \ref{lemma:hazan_smooth} and Eq.\eqref{eq:LOO-BBGD-FW_regret_infeasible_projection}, we have
\begin{align*}
    & \E \left[ \sum_{t=s}^{e} f_t\left(\x_{m(t)-1}\right) - f_t(\Tilde{\x}_{I}^*) \right] =  \E \left[ \sum_{t=s}^{e} f_t\left(\x_{m(t)-1}\right) - \widehat{f}_{\delta,t}\left(\x_{m(t)-1}\right) \right]  \\
    &~~~~~~~~~~~~~~~~~~~~~~~~~~~~~~~~~~~~~~~  + \E \left[ \sum_{t=s}^{e} \widehat{f}_{\delta,t} \left(\x_{m(t)-1}\right) - \widehat{f}_{\delta,t} (\Tilde{\x}_{I}^*) \right] + \E \left[ \sum_{t=s}^{e} \widehat{f}_{\delta,t}(\Tilde{\x}_{I}^*) - f_t(\Tilde{\x}_{I}^*) \right] \\
    &~~~~~~~~ \leq  2 \delta G_f T + \left( \frac{n M}{ \sqrt{K }}  + \delta G_f \right) T + 4 R K \left( \frac{nM}{\delta \sqrt{K}} + G_f\right) + \frac{4R^2}{\eta} + \frac{\eta  }{2} K \left( \frac{ n^2 M^2}{\delta^2 K} +  G_f^2 \right) T.
\end{align*}
Combining the last three equations and Eq.\eqref{eq:LOO-BBGD-FW_full_regret}, we obtain that 
{ \begin{align*}
    \E \left[\sum_{t=s}^{e} f_t(\z_{t})  -   f_t(\x_I^*) \right]  \leq & \left(3+\frac{R}{r}\right) G_f \delta  T + \left( \frac{n M}{ \sqrt{K }}  + \delta G_f \right) T  + 4 R K \left( \frac{nM}{\delta \sqrt{K}} + G_f \right)\\ 
    & + \frac{4R^2}{\eta}  + \frac{\eta  }{2} \left( \frac{ n^2 M^2}{\delta^2} + K G_f^2 \right) T.
\end{align*}}
Plugging-in the values of $K, \eta, \delta$ listed in the theorem, we obtain the regret bound of the theorem.

We now turn to prove the upper-bound on the expected overall number of calls to the LOO. We start with find an upper-bound on $\E \left[ \Vert \x_{m-1} - \y_{mK+1} \Vert^4 \right]$. Since $(a+b)^4 \leq 8(a^4 + b^4)$, we have
\begin{align*}
    \E \left[ \Vert \x_{m-1} - \y_{mK+1} \Vert^4 \right] & = \E \left[  \Vert \x_{m-1} - \Tilde{\y}_{m-1} + \Tilde{\y}_{m-1} - \y_{mK+1} \Vert^4 \right] \\
    & \leq 8\E \left[  \Vert \x_{m-1} - \Tilde{\y}_{m-1}\Vert^4  + \Vert \Tilde{\y}_{m-1} - \y_{mK+1} \Vert^4 \right].
\end{align*}
Using Lemma \ref{lemma:CIP-FW}, for every block $m$, Algorithm \ref{alg:CIP-FW} returns points $\x_{m},\Tilde{\y}_{m}$ such that $ \Vert \x_{m} - \Tilde{\y}_{m} \Vert^2 \leq \delta^2 $. Since Algorithm \ref{alg:LOO-BBGD} updates $\y_{mK+1} = \Tilde{\y}_{m-1} - \eta \sum_{t \in \mT_m}  \g_t$, using Lemma \ref{lemma:expectation_gradient}, we have that
\begin{align*}
    \E \left[ \Vert \x_{m-1} - \y_{mK+1} \Vert^4 \right] & \leq 8 \left( \delta^4 + \eta^4 \E \left[ \left\Vert \sum\limits_{t \in \mT_m}  \g_t \right\Vert^4 \right] \right) \\
    & \leq  8 \left( \delta^4 +  3 \eta^4 K^2 \left(\frac{nM}{\delta}\right)^4  + 6 \eta^4 K^3 \left(\frac{nM}{\delta}\right)^2 G_f^2 + \eta^4 K^4 G_f^4\right).
\end{align*}
Using Lemma \ref{lemma:CIP-FW}, for every block $m$, Algorithm \ref{alg:CIP-FW} makes at most
{\small \begin{align*}
    \max \Bigg{\{} \frac{ \Vert \x_{m-1} - \y_{mK+1} \Vert^2 \left (\Vert \x_{m-1} - \y_{mK+1} \Vert^2 - \frac{\delta^2}{3} \right) }{ 4\left(\frac{\delta^2}{3}\right)^2}+1 , 1 \Bigg{\}}
\end{align*}}
iterations, where $\delta^2/3$ is the error tolerance. On each iteration of Algorithm \ref{alg:CIP-FW}, it calls  Algorithm \ref{alg:SH-FW}, which in turn, by Lemma \ref{lemma:SH-FW}, makes at most $\left\lceil \frac{27R^2}{\delta^2 / 3 }-2 \right\rceil$ calls to a linear optimization oracle. Thus, the call to Algorithm \ref{alg:CIP-FW} in block $m$ executes
\begin{align*}
    \E [ n_m ] & \leq \E \left[ \frac{ \Vert \x_{m-1} - \y_{mK+1} \Vert^2 \left (\Vert \x_{m-1} - \y_{mK+1} \Vert^2 - \frac{\delta^2}{3} \right) }{ 4\left(\frac{\delta^2}{3}\right)^2}+1 \right] \frac{81R^2}{ \delta^2 }  \\
    & \leq  \left( \frac{18 \eta^4 K^2 \left(  3 \left(\frac{nM}{\delta}\right)^4  + 6K \left(\frac{nM}{\delta}\right)^2 G_f^2 + K^2 G_f^4\right) }{ \delta^4}+19 \right) \frac{81R^2}{ \delta^2 }
\end{align*}
calls to linear optimization oracle in expectation. Thus, the expected overall number of calls to a linear optimization oracle is bounded by
{\small \begin{align*}
    \E \left[ N_{calls} \right]  = \sum_{m=1}^{T/K} \E \left[  n_m \right] & \leq  \frac{T}{K } \left(  \frac{ 54 \eta^4  K^2   \left(nM\right)^4 }{ \delta^8} + \frac{ 108 \eta^4  K^3 \left(nM\right)^2 G_f^2 }{ \delta^6}  + \frac{ 18 \eta^4 K^4 G_f^4 }{ \delta^4}+19   \right) \frac{81R^2}{ \delta^2 }.
\end{align*}}
It only remains to plug-in the value of $K, \eta,\delta$ listed in the theorem.
\end{proof}

\section{Projection-free Algorithms via a Separation Oracle}
In this section we discuss our SO-based algorithms.
Similarly to our LOO-based algorithms, here also we will begin by showing how to efficiently compute infeasible projections using the SO, and then we will combine it with the OGD without feasibility approach (Algorithm \ref{alg:OGD-WF}), to obtain our algorithms. More concretely, our SO-based algorithms  will be based on the following idea, which is slightly different than the one used for our LOO-based algorithms. Note that under Assumption \ref{ass:bandit}, for any $\delta\in[0,1]$ it holds that $\mK_{\delta} = (1-\delta)\mK\subseteq\mK$. Thus, our approach will be to fix some $\delta\in(0,1]$ and to treat $\mK_{\delta}$ as if it was the feasible set, and compute infeasible projections w.r.t. to it, while ensuring that at all times, the points played by the algorithms remain within the enclosing feasible set $\mK$. 

For clarity, throughout this section we introduce the notation $\mK_{\delta_1,\delta_2} = (1-\delta_1)(1-\delta_2)\mK = \{(1-\delta_1)(1-\delta_2)\x~|~\x\in\mK\}$, for any $(\delta_1,\delta_2)\in[0,1]^2$.

\subsection{Efficient (close) infeasible projection via a SO}\label{sec:so-oracle}

We now turn to detail the main ingredient in our SO-based online algorithms --- efficient infeasible projections onto the set $\mK_{\delta,\delta'/r}$, for any given $(\delta,\delta')\in[0,1]\times[0,r]$, using the SO. 

As in our LOO-based construction, the first step will be to show how the SO of $\mK$ can be used to construct separating hyperplanes w.r.t. $\mK_{\delta,\delta'/r}$, which will in turn be used to ``pull'' infeasible points closer to the set, while maintaining the infeasible projection property.

\begin{lemma} \label{lemma:SH} Suppose Assumption \ref{ass:bandit} holds. Fix $(\delta,\delta')\in(0,1)\times[0,r)$, and let $\y\in\reals^n$ such that $\frac{\y}{1-\delta'/r} \notin \mK$. Let $\g\in\reals^n$ be the output of the SO of $\mK$ w.r.t. $\frac{\y}{1-\delta'/r}$, i.e., for all $\x \in \mK$, $\left(\frac{\y}{1-\delta'/r} -\x\right)^\top \g > 0$. Then, it holds that, 
\begin{align*}
     \forall \z \in \mK_{\delta, \delta'/r}   :\quad (\y - \z)^\top \g >  \delta (r-\delta') \Vert \g \Vert.
\end{align*}
\end{lemma}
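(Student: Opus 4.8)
The plan is to combine two facts: the separation inequality returned by the SO, and the containment $r\ball\subseteq\mK$ from Assumption \ref{ass:bandit}. Throughout, let me abbreviate $a := 1-\delta'/r$, which lies in $(0,1]$ because $\delta'\in[0,r)$. The oracle is queried at the \emph{rescaled} point $\y/a$, so its guarantee reads $\left(\frac{\y}{a}-\x\right)^{\top}\g>0$, i.e. $\frac{\y^\top\g}{a}>\x^\top\g$ for every $\x\in\mK$. Note that $\g\neq\vz$ since it is a genuine separating direction.

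First I would use the inscribed ball to get a one-sided bound on $\y^\top\g$ itself. Since $r\ball\subseteq\mK$, the point $\x = r\g/\Vert\g\Vert$ is feasible, and plugging it into the separation inequality gives $\frac{\y^\top\g}{a}>r\Vert\g\Vert$, hence
\begin{align*}
\y^\top\g > a\, r\Vert\g\Vert = (r-\delta')\Vert\g\Vert \geq 0,
\end{align*}
where positivity uses $\delta'<r$. This is exactly the quantity that will appear on the right-hand side of the claim, so this step is really the heart of the argument.

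Next I would exploit the multiplicative structure of the squeezed set. Any $\z\in\mK_{\delta,\delta'/r}$ can be written as $\z=(1-\delta)\,a\,\x$ for some $\x\in\mK$. The separation inequality gives $a\,\x^\top\g<\y^\top\g$, and multiplying by $(1-\delta)>0$ yields $\z^\top\g<(1-\delta)\y^\top\g$. Subtracting and then invoking the bound from the previous paragraph,
\begin{align*}
(\y-\z)^\top\g > \y^\top\g-(1-\delta)\y^\top\g = \delta\,\y^\top\g > \delta(r-\delta')\Vert\g\Vert,
\end{align*}
which is precisely the desired conclusion.

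There is no serious obstacle here; the proof is short once one sees the right decomposition. The one point that genuinely requires care is bookkeeping the rescaling by $1/a$: the SO is called at $\y/a$ rather than at $\y$, and it is exactly this rescaling that lets the single separation inequality be used twice — once against the inscribed-ball point to lower-bound $\y^\top\g$, and once against the scaled representative of $\z$ to absorb the squeezing factor. One must also keep the inequality directions straight when multiplying through by the positive scalars $a$ and $1-\delta$, and use $\y^\top\g>0$ (guaranteed by $\delta'<r$) so that the final comparison $\delta\,\y^\top\g>\delta(r-\delta')\Vert\g\Vert$ is valid.
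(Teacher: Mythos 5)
Your proof is correct, but it takes a different route than the paper. The paper's proof goes through Observation \ref{obs:radius_ball_in_k_delta}: it shows that the shifted point $\z + \delta(r-\delta')\hat{\g}$ (with $\hat{\g}=\g/\Vert\g\Vert$) lies in $\mK_{\delta'/r}$, and then applies the separation inequality once at that single point, so the term $\delta(r-\delta')\Vert\g\Vert$ appears geometrically from the ball containment. You instead apply the separation inequality twice at two explicit points of $\mK$ --- the inscribed-ball point $r\g/\Vert\g\Vert$ to get $\y^\top\g > (r-\delta')\Vert\g\Vert$, and the representative $\x$ of $\z=(1-\delta)(1-\delta'/r)\x$ to get $\z^\top\g < (1-\delta)\y^\top\g$ --- and combine the two linearly. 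The two arguments are in fact linearly equivalent: since $\z+\delta(r-\delta')\hat{\g} = (1-\delta'/r)\bigl[(1-\delta)\x + \delta\, r\hat{\g}\bigr]$, the paper's test point is just the scaled convex combination of your two test points, and separation is linear in the test point. What your version buys is economy: it bypasses Observation \ref{obs:radius_ball_in_k_delta} entirely (together with the fact from \cite{HazanBook} invoked in its proof), using only the definition of the squeezed set and Assumption \ref{ass:bandit}. What the paper's version buys is a reusable geometric statement (points of $\mK_{\delta,\delta'/r}$ have a ball of radius $\delta(r-\delta')$ inside $\mK_{\delta'/r}$) and a presentation consistent with the ``pull toward the set'' picture used throughout Section \ref{sec:so-oracle}. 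One cosmetic remark: your closing comment that positivity of $\y^\top\g$ is needed for the final comparison is redundant --- the step $\delta\,\y^\top\g > \delta(r-\delta')\Vert\g\Vert$ only needs $\y^\top\g > (r-\delta')\Vert\g\Vert$ and $\delta>0$; positivity is a byproduct, not a hypothesis.
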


Before proving  the lemma we require an additional observation.
\begin{observation}\label{obs:radius_ball_in_k_delta}
Suppose Assumption \ref{ass:bandit} holds and fix some $(\delta,\delta')\in[0,1]\times[0,r]$. Then, for all $\z\in\mK_{\delta,\delta'/r}=(1-\delta)(1-\delta'/r)\mK$, it holds that $\z+\delta(r-\delta')\ball\subseteq\mK_{\delta'/r}$.
\end{observation}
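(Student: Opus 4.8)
The plan is to reduce the claim to a convexity argument by rewriting the $\ell_2$-ball containment as membership in the scaled feasible set, where the inscribed-ball Assumption \ref{ass:bandit} does all the work. First I would fix $\z\in\mK_{\delta,\delta'/r}$ and write it as $\z = (1-\delta)(1-\delta'/r)\x$ for some $\x\in\mK$; I would also take an arbitrary $\v$ with $\Vert\v\Vert\leq\delta(r-\delta')$ and aim to show $\z+\v\in\mK_{\delta'/r}=(1-\delta'/r)\mK$, i.e. that $\frac{\z+\v}{1-\delta'/r}\in\mK$ (treating the degenerate cases $\delta=0$ or $\delta'=r$ separately, where both sides collapse and the statement is immediate).

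The key step is the algebraic simplification that converts the radius of the perturbation ball. Since $1-\delta'/r=(r-\delta')/r$, I would compute
\begin{align*}
\frac{\z+\v}{1-\delta'/r} = (1-\delta)\x + \frac{\v}{1-\delta'/r}, \qquad \left\Vert\frac{\v}{1-\delta'/r}\right\Vert \leq \frac{\delta(r-\delta')}{(r-\delta')/r} = \delta r.
\end{align*}
Thus the displacement vector $\w:=\v/(1-\delta'/r)$ satisfies $\Vert\w\Vert\leq\delta r$, so $\w\in\delta r\ball$. Invoking Assumption \ref{ass:bandit} ($r\ball\subseteq\mK$), the rescaled vector $\w/\delta$ lies in $r\ball\subseteq\mK$.

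Finally I would write $\frac{\z+\v}{1-\delta'/r} = (1-\delta)\x + \delta\cdot(\w/\delta)$, which is a convex combination of the two points $\x\in\mK$ and $\w/\delta\in\mK$ with weights $1-\delta$ and $\delta$; convexity of $\mK$ then gives membership in $\mK$, completing the proof. I do not anticipate a genuine obstacle here, as the argument is elementary; the only point requiring care is the bookkeeping of the scaling factors so that the perturbation radius $\delta(r-\delta')$ in the target set $\mK_{\delta'/r}$ maps exactly to the inscribed radius $\delta r$ inside $\mK$, together with a clean treatment of the boundary values of $\delta$ and $\delta'$.
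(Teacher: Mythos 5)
Your proof is correct. It differs in structure from the paper's own argument: the paper proceeds in two steps, first showing that the squeezed set $\mK_{\delta'/r}$ contains the ball $(r-\delta')\ball$ (by the same rescaling you use), and then invoking a known black-box fact from the bandit chapter of \cite{HazanBook} --- namely that if $p\ball\subseteq\mP$ for a convex set $\mP$, then every $\z\in(1-\gamma/p)\mP$ with $\gamma\in[0,p]$ satisfies $\z+\gamma\ball\subseteq\mP$ --- applied with $\mP=\mK_{\delta'/r}$, $p=r-\delta'$, $\gamma=\delta(r-\delta')$. You instead inline everything: by dividing through by $1-\delta'/r$ you reduce the claim to membership in $\mK$ itself, and then exhibit the rescaled point $(1-\delta)\x+\delta(\w/\delta)$ as an explicit convex combination of $\x\in\mK$ and a point of $r\ball\subseteq\mK$, which is essentially a from-scratch proof of the very fact the paper cites. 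Your route buys self-containedness (no external lemma, one clean computation, and you correctly flag the degenerate cases $\delta=0$ and $\delta'=r$ where the division would fail); the paper's route buys modularity, since the intermediate claim $(r-\delta')\ball\subseteq\mK_{\delta'/r}$ and the cited lemma are reusable and the convexity bookkeeping is delegated to the reference. The underlying mechanism --- scaling plus convexity against the inscribed ball of Assumption \ref{ass:bandit} --- is the same in both.
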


\begin{proof}[Proof of Lemma \ref{lemma:SH}]
Note that $\mK_{\delta} = (1-\delta)\mK  \subseteq \mK$, and $\mK_{\delta, \delta'/r} = (1-\delta'/r)(1-\delta)\mK  \subseteq \mK_{\delta}$. Since for all  $\x \in \mK$, $ (\y - (1-\delta'/r) \x)^\top \g  > 0$, we have that  for all $\w \in \mK_{\delta'/r}$, $(\y -\w)^\top \g > 0$. Fix some $\z\in\mK_{\delta,\delta'/r}$, and note that using Observation \ref{obs:radius_ball_in_k_delta}, it holds that $\z + \delta (r-\delta') \hat{\g} \in \mK_{\delta'/r}$, where $ \hat{\g} = \frac{\g}{\Vert \g \Vert}$. Then, we have that,
\begin{align*}
   \quad 0 & < \left( \y - (\z + \delta (r-\delta') \hat{\g}) \right)^\top \g  = ( \y - \z)^\top \g  - \delta (r-\delta') \Vert \g \Vert.
\end{align*}
Rearranging, we obtain the lemma.
\end{proof}

We can now present our SO-based infeasible projection oracle, see Algorithm  \ref{alg:CIP-SO}.

\begin{algorithm}[!]
  \KwData{feasible set $\mK$, radius $r$, squeeze parameters $(\delta,\delta')\in[0,1]\times[0,r]$, initial vector $\y_{0}$.}
  $\y_{1} \gets \y_{0} / \max \{ 1 , \Vert \y \Vert / R \} $ \tcc*{$\y_{1}$ is projection of $\y_{0}$ over $R\ball$}
  \For{$i=1 \dots$}{
   Call  $\textrm{SO}_{\mK}$ with input $\frac{\y_{i}}{1-\delta'/r}$\\
    \eIf{$\frac{\y_{i}}{1-\delta'/r} \notin \mK$}{
        Set $\g_i \gets$  hyperplane outputted by $\textrm{SO}_{\mK}$ \tcc*{$\forall \x\in\mK~\left({\frac{\y_{i}}{1-\delta'/r}-\x}\right)^{\top}\g_i > 0$} 
        Update $\y_{i+1} = \y_{i} - \gamma_i \g_i$
    }{
        \textbf{Return} $\y \gets \y_{i}$
    }
  }
  \caption{Infeasible projection via a separation oracle}\label{alg:CIP-SO}
\end{algorithm}

\begin{lemma} \label{lemma:CIP-SO}
   Suppose Assumption \ref{ass:bandit} holds. Let $0 < \delta < 1$, and $0 \leq \delta' < r$.  Setting $\gamma_i = \delta  (r-\delta') /  \Vert \g_i \Vert$, Algorithm \ref{alg:CIP-SO} stops after at most $\frac{ \dist^2(\y_0, \mK_{\delta,\delta'/r}) - \dist^2(\y, \mK_{\delta,\delta'/r})}{ \delta^2 (r-\delta')^2}+1  $, iterations,  and returns $\y \in \mK_{\delta'} = (1-\delta')\mK$ such that 
    \begin{align*}
        \forall \z \in \mK_{\delta,\delta'/r}: \quad  \Vert \y - \z \Vert^2 \leq  \Vert \y_{0} - \z \Vert^2.
    \end{align*}
\end{lemma}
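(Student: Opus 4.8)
The plan is to reduce everything to a single per-iteration contraction estimate and then telescope. I would prove three things in turn: (i) whenever the algorithm performs an update at iteration $i$, the squared distance of the iterate to the target set $\mK_{\delta,\delta'/r}$ drops by the fixed amount $\delta^2(r-\delta')^2$; (ii) chaining these drops, together with the harmless initial projection onto $R\ball$, yields both the claimed infeasible-projection property and the iteration bound; (iii) the returned point is feasible because of the termination test.

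For step (i), note that the update branch is entered exactly when $\frac{\y_i}{1-\delta'/r}\notin\mK$, which is precisely the hypothesis of Lemma \ref{lemma:SH}. Hence for the hyperplane $\g_i$ returned by the SO we have $(\y_i-\z)^\top\g_i > \delta(r-\delta')\Vert\g_i\Vert$ for every $\z\in\mK_{\delta,\delta'/r}$. I would then apply Lemma \ref{lemma:update_step_with_hp}, with the role of the convex set played by $\mK_{\delta,\delta'/r}$, taking $Q=\delta(r-\delta')\Vert\g_i\Vert$ and $C=\Vert\g_i\Vert$. The key bookkeeping is to check that the algorithm's step size matches the lemma's prescription, $\gamma_i=\delta(r-\delta')/\Vert\g_i\Vert = Q/C^2$, so that Lemma \ref{lemma:update_step_with_hp} delivers
\[
\forall\z\in\mK_{\delta,\delta'/r}:\quad \Vert\y_{i+1}-\z\Vert^2 \leq \Vert\y_i-\z\Vert^2 - (Q/C)^2 = \Vert\y_i-\z\Vert^2 - \delta^2(r-\delta')^2 .
\]

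For step (ii), I would first record that $\y_1$ is the Euclidean projection of $\y_0$ onto $R\ball$ and that $\mK_{\delta,\delta'/r}\subseteq\mK\subseteq R\ball$, so by the standard contraction property of projections $\Vert\y_1-\z\Vert^2\leq\Vert\y_0-\z\Vert^2$ for every $\z$ in the target set. Telescoping the estimate from (i) over all $N-1$ updates (where $N$ is the terminating iteration) gives $\Vert\y-\z\Vert^2\leq\Vert\y_1-\z\Vert^2\leq\Vert\y_0-\z\Vert^2$, which is the infeasible-projection property. Specializing the same per-step inequality to $\z$ equal to the projection of $\y_i$ onto $\mK_{\delta,\delta'/r}$ turns it into a recursion on distances, $\dist^2(\y_{i+1},\mK_{\delta,\delta'/r})\leq\dist^2(\y_i,\mK_{\delta,\delta'/r})-\delta^2(r-\delta')^2$; telescoping from $\y_1$ to $\y=\y_N$ and using $\dist^2(\y_1,\mK_{\delta,\delta'/r})\leq\dist^2(\y_0,\mK_{\delta,\delta'/r})$ isolates $N-1\leq(\dist^2(\y_0,\mK_{\delta,\delta'/r})-\dist^2(\y,\mK_{\delta,\delta'/r}))/(\delta^2(r-\delta')^2)$, i.e., the stated bound. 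Because each update strictly reduces a nonnegative quantity by the fixed positive amount $\delta^2(r-\delta')^2$ (recall $\delta>0$ and $r-\delta'>0$), only finitely many updates can occur, so the algorithm must eventually hit the feasible branch and terminate. For step (iii), termination means the test $\frac{\y}{1-\delta'/r}\in\mK$ passed, i.e., $\y\in(1-\delta'/r)\mK$, which gives the asserted membership.

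The only genuinely delicate point I anticipate is the constant matching in step (i): the SO returns an arbitrarily scaled hyperplane $\g_i$, yet the per-step decrease must be the scale-independent constant $\delta^2(r-\delta')^2$. This works out precisely because both the separation margin in Lemma \ref{lemma:SH} and the step size $\gamma_i$ scale in $\Vert\g_i\Vert$ (the margin as $\Vert\g_i\Vert$, the step as $1/\Vert\g_i\Vert$), so the product $\gamma_i\cdot(\text{margin})$ is scale-free; verifying that $C=\Vert\g_i\Vert$ is the correct choice making $\gamma_i=Q/C^2$ is the crux. Everything else is routine telescoping and the standard projection contraction.
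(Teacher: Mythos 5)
Your proposal is correct and follows essentially the same route as the paper's proof: the same application of Lemma \ref{lemma:SH} together with Lemma \ref{lemma:update_step_with_hp} (with $Q=\delta(r-\delta')\Vert\g_i\Vert$, $C=\Vert\g_i\Vert$, so $\gamma_i=Q/C^2$), the same use of the projection onto $R\ball$ as a harmless contraction, and the same two telescoping arguments (one over arbitrary $\z\in\mK_{\delta,\delta'/r}$ for the infeasible-projection property, one over $\dist^2(\cdot,\mK_{\delta,\delta'/r})$ via the nearest-point specialization for the iteration bound). Your step (iii) correctly reads the termination test as giving $\y\in(1-\delta'/r)\mK=\mK_{\delta'/r}$, which is in fact what the algorithm guarantees (and what later theorems use); the lemma's statement of this membership as $\mK_{\delta'}=(1-\delta')\mK$ is a notational slip in the paper, which its own proof sidesteps by leaving the feasibility claim implicit.
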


\begin{proof}
Denote by $k$ the number of iterations until Algorithm \ref{alg:CIP-SO} stops. Then, for every iteration $i < k$, it holds that $\frac{\y_{i}}{1-\delta'/r} \notin \mK$, which implies that $\y_{i} \notin \mK_{\delta'/r} = (1-\delta'/r) \mK$. Thus, using Lemma \ref{lemma:SH}, we have that for every $i<k$, it holds for all $\z \in \mK_{\delta, \delta'/r}$ that $\left( \y_{i} - \z \right)^\top \g_i \geq \delta (r-\delta') \Vert \g_i \Vert$. From these observations and using Lemma \ref{lemma:update_step_with_hp} with $\g = \g_i$, $C = \Vert \g_i \Vert$, and $Q = \delta (r-\delta') \Vert \g_i \Vert$, we have that for every $ i < k$,
\begin{align}
\forall \z \in \mK_{\delta, \delta'/r}:\quad \Vert \y_{i+1} -\z \Vert^2 \leq \Vert \y_{i} -\z \Vert^2 - \delta^2 (r-\delta')^2, \label{eq:update_hyperplane_foo}
\end{align}
Specifically for $i=k-1$, and unrolling the recursion, we obtain  that for all $\z \in \mK_{\delta,\delta'/r}$, $\Vert \y - \z \Vert^2 \leq  \Vert \y_{1} - \z \Vert^2$, and since $\y_1$ is the projection of $\y_0$ onto $R\ball $ and $\mK_{\delta,\delta'/r} \subseteq R\ball$, it holds that for all $\z \in \mK_{\delta,\delta'/r}$,  $\Vert \y_{1} - \z \Vert^2 \leq  \Vert \y_0 - \z \Vert^2$, and we can conclude that indeed  for all $\z \in \mK_{\delta,\delta'/r}$, $\Vert \y - \z \Vert^2 \leq \Vert \y_{0} - \z \Vert^2 $, as needed.

 Now, we  upper-bound $k$ --- the number of iterations until Algorithm \ref{alg:CIP-SO} stops. Denote $\x_{i}^* = \argmin_{\x \in \mK_{\delta,\delta'/r}} \Vert \y_{i} - \x \Vert^2$. Using Eq. \eqref{eq:update_hyperplane_foo} for every iteration $i < k$ it holds that,
\begin{align*}
    \dist^2(\y_{i+1}, \mK_{\delta,\delta'/r})  & = \Vert  \y_{i+1} - \x_{i+1}^* \Vert^2 \leq \Vert \y_{i+1} - \x_{i}^* \Vert^2   \\ 
    & \leq \Vert \y_{i} - \x_{i}^* \Vert^2 -  \delta^2 (r-\delta')^2  = \dist^2(\y_{i}, \mK_{\delta,\delta'/r}) -  \delta^2 (r-\delta')^2.
\end{align*}
Unrolling the recursion, and Since $\y_1$ is the projection of $\y_0$ onto $R\ball $ and $\mK_{\delta, \delta'/r}  \subseteq R\ball$, we have
\begin{align*}
    \dist^2(\y, \mK_{\delta,\delta'/r}) & \leq \dist^2(\y_1, \mK_{\delta,\delta'/r}) - (k-1) \delta^2 (r-\delta')^2 \\
&  \leq \dist^2(\y_0, \mK_{\delta,\delta'/r}) - (k-1) \delta^2 (r-\delta')^2.  
\end{align*}
Thus, after at most 
\begin{align*}
    k = \frac{ \dist^2(\y_0, \mK_{\delta,\delta'/r}) - \dist^2(\y, \mK_{\delta,\delta'/r})}{ \delta^2 (r-\delta')^2}+1    
\end{align*}
iterations Algorithm \ref{alg:CIP-SO} must stop.
\end{proof} 

\subsection{SO-based algorithm for the full-information setting}

Our SO-based algorithm for the full-information setting, Algorithm  \ref{alg:OGD-SGO}, is given below.
\begin{algorithm}[!ht]
\KwData{horizon $T$, feasible set $\mK$, update step $ \eta$, squeeze parameter $ \delta$.}
$\Tilde{\y}_1 \gets \vz \in \mK_{\delta}$.\\
\For{$~ t = 1,\ldots, T ~$}{
    Play $\Tilde{\y}_{t} $ and observe $f_{t}(\Tilde{\y}_{t})$.\\
    Set $\nabla_t \in  \partial  f_t(\Tilde{\y}_{t})$ and update $\y_{t+1} = \Tilde{\y}_{t} - \eta \nabla_t$.\\
    Set $\Tilde{\y}_{t+1}  \gets$ Outputs of Algorithm \ref{alg:CIP-SO} with set $\mK$, radius $r$, initial vector $\y_{t+1}$, and squeeze parameters $( \delta, 0)$.
}
\caption{Online gradient descent via a separation oracle (SO-OGD)}\label{alg:OGD-SGO}
\end{algorithm}
 \begin{theorem}\label{thm:OGD-SGO}   
Suppose Assumption \ref{ass:bandit} holds. Fix $c >0$ such that $\delta= cT^{-\frac{1}{2}}\in(0,1)$, and set $\eta = \frac{c_1 r }{ 2 G_f }T^{-\frac{1}{2}}$. Algorithm \ref{alg:OGD-SGO} guarantees that the adaptive regret is upper bounded by
\begin{align*}
    \sup_{I=[s,e]\subseteq[T]} \bigg{\{} \sum_{t=s}^{e} f_t(\Tilde{\y}_{t}) - \min\limits_{\x_I \in \mK} \sum_{t=r}^{s} f_t(\x_I) \bigg{\}} 
    \leq &\left( G_f R c + \frac{r G_f }{4} + \frac{ 4R^2 G_f}{ r} \right) \sqrt{T},  
\end{align*}
and that the overall number of calls to the SO is upper bounded by  
\begin{align*}
    N_{calls} & \leq \left( \frac{ R  }{ r c} +  \frac{1 }{ 4 c^2}  + 1 \right) T .
\end{align*}

In particular, if $\frac{4 R}{r} \leq \sqrt{T}$, then setting $c=\frac{4 R}{r}$, we have that
\begin{align*}
    \sup\limits_{[s,e]\subseteq[T]} \bigg{\{} \sum_{t=s}^{e} f_t(\Tilde{\y}_{t}) - \min\limits_{\x_I \in \mK} \sum_{t=r}^{s} f_t(\x_I) \bigg{\}} \leq  G_f \left( \frac{r }{4} + \frac{ 8R^2}{ r} \right) \sqrt{T} , 
\end{align*}
and  
\begin{align*}
    N_{calls} & \leq \left(  \frac{5}{ 4  } +  \frac{r^2  }{ 64 R^2}  \right) T.
\end{align*}
\end{theorem}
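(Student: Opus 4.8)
The plan is to recognize Algorithm \ref{alg:OGD-SGO} as an instance of the generic OGD-without-feasibility scheme (Algorithm \ref{alg:OGD-WF}) run on the squeezed set $\mK_\delta$, with the infeasible projection oracle realized by Algorithm \ref{alg:CIP-SO} invoked with parameters $(\delta,0)$. First I would note that $\mK_{\delta,0/r}=\mK_\delta$, so Lemma \ref{lemma:CIP-SO} guarantees that each call returns a point $\Tilde{\y}_{t+1}\in\mK_{0}=\mK$ (hence all played points are feasible) satisfying $\Vert\Tilde{\y}_{t+1}-\z\Vert^2\le\Vert\y_{t+1}-\z\Vert^2$ for every $\z\in\mK_\delta$; this is precisely the infeasible-projection property of Definition \ref{def:infeasible_projection} with respect to $\mK_\delta$. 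Consequently Lemma \ref{lemma:OGD-WF}(1), applied with feasible set $\mK_\delta$ and constant step $\eta$, yields for every interval $I=[s,e]$ the bound $\sum_{t=s}^e f_t(\Tilde{\y}_t)-\min_{\x_I\in\mK_\delta}\sum_{t=s}^e f_t(\x_I)\le\Vert\Tilde{\y}_s-\x_I\Vert^2/(2\eta)+(\eta/2)\sum_{t=s}^e\Vert\nabla_t\Vert^2$, where $\x_I$ is the minimizer over $\mK_\delta$.

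Next I would bound the right-hand side using $\Tilde{\y}_s,\x_I\in R\ball$ (so $\Vert\Tilde{\y}_s-\x_I\Vert^2\le 4R^2$) and $\Vert\nabla_t\Vert\le G_f$, giving $2R^2/\eta+\eta G_f^2 T/2$. To pass from the benchmark over $\mK_\delta$ to the desired benchmark over $\mK$, I would take any $\x_I^*\in\argmin_{\x\in\mK}\sum_{t=s}^e f_t(\x)$ and compare against its squeezed copy $(1-\delta)\x_I^*\in\mK_\delta$; by $G_f$-Lipschitzness and $\Vert\x_I^*\Vert\le R$ this substitution costs at most $G_f\delta R\,|I|\le G_f\delta R T$. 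Plugging in $\eta=\frac{r}{2G_f}T^{-1/2}$ and $\delta=cT^{-1/2}$ then converts the three terms $2R^2/\eta$, $\eta G_f^2 T/2$, $G_f\delta R T$ into $\frac{4R^2G_f}{r}\sqrt{T}$, $\frac{rG_f}{4}\sqrt{T}$, $G_f Rc\sqrt{T}$ respectively, which is exactly the stated adaptive regret bound (the intended value in the step-size is $c_1=1$).

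For the oracle complexity I would invoke Lemma \ref{lemma:CIP-SO} with $\delta'=0$, which caps the SO calls on iteration $t$ by $n_t\le(\dist^2(\y_{t+1},\mK_\delta)-\dist^2(\Tilde{\y}_{t+1},\mK_\delta))/(\delta^2 r^2)+1$. Writing $d_t=\dist(\Tilde{\y}_t,\mK_\delta)$ and $a_{t+1}=\dist(\y_{t+1},\mK_\delta)$, the gradient step gives $a_{t+1}\le d_t+\eta G_f$, so $a_{t+1}^2-d_{t+1}^2\le(d_t^2-d_{t+1}^2)+2\eta G_f d_t+\eta^2 G_f^2$. Summing over $t$, the first group telescopes to $d_1^2-d_{T+1}^2\le 0$ (recall $\Tilde{\y}_1=\vz\in\mK_\delta$), and since every $\Tilde{\y}_t\in\mK$ has $(1-\delta)\Tilde{\y}_t\in\mK_\delta$ we get $d_t\le\delta\Vert\Tilde{\y}_t\Vert\le\delta R$, hence $\sum_{t=1}^T d_t\le\delta R T$. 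This leaves $\sum_t(a_{t+1}^2-d_{t+1}^2)\le 2\eta G_f\delta R T+\eta^2 G_f^2 T$, so that $N_{calls}\le(2\eta G_f\delta R T+\eta^2 G_f^2 T)/(\delta^2 r^2)+T$; substituting the parameters gives exactly $(\frac{R}{rc}+\frac{1}{4c^2}+1)T$, and both ``in particular'' inequalities follow by setting $c=4R/r$. I expect this telescoping step to be the crux of the argument: a naive per-iteration bound of the form $a_{t+1}^2\le(\delta R+\eta G_f)^2$ would produce a spurious extra $\frac{R^2}{r^2}T$ term, and it is precisely the cancellation of $\sum_t(d_t^2-d_{t+1}^2)$ combined with the linear estimate $\sum_t d_t\le\delta R T$ that eliminates it and matches the claimed constant.
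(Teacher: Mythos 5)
Your proposal is correct and follows essentially the same route as the paper's proof: the regret bound via Lemma \ref{lemma:OGD-WF} applied over $\mK_\delta$ (with Lemma \ref{lemma:CIP-SO} at $\delta'=0$ supplying the infeasible-projection property and feasibility) plus the $G_f R\delta T$ cost of squeezing the comparator, and the oracle count via the same telescoping of squared distances combined with $\dist(\Tilde{\y}_t,\mK_\delta)\le \delta R$ (Observation \ref{obs:feasibility_to_dist}). Your reading of the step-size constant as $c_1=1$ and your remark on why the telescoping (rather than a naive per-iteration bound) is needed to avoid a spurious $\frac{R^2}{r^2}T$ term both match the paper's argument.
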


 Before proving the theorem we need an additional observation.
\begin{observation} \label{obs:feasibility_to_dist}
Fix $\delta\in(0,1)$. For any $\y \in \mK$ it holds that $\dist(\y,\mK_\delta) \leq R\delta$.
\end{observation}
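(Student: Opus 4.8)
The plan is to prove the observation directly by exhibiting an explicit nearby point of $\mK_\delta$ for each $\y\in\mK$. Given $\y\in\mK$, the natural candidate is its squeezed counterpart $(1-\delta)\y$, which lies in $\mK_\delta=(1-\delta)\mK$ by definition. I would then bound the distance from $\y$ to this candidate, which immediately upper-bounds $\dist(\y,\mK_\delta)=\min_{\z\in\mK_\delta}\Vert\y-\z\Vert$ by $\Vert\y-(1-\delta)\y\Vert$.

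The key computation is short: $\Vert\y-(1-\delta)\y\Vert=\Vert\delta\y\Vert=\delta\Vert\y\Vert$. Since $\y\in\mK\subseteq R\ball$, we have $\Vert\y\Vert\leq R$, so $\delta\Vert\y\Vert\leq R\delta$. Chaining these gives $\dist(\y,\mK_\delta)\leq\Vert\y-(1-\delta)\y\Vert=\delta\Vert\y\Vert\leq R\delta$, which is exactly the claimed bound.

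There is essentially no obstacle here; the only things to verify are that $(1-\delta)\y$ is genuinely a feasible point of $\mK_\delta$ (true by the definition $\mK_\delta=(1-\delta)\mK$ together with $\y\in\mK$) and that $\delta\in(0,1)$ keeps the scaling factor well-defined and positive. Note that this argument does not require Assumption \ref{ass:bandit}; it uses only $\mK\subseteq R\ball$ and the definition of the squeezed set, so it holds for arbitrary $\y\in\mK$. I would write the whole proof as a single displayed chain of inequalities followed by one sentence identifying $(1-\delta)\y$ as the witness point.
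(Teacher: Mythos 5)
Your proof is correct and matches the paper's own argument exactly: both use the witness point $(1-\delta)\y \in \mK_\delta$ and the chain $\dist(\y,\mK_\delta) \leq \Vert \y - (1-\delta)\y \Vert = \delta \Vert \y \Vert \leq \delta R$. Your added remark that Assumption \ref{ass:bandit} is not needed is accurate, since only $\mK \subseteq R\ball$ and the definition of $\mK_\delta$ are used.
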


\begin{proof} [Proof of Theorem \ref{thm:OGD-SGO}]
First, we note that since for every $t \in [2,T]$, $\Tilde{\y}_{t}$ is output of Algorithm \ref{alg:CIP-SO}, using Lemma \ref{lemma:CIP-SO} with $\delta' = 0$, it follows that $\Tilde{\y}_{t} \in \mK $, and thus, Algorithm \ref{alg:OGD-SGO} indeed plays feasible points. Now, we prove the upper-bound on the adaptive regret. Fix an interval $I=[s,e], 1\leq s \leq e \leq T$, and a feasible minimizer w.r.t. this interval, $\x_I^* \in \argmin_{\x \in \mK} \sum_{t=s}^{e} f_t(\x)$. Define $\Tilde{\x}_{I} = (1-\delta) \x_I^* \in \mK_{\delta}$. Since for every $t \in [T]$, $ f_t(\cdot)$ is $G_f-$Lipschitz over $\mK$, we have that
\begin{align*}
    \sum_{t=s}^{e} f_t\left( \Tilde{\y}_{t} \right) - f_t(\x_I^*) & = \sum_{t=s}^{e} f_t \left( \Tilde{\y}_{t} \right) - f_t \left( \Tilde{\x}_{I} \right) + f_t \left( \Tilde{\x}_{I} \right) - f_t(\x_I^*) \\
& \leq  G_f R \delta T  + \sum_{t=s}^{e} f_t \left( \Tilde{\y}_{t} \right) - f_t \left( \Tilde{\x}_{I} \right). 
\end{align*}
Using Lemma \ref{lemma:CIP-SO} with $\delta' = 0$ for all $t\geq 1$ we have that, $\Tilde{\y}_{t} \in \mK$ is an infeasible projection of $\y_{t}$ over $\mK_{\delta}$. Thus, from Lemma \ref{lemma:OGD-WF}, we have that
\begin{align*}
    \sum_{t=s}^{e} f_t(\Tilde{\y}_{t}) - \sum_{t=s}^{e} f_t(\Tilde{\x}_I) \leq \frac{ \left\Vert   \Tilde{\y}_s - \x \right\Vert^2 }{2\eta} + \frac{\eta  }{2}  \sum_{s=1}^{e} \Vert \nabla_t \Vert^2 \leq \frac{ 2R^2 }{\eta} + \frac{\eta G_f^2 }{2} T .
\end{align*}
Combining the last two equations, we obtain that
\begin{align*}
    \sum_{t=s}^{e} f_t\left( \Tilde{\y}_{t} \right) - f_t(\x_I^*)  \leq \left( G_f R \delta + \frac{G_f^2 \eta}{2} \right) T + \frac{ 2R^2}{\eta}.
\end{align*}
The regret bound in the theorem now follows from plugging-in the values of $\delta,\eta$ listed in the theorem.

We turn to upper-bound the number of calls to the SO. For every $t\geq 1$, denote $\Tilde{\x}_{t}^* = \argmin_{\x \in \mK_{\delta}} \Vert \x - \Tilde{\y}_{t} \Vert$. Since $\y_{t+1} = \Tilde{\y}_{t} - \eta  \nabla_t$, we have
\begin{align*}
    \dist(\y_{t+1}, \mK_{\delta})  \leq \Vert \Tilde{\x}_{t}^* - \y_{t+1} \Vert  & \leq \Vert \Tilde{\x}_{t}^* - \Tilde{\y}_{t} \Vert + \Vert \Tilde{\y}_{t} - \y_{t+1} \Vert  \leq \dist(\Tilde{\y}_{t}, \mK_{\delta}) + \Vert \eta  \nabla_t \Vert. 
\end{align*}
It follows that, for any iteration $t\geq 1$, Algorithm \ref{alg:OGD-SGO} calls Algorithm \ref{alg:CIP-SO} with $\y_{t+1}$ such that
\begin{align} \label{eq:initial_gap_OGD-SGO}
    \dist^2(\y_{t+1}, \mK_{\delta}) \leq  \dist^2(\Tilde{\y}_{t}, \mK_{\delta}) + 2 \dist(\Tilde{\y}_{t}, \mK_{\delta}) \eta G_f + \eta^2 G_f^2. 
\end{align}
Using Lemma \ref{lemma:CIP-SO} with initial point $\y_{t+1}$, feasible set $\mK $, radius $r$, squeeze parameters $(\delta,\delta' = 0)$, and the returned point $\Tilde{\y}_{t+1}$, we have that for every iteration $t\geq 1$, Algorithm \ref{alg:CIP-SO} makes at most 
\begin{align*}
    \frac{\dist^2 \left(\y_{t+1}, \mK_{\delta} \right) - \dist^2 \left(\Tilde{\y}_{t+1}, \mK_{\delta} \right)}{ \delta^2 r^2} +1
\end{align*} 
iterations. Thus, using Eq.\eqref{eq:initial_gap_OGD-SGO} and Observation \ref{obs:feasibility_to_dist}, the overall number of calls to the SO of $\mK$ that Algorithm \ref{alg:CIP-SO} makes is
\begin{align*}
    N_{calls} & \leq  \sum_{t=1}^{T} \frac{1 }{ \delta^2 r^2} \left(\dist^2(\Tilde{\y}_{t}, \mK_{\delta}) + 2 R \delta \eta G_f + \eta^2 G_f^2 - \dist^2(\Tilde{\y}_{t+1}, \mK_{\delta}) \right) +1  \\
    & \leq \frac{2 R G_f  }{ r^2} \frac{\eta }{ \delta} T + \frac{G_f^2  }{ r^2} \frac{\eta^2 }{ \delta^2}  T +T,
\end{align*}
where the last inequality is since $\dist^2(\Tilde{\y}_{1}, \mK_{\delta}) = 0$.
\end{proof}

\subsection{SO-based algorithm for the bandit setting}
Similarly to our LLO-based algorithm for the bandit setting,  our SO-based bandit algorithm follows from combining our SO-based algorithm for the full-information setting together with the use of unbiased estimators for the gradients of smoothed versions of the loss functions, as pioneered in  \cite{Flaxman05}. Our algorithm for the bandit feedback, Algorithm \ref{alg:BGD-SGO}, is given below. As opposed to the full-information setting which used a single squeeze parameter (i.e., we set $\delta' =0$ when considering the squeezed set $\mK_{\delta,\delta'/r}$), in the bandit setting, due to the ball-sampling technique which is used to construct the unbiased gradient estimators, in order to keep the iterates feasible, we set $\delta'$ to be strictly positive.

\begin{algorithm}[!ht]
\KwData{horizon $T$, feasible set $\mK$ with parameters $r,R$, update step $\eta$, squeeze parameters $(\delta,\delta')$.} 
$\Tilde{\y}_1 \gets \vz \in \mK_{\delta,\delta'}$ \\
\For{$~ t = 1,\ldots, T ~$}{
    Set $\u_t$ $\sim S^n$, play $\z_t = \Tilde{\y}_{t}+ \delta' \u_t $, and observe $f_{t}(\z_t)$.\\
    Set $\g_t = \frac{n}{\delta'} f_t(\z_{t}) \u_t$ and update $\y_{t+1} = \Tilde{\y}_{t} - \eta \g_t$. \\ 
    Set $\Tilde{\y}_{t+1} \gets$ output of Algorithm \ref{alg:CIP-SO} with set $\mK$, radius $r$, initial vector $\y_{t+1}$, and squeeze parameters $ ( \delta , \delta')$.
    }
\caption{Bandit online gradient descent via a separation oracle (SO-BGD)}\label{alg:BGD-SGO}
\end{algorithm}

\begin{theorem} \label{thm:BGD-SGO}
Suppose Assumption \ref{ass:bandit} holds. Fix some $c', c>0$ such that $ 2 c' T^{-1/4} <r $ and $ c T^{-1/4} < 1$. Setting $\eta = \frac{r}{4\sqrt{nM}}T^{-\frac{3}{4}}, \delta = c T^{-1/4} , \delta' = c' T^{-\frac{1}{4}}$ in Algorithm \ref{alg:BGD-SGO}, guarantees that the adaptive expected regret is upper bounded as follows
\begin{align*}
    &AER_T = \sup\limits_{I=[s,e]\subseteq[T]} \Bigg{\{} \E \left[ \sum_{t=s}^{e} f_t(\z_{t}) \right] - \min\limits_{\x_I \in \mK}  \sum_{t=s}^{e} f_t(\x_I)  \Bigg{\}}  \leq \\
    & ~~~~~~~~~~~~~~~~~ \leq G_f R \left( \frac{3c'}{R} + \frac{c'}{r} +  c  + \frac{4  \sqrt{nM}}{r G_f} + \frac{(nM)^\frac{3}{2}}{8 G_f R} \frac{r}{{c'}^2} \right) T^{\frac{3}{4}}  + G_f R\frac{cc'}{r} T^{\frac{1}{2}},
\end{align*}
and the overall number of calls to SO is upper bounded by 
\begin{align*}
    N_{calls} & \leq  T + \frac{2  R  \sqrt{nM}  }{  r} \frac{1}{c c'} T^{\frac{3}{4}}+ \frac{nM }{ 4 }  \frac{1}{c^2 {c'}^2} T^{\frac{1}{2}} .
\end{align*}
In particular, if $T^{1/4} > \max \{ \frac{2\sqrt{nM}}{r}, \frac{8}{r} \} $, then setting $c=\frac{8}{r} $ and $c' = \sqrt{nM}$, we have
\begin{align*}
    AER_T   \leq R \sqrt{nM} \left( \frac{4G_f}{r}   + \frac{4  }{r} + \frac{r }{8 R} \right) T^{\frac{3}{4}}
    & +  \frac{8G_f R}{r} T^{\frac{3}{4}}   + G_f R\frac{8\sqrt{nM}}{r^2} T^{\frac{1}{2}},
\end{align*}
and
\begin{align*}
    N_{calls} & \leq  T + \frac{  R  }{ 4 } T^{\frac{3}{4}}+  \frac{r^2}{ 256} T^{\frac{1}{2}} .
\end{align*}
\end{theorem}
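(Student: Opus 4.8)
The plan is to follow the template of the proof of Theorem~\ref{thm:LOO-BBGD}, but in the simpler per-round form of Theorem~\ref{thm:OGD-SGO} (Algorithm~\ref{alg:BGD-SGO} performs one gradient step and one call to Algorithm~\ref{alg:CIP-SO} per iteration, with no blocking), while carrying the two squeeze parameters $(\delta,\delta')$ throughout. The first order of business is feasibility: since $\tilde{\y}_{t+1}$ is returned by Algorithm~\ref{alg:CIP-SO} run with squeeze parameters $(\delta,\delta')$, its stopping condition (as in the proof of Lemma~\ref{lemma:CIP-SO}) gives $\tilde{\y}_{t+1}/(1-\delta'/r)\in\mK$, i.e.\ $\tilde{\y}_{t+1}\in\mK_{\delta'/r}$; under Assumption~\ref{ass:bandit} the preliminary ball-containment fact then yields $\tilde{\y}_t+\delta'\ball\subseteq\mK$, so every play $\z_t=\tilde{\y}_t+\delta'\u_t$ is feasible and the regret is well defined. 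Lemma~\ref{lemma:CIP-SO} also records that $\tilde{\y}_t\in R\ball$ and that $\tilde{\y}_{t+1}$ is an infeasible projection of $\y_{t+1}$ onto $\mK_{\delta,\delta'/r}$.

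I would next bound the expected regret measured against the smoothed losses. Letting $\mathcal{F}_t$ denote the history before round $t$, Lemma~\ref{lemma:hazan_gradient} gives $\E[\g_t\mid\mathcal{F}_t]=\nabla\widehat{f}_{\delta',t}(\tilde{\y}_t)$, so $\g_t$ is an unbiased gradient of the smoothed loss at the played center. For any fixed $\z\in\mK_{\delta,\delta'/r}$ the infeasible-projection inequality $\|\tilde{\y}_{t+1}-\z\|^2\le\|\y_{t+1}-\z\|^2=\|\tilde{\y}_t-\eta\g_t-\z\|^2$ yields, exactly as in Lemma~\ref{lemma:OGD-WF}(1), the one-step estimate $\g_t^\top(\tilde{\y}_t-\z)\le\frac{\|\tilde{\y}_t-\z\|^2-\|\tilde{\y}_{t+1}-\z\|^2}{2\eta}+\frac{\eta}{2}\|\g_t\|^2$. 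Summing over a sub-interval $[s,e]$, telescoping the distance terms, bounding $\|\tilde{\y}_s-\z\|\le 2R$ and $\|\g_t\|\le nM/\delta'$, taking expectations and using convexity of $\widehat{f}_{\delta',t}$ (Lemma~\ref{lemma:hazan_smooth}) gives $\E[\sum_{t=s}^{e}\widehat{f}_{\delta',t}(\tilde{\y}_t)-\widehat{f}_{\delta',t}(\z)]\le\frac{2R^2}{\eta}+\frac{\eta}{2}\left(\frac{nM}{\delta'}\right)^2T$ for every $\z\in\mK_{\delta,\delta'/r}$.

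To pass to the true regret I would fix $\x_I^*\in\argmin_{\x\in\mK}\sum_{t=s}^ef_t(\x)$ and its doubly-squeezed surrogate $\tilde{\x}_I=(1-\delta)(1-\delta'/r)\x_I^*\in\mK_{\delta,\delta'/r}$, and split $f_t(\z_t)-f_t(\x_I^*)$ into the five differences $[f_t(\z_t)-f_t(\tilde{\y}_t)]+[f_t(\tilde{\y}_t)-\widehat{f}_{\delta',t}(\tilde{\y}_t)]+[\widehat{f}_{\delta',t}(\tilde{\y}_t)-\widehat{f}_{\delta',t}(\tilde{\x}_I)]+[\widehat{f}_{\delta',t}(\tilde{\x}_I)-f_t(\tilde{\x}_I)]+[f_t(\tilde{\x}_I)-f_t(\x_I^*)]$. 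The first is at most $G_f\delta'$ by Lipschitzness (the sampling error), the second and fourth are each at most $\delta'G_f$ by the smoothing estimate of Lemma~\ref{lemma:hazan_smooth} (valid since $\tilde{\y}_t,\tilde{\x}_I\in\mK_{\delta'/r}$ keep the sampling ball inside $\mK$), the third is summed via the previous paragraph with $\z=\tilde{\x}_I$, and the fifth is at most $G_f\|\tilde{\x}_I-\x_I^*\|\le G_fR(\delta+\delta'/r)$ since $1-(1-\delta)(1-\delta'/r)\le\delta+\delta'/r$. Summing over $t\in[s,e]$, taking the supremum over intervals, and plugging in $\delta=cT^{-1/4}$, $\delta'=c'T^{-1/4}$, $\eta=\frac{r}{4\sqrt{nM}}T^{-3/4}$ produces the stated adaptive-expected-regret bound (the specialization follows from $c=8/r$, $c'=\sqrt{nM}$).

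For the oracle count, Lemma~\ref{lemma:CIP-SO} bounds the number of SO calls in round $t$ by $\frac{\dist^2(\y_{t+1},\mK_{\delta,\delta'/r})-\dist^2(\tilde{\y}_{t+1},\mK_{\delta,\delta'/r})}{\delta^2(r-\delta')^2}+1$. Writing $\y_{t+1}=\tilde{\y}_t-\eta\g_t$ with $\tilde{\y}_t\in\mK_{\delta'/r}$, the triangle inequality together with a scaled form of Observation~\ref{obs:feasibility_to_dist} ($\dist(\tilde{\y}_t,\mK_{\delta,\delta'/r})\le R\delta$) gives $\dist^2(\y_{t+1},\mK_{\delta,\delta'/r})\le\dist^2(\tilde{\y}_t,\mK_{\delta,\delta'/r})+2R\delta\eta\frac{nM}{\delta'}+\eta^2\left(\frac{nM}{\delta'}\right)^2$; summing, the $\dist^2(\tilde{\y}_t,\cdot)-\dist^2(\tilde{\y}_{t+1},\cdot)$ contributions telescope and vanish because $\tilde{\y}_1=\vz\in\mK_{\delta,\delta'/r}$, leaving a bound that reduces to the claimed $\E[N_{calls}]$ after inserting the parameters. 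The main obstacle is purely organizational: one must keep the chain $\mK_{\delta,\delta'/r}\subseteq\mK_{\delta'/r}\subseteq\mK$ consistent so that the comparator surrogate lives in the right squeezed set, the smoothing lemma applies, and ball-sampling stays feasible, all while the same $\delta'$ that guarantees feasibility appears as $nM/\delta'$ in the gradient-norm (variance) terms — it is this coupling that forces the $T^{3/4}$ rate and must be balanced carefully against $\eta$ and $\delta$.
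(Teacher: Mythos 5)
Your proposal is correct and follows essentially the same route as the paper's proof: the same feasibility argument via $\tilde{\y}_t\in\mK_{\delta'/r}$ and Assumption \ref{ass:bandit}, the same five-term decomposition around the squeezed comparator $(1-\delta)(1-\delta'/r)\x_I^*$, the same OGD-without-feasibility telescoping against points of $\mK_{\delta,\delta'/r}$ with $\Vert\g_t\Vert\leq nM/\delta'$, and the same distance-telescoping count of SO calls (where the hypothesis $2c'T^{-1/4}<r$ supplies $(r-\delta')^2\geq r^2/4$). The one place you differ is your (dimensionally correct) telescoping term $2R^2/\eta$, where the paper writes $R/\eta$ and carries that into the theorem statement as the $\frac{4\sqrt{nM}}{rG_f}$ summand; your derivation therefore yields $\frac{8R^2\sqrt{nM}}{r}T^{3/4}$ in place of the stated $\frac{4R\sqrt{nM}}{r}T^{3/4}$ --- a constant-level discrepancy originating in the paper's own proof, not a gap in yours.
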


\begin{proof}
First, we establish that Algorithm \ref{alg:BGD-SGO} indeed plays feasible points, meaning  $\z_t \in \mK$ for all $t\in[T]$. Since for every $t\geq 1$, Algorithm \ref{alg:CIP-SO} returns $\Tilde{\y}_{t} \in \mK_{\delta'/r} = (1-\delta'/r) \mK$ and $r\ball \subseteq \mK$, it follows that indeed $\z_t = \Tilde{\y}_{t} + \delta' \u_t \in \mK$ for every $t \in [T]$. 

Now, we turn to prove the upper-bound on the adaptive expected regret. Let us fix some interval $I=[s,e], 1 \leq s \leq e \leq T$. We start with an upper bound on $\E \left[ \sum_{t=s}^{e} \widehat{f}_{t,\delta'} \left(\Tilde{\y}_{t}\right) - \widehat{f}_{t,\delta'} (\x) \right]$ for every $\x \in \mK_{\delta,\delta'/r} = (1-\delta'/r)(1-\delta)\mK$. We will first take a few preliminary steps. For every $t \in \left[T\right]$, denote by  $\mathcal{F}_t = \{ \Tilde{\y}_1,  \dots, \Tilde{\y}_{t-1}, \g_1, \dots, \g_{t-1} \}$ the history of all predictions and gradient estimates up to time $t$. Since for all $t\geq 1$, $\g_t$ is an unbiased estimator of $\nabla {\widehat{f}}_{t,\delta'} \left(\tilde{\y}_{t}\right) $, i.e., $\E \left[ \g_t |\mathcal{F}_t\right] = \nabla {\widehat{f}}_{t,\delta'} \left(\tilde{\y}_{t}\right)$,  we have that for all $t \in \left[T\right]$ and $\x \in \mK_{\delta,\delta'/r}$, it holds that
\begin{align}
    \E \left[ \g_t^{\top} \left(\Tilde{\y}_{t} - \x\right) \right] & = \E \left[ \E \left[  \g_t | \mathcal{F}_{t} \right] ^{\top}   (\Tilde{\y}_{t} -   \x) \right] =  \E \left[\nabla \widehat{f}_{t,\delta'} (\Tilde{\y}_{t})^\top (\Tilde{\y}_{t} -   \x) \right]. \label{eq:bf_unbiased_gradient_esimator}
\end{align}
From Lemma \ref{lemma:CIP-SO} with $\delta' \neq 0$, we have that for every $t \in [T]$, the point $\Tilde{\y}_{t} \in \mK_{\delta'/r}$, and is an infeasible projection of $\y_{t}$ over $\mK_{\delta,\delta'/r}$. Thus, we have that
\begin{align*}
    \forall t\in[T]~\forall \x \in \mK_{\delta,\delta'/r} : ~~\Vert \Tilde{\y}_{t+1}  - \x \Vert^2  \leq \Vert  \y_{t+1} - \x \Vert^2.
\end{align*}
Since $\y_{t+1} = \Tilde{\y}_{t} -  \eta \g_t$, for every $t\in[T]$ and $\x \in \mK_{\delta,\delta'/r}$ we have that
\begin{align*}
    \Vert \Tilde{\y}_{t+1}  - \x \Vert^2  \leq  \left\Vert  \Tilde{\y}_{t} -  \eta  \g_t  -\x \right\Vert^2 = \left\Vert   \Tilde{\y}_{t} - \x \right\Vert^2  +   \eta^2 \left\Vert \g_t \right\Vert^2 - 2 \eta   \g_t^\top (\Tilde{\y}_{t} - \x).
\end{align*}
Rearranging, we obtain that for every $t\in[T]$ and  $\x \in \mK_{\delta,\delta'/r}$, it holds that 
\begin{align*}
    \g_t^\top  (\Tilde{\y}_{t} - \x)  \leq  \frac{ \left\Vert   \Tilde{\y}_{t} - \x \right\Vert^2}{2\eta} - \frac{ \Vert \Tilde{\y}_{t+1} - \x \Vert^2}{2\eta} + \frac{\eta}{2} \left\Vert  \g_t \right\Vert^2.
\end{align*}
Summing over the interval $[s,e]$ and taking expectation,  we have that
\begin{align*}
    \E \left[ \sum\limits_{t =s}^{e}  \g_t^\top \left(\Tilde{\y}_{t} - \x\right) \right] & \leq  \E \left[  \sum\limits_{t =s}^{e}  \frac{ \left\Vert   \Tilde{\y}_{t} - \x \right\Vert^2}{2\eta} - \frac{ \Vert \Tilde{\y}_{t+1} - \x \Vert^2}{2\eta} \right] + \frac{\eta  }{2}   \sum\limits_{t =s}^{e} \E \left[ \left\Vert  \g_t \right\Vert^2 \right].
\end{align*}
Since $\Tilde{\y}_t \in \mK_{\delta'/r} $ for every $t \in [T]$, then $\Vert \Tilde{\y}_t - \x \Vert \leq 2R$ for every $\x \in \mK_{\delta,\delta'/r}$, and thus,
\begin{align*}
    \E \left[ \sum\limits_{t =s}^{e}  \g_t^\top \left(\Tilde{\y}_{t} - \x\right) \right] \leq &  \frac{R}{\eta} + \frac{\eta  }{2}   \sum\limits_{t =s}^{e} \E \left[ \left\Vert  \g_t \right\Vert^2 \right].
\end{align*}
Using Eq. \eqref{eq:bf_unbiased_gradient_esimator}, for every $\x \in \mK_{\delta,\delta'/r}$ we have that,
\begin{align*}
    \sum_{t=s}^{e} \E \left[ \nabla \widehat{f}_{t,\delta'} \left(\Tilde{\y}_{t}\right)^\top \left(\Tilde{\y}_{t} - \x\right) \right]  = \sum_{t=s}^{e} \E \left[ \g_t^{\top}  \left(\Tilde{\y}_{t} - \x\right) \right] \leq \frac{R}{\eta} + \frac{\eta  }{2}   \sum\limits_{t =s}^{e} \E \left[ \left\Vert  \g_t \right\Vert^2 \right]. 
\end{align*}
Since for every $t \in [T] $, $ f_t (\cdot) $ is convex in $\mK$, using Lemma \ref{lemma:hazan_smooth}, it holds that  $\widehat{f}_{t,\delta'} (\cdot) $ is convex in $\mK_{\delta'/r}$. Thus,  for every $\x \in  \mK_{\delta,\delta'/r}$ we obtain that,
\begin{align}
    \E \left[ \sum_{t=s}^{e} \widehat{f}_{t,\delta'} \left(\Tilde{\y}_{t}\right) - \widehat{f}_{t,\delta'} (\x) \right]  \leq \frac{R}{\eta} + \frac{\eta  }{2}   \sum\limits_{t =s}^{e} \E \left[ \left\Vert  \g_t \right\Vert^2 \right]. \label{eq:bf_regret_infeasible_projection}
\end{align}

Let us denote by $\x_I^*$ a feasible minimizer w.r.t. to interval $I=[s,e]$, i.e., $\x_I^* \in \argmin_{\x \in \mK} \sum_{t=s}^{e} f_t(\x)$, and define accordingly $\Tilde{\x}_{I}^*= (1-\delta'/r)(1-\delta) \x_I^* \in  \mK_{\delta,\delta'/r} $. It holds that,
{\small \begin{align}
    \E \left[\sum_{t=s}^{e} f_t(\z_{t}) \right] -   \sum_{t=s}^{e} f_t(\x_I)   = & \E \left[ \sum_{t=s}^{e} f_t(\z_{t}) - f_t\left(\Tilde{\y}_{t}\right)  +  \sum_{t=s}^{e} f_t\left(\Tilde{\y}_{t}\right) - f_t(\Tilde{\x}_{I}^*) \right]   +  \sum_{t=s}^{e} f_t(\Tilde{\x}_{I}^*) - \sum_{t=s}^{e} f_t(\x_I^*). \label{eq:bf_full_regret_AOGD}
\end{align}}
Since for every $t \in [T]$ $f_t(\cdot)$ is $G_f$-Lipschitz, we have that
\begin{align*}
    \E \left[ \sum_{t=s}^{e} f_t(\z_{t}) - f_t(\Tilde{\y}_{t}) \right] & =  \sum_{t=s}^{e} \E \left[ f_t(\Tilde{\y}_{t} + \delta' \u_t) - f_t(\Tilde{\y}_{t}) \right] \leq G_f \delta' T,
\end{align*}
and
\begin{align*}
    \sum_{t=s}^{e} f_t(\Tilde{\x}_{I}^*) - f_t(\x_I^*) \leq \sum_{t=s}^{e} \nabla f_t(\Tilde{\x}_{I}^*)^\top \left( \Tilde{\x}_{I}^* - \x_{I}^* \right) & \leq \sum_{t=s}^{e} G_f  \Vert \Tilde{\x}_{I}^* - \x_I^* \Vert  \leq G_f R \left( \frac{\delta'}{r} + \delta + \frac{\delta\delta'}{r} \right) T.
\end{align*}
Using Lemma \ref{lemma:hazan_smooth} and Eq. \eqref{eq:bf_regret_infeasible_projection}, we have
\begin{align*}
    \E \left[ \sum_{t=s}^{e} f_t\left(\Tilde{\y}_{t}\right) - f_t(\Tilde{\x}_{I}^*) \right] = & \E \left[ \sum_{t=s}^{e} f_t\left(\Tilde{\y}_{t}\right) - \widehat{f}_{t,\delta'}\left(\Tilde{\y}_{t}\right) \right]  + \E \left[ \sum_{t=s}^{e} \widehat{f}_{t,\delta'}(\Tilde{\x}_{I}^*) - f_t(\Tilde{\x}_{I}^*) \right]  \\
    &  + \E \left[ \sum_{t=s}^{e} \widehat{f}_{t,\delta'} \left(\Tilde{\y}_{t}\right) - \widehat{f}_{t,\delta'} (\Tilde{\x}_{I}^*) \right] \leq 2 \delta' G_f T + \frac{R}{\eta} + \frac{\eta  }{2}   \sum\limits_{t =s}^{e} \E \left[ \left\Vert  \g_t \right\Vert^2 \right].
\end{align*}
Combining the last three equations and Eq. \eqref{eq:bf_full_regret_AOGD}, and using the fact that $\left\Vert \g_t \right\Vert \leq \frac{nM}{\delta'}$, we obtain that
\begin{align*}
    \E \left[\sum_{t=s}^{e} f_t(\z_{t})  -  f_t(\x_I^*) \right] \leq G_f \left(3\delta' + R \left( \frac{\delta'}{r} + \delta + \frac{\delta\delta'}{r} \right) \right) T + \frac{R}{\eta} + \frac{n^2M^2}{2} \frac{\eta}{{\delta'}^2} T.
\end{align*}
Plugging-in the values of $\eta,\delta,\delta'$ listed in the theorem, we obtain the adaptive expected regret bound in the theorem.

We now move on to upper-bound the overall number of calls to the SO of $\mK$. For every $t\in[T]$, let us denote $\x_{t}^* = \argmin_{\x \in \mK_{\delta,\delta'/r}} \Vert \x - \Tilde{\y}_{t} \Vert$. Since Algorithm \ref{alg:BGD-SGO} updates $\y_{t+1} = \Tilde{\y}_{t} - \eta \g_t$, we have
\begin{align*}
    \dist \left( \y_{t+1}, \mK_{\delta,\delta'/r} \right) \leq \Vert \Tilde{\x}_{t}^* - \y_{t+1} \Vert & \leq \Vert \Tilde{\x}_{t}^* - \Tilde{\y}_{t} \Vert + \Vert \Tilde{\y}_{t} - \y_{t+1} \Vert = \dist \left( \Tilde{\y}_{t}, \mK_{\delta,\delta'/r} \right) + \Vert \eta \g_t \Vert,
\end{align*}
which, by plugging-in the upper-bound on $\Vert{\g_t}\Vert$, gives
\begin{align}
    \dist^2 \left( \y_{t+1}, \mK_{\delta,\delta'/r} \right) \leq & ~ \dist^2 \left( \Tilde{\y}_{t}, \mK_{\delta,\delta'/r} \right) + 2 \dist \left( \Tilde{\y}_{t}, \mK_{\delta,\delta'/r} \right) \eta \frac{nM}{\delta'} + \eta^2 \frac{(nM)^2}{{\delta'}^2}. \label{eq:initial_gap_OGD-SGO_b}
\end{align}
For any $t\in[T]$, using Lemma \ref{lemma:CIP-SO} with initial point $\y_{t+1}$, feasible set $\mK$, radius $r$, squeeze parameters $(\delta, \delta')$, and the returned point $\Tilde{\y}_{t+1}$, we have that Algorithm \ref{alg:CIP-SO} makes at most
{\small\begin{align*}
    \frac{\dist^2 \left( \y_{t+1}, \mK_{\delta,\delta'/r} \right) - \dist^2 \left( \Tilde{\y}_{t+1}, \mK_{\delta,\delta'/r} \right)}{ \delta^2 \left( r - \delta'  \right)^2}+1
\end{align*}}
iterations. Since  $\tilde{\y}_t \in \mK_{\delta'/r} \subseteq \mK$ and $(1-\delta) \mK_{\delta'/r} = \mK_{\delta,\delta'/r}$, using Observation \ref{obs:feasibility_to_dist} it holds that $\dist \left( \Tilde{\y}_{t}, \mK_{\delta,\delta'/r} \right) \leq R \delta$. Thus, using this observation and Eq.\eqref{eq:initial_gap_OGD-SGO_b}, the overall number of calls to the SO of $\mK$ that Algorithm \ref{alg:CIP-SO} makes is
{\small \begin{align*}
    N_{calls} & \leq T + \frac{1 }{ \delta^2 \left( r - \delta' \right)^2} \sum_{t=1}^{T}  \left( \dist^2 \left( \Tilde{\y}_{t}, \mK_{\delta,\delta'/r} \right) + 2 R \delta \eta \frac{nM}{\delta'} + \eta^2 \frac{(nM)^2}{{\delta'}^2} - \dist^2 \left( \Tilde{\y}_{t+1}, \mK_{\delta,\delta'/r} \right)\right) \\
    & \leq  \left( 1+ \frac{8  R  nM  }{  r^2} \frac{\eta}{\delta \delta'} + \frac{4(nM)^2 }{  r^2}  \frac{\eta^2}{\delta^2 {\delta'}^2}\right) T,
\end{align*}}
where last inequality follows since $\dist^2 \left( \Tilde{\y}_{1}, \mK_{\delta,\delta'/r} \right) = 0$, and $\delta' \leq r/2$.
\end{proof}

\bibliographystyle{plain} 
\bibliography{bib}

\appendix

\section{Proof of Lemma \ref{lemma:OGD-WF}}
\begin{proof}
Fix some iteration $t$ of Algorithm \ref{alg:OGD-WF}. Since $\Tilde{\y}_{t+1}$ is an infeasible projection of  $\y_{t+1}$,  and $\y_{t+1} = \Tilde{\y}_t -  \eta_t  \nabla_t$, we have that
\begin{align*}
  \forall\x\in\mK: ~  \Vert \Tilde{\y}_{t+1}  - \x \Vert^2  & \leq \Vert  \y_{t+1} - \x \Vert^2 = \left\Vert  \Tilde{\y}_t -  \eta_t \nabla_t -\x \right\Vert^2 \\
  & \leq \left\Vert   \Tilde{\y}_t - \x \right\Vert^2  +  \eta_t^2 \Vert \nabla_t \Vert^2 - 2 \eta_t \nabla_t^\top (\Tilde{\y}_{t} - \x).
\end{align*}
Rearranging, then we have 
\begin{align*}
   \forall\x\in\mK: ~  \nabla_t^\top & (\Tilde{\y}_{t} - \x)  \leq  \frac{ \left\Vert   \Tilde{\y}_t - \x \right\Vert^2}{2\eta_t} - \frac{ \Vert \Tilde{\y}_{t+1} - \x \Vert^2}{2\eta_t} + \frac{\eta_t  \Vert \nabla_t \Vert^2 }{2}.
\end{align*}
Fix some positive integers $1\leq s \leq e \leq T$. Summing over the interval $[s,e]$, we have that
\begin{align}
    \forall\x\in\mK: ~ \sum\limits_{t =s}^{e} \nabla_t^\top ( \Tilde{\y}_t - \x) & \leq \frac{ \left\Vert   \Tilde{\y}_s - \x \right\Vert^2}{2\eta_s} + \sum\limits_{t = s+1 }^{e}  \left( \frac{ 1}{2\eta_t} - \frac{ 1}{2\eta_{t-1}}\right)  \left\Vert   \Tilde{\y}_t - \x \right\Vert^2  + \sum\limits_{t =s}^{e}  \frac{\eta_t  }{2} \Vert \nabla_t \Vert^2 . \label{eq:ogd_wf_gradients_regret}
\end{align}
Using the convexity of each $f_t(\cdot)$ and plugging-in $\eta_t = \eta$ for all $t\geq 1$, we have that
\begin{align*}
  \forall\x\in\mK: ~   \sum\limits_{t =s}^{e}  f_t(\Tilde{\y}_{t}) -  f_t(\x) \leq & \frac{ \left\Vert   \Tilde{\y}_s - \x \right\Vert^2 }{2\eta} + \frac{\eta  }{2}  \sum\limits_{t =s}^{e}  \Vert \nabla_t \Vert^2,
\end{align*}
which yields the first guarantee of the lemma.

In case all loss function $f_t(\cdot), 1\leq t\leq T$, are $\alpha$-strongly convex, using the inequality $f_t(\y) - f_t(\x) \leq \nabla f_t (\y)^\top (\y - \x) - \frac{\alpha}{2} \Vert \y - \x \Vert^2, (\x,\y)\in\reals^n\times\reals^n$, and setting $(s,e) = (1,T)$ in Eq.\eqref{eq:ogd_wf_gradients_regret}, for every $\x \in \mK$ we have that,
\begin{align*}
    \sum\limits_{t =1}^{T}  f_t(\Tilde{\y}_{t}) -  f_t(\x) \leq & \sum\limits_{t =1}^{T}  \frac{\eta_t \Vert \nabla_t \Vert^2 }{2}  + \left( \frac{ 1 }{2\eta_1} - \frac{\alpha}{2} \right) \left\Vert   \Tilde{\y}_1 - \x \right\Vert^2 \\
& + \sum\limits_{t = 2 }^{T}  \left( \frac{ 1}{2\eta_t} - \frac{ 1}{2\eta_{t-1}} - \frac{\alpha}{2}\right)  \left\Vert   \Tilde{\y}_t - \x \right\Vert^2. 
\end{align*}
Plugging in $\eta_t = \frac{1}{\alpha{}t}$, we obtain the second guarantee of the lemma. 
\end{proof}

\section{ Proofs of Additional  Observations}

\begin{proof} [Proof of Observation \ref{obs:radius_ball_in_k_delta}]
First we prove that $(r-\delta')\ball\subseteq\mK_{\delta'/r} = (1-\delta'/r)\mK$.
Fix some $\u \in (r-\delta')\ball$, i.e., $\Vert{\u}\Vert\leq r-\delta'$. Since $r\ball \subseteq \mK$, it holds that $\u \frac{1}{\left( 1 - \delta'/r \right)} = \u \frac{r}{r-\delta'} \in r\ball \subseteq \mK$. This in turn implies that $\u=(1-\delta'/r)\u\frac{1}{1-\delta'/r}\in(1-\delta'/r)\mK=\mK_{\delta'/r}$.

Now, we recall that if a convex set $\mP\subset\reals^n$ satisfies that $p\ball\subseteq\mP$, for some $p>0$, then for any $\gamma\in[0,p]$ and any $\z\in(1-\gamma/p)\mP$, it holds that $\z+\gamma\ball\subseteq\mP$ (see for instance the chapter on bandit algorithms in \cite{HazanBook}. Applying this with $\mP = \mK_{\delta'/r}$, $p=(r-\delta')$, and $\gamma = \delta(r-\delta')$, we have that for any $\z\in(1- \delta(r-\delta')/(r-\delta'))\mK_{\delta'/r} = (1-\delta)\mK_{\delta'/r}=\mK_{\delta,\delta'/r}$, it holds that $\z + \delta(r-\delta')\ball\subseteq\mK_{\delta'/r}$, as needed. 
\end{proof}

\begin{proof} [Proof of Observation \ref{obs:feasibility_to_dist}]
Denote $\x^* = \argmin\limits_{\x \in \mK_\delta} \Vert \x - \y \Vert^2$ and $\y_{\delta} = (1-\delta) \y$. Since $\y \in \mK$ and $\y_{\delta} \in \mK_{\delta}$ it holds that 
\begin{align*}
    \dist(\y,\mK_{\delta}) = \Vert  \x^* - \y \Vert \leq \Vert  \y_{\delta} - \y \Vert  = \Vert  \delta \y \Vert  \leq \delta R.
\end{align*}
\end{proof}

\end{document}